\newcommand{\bftab}{\fontseries{b}\selectfont}
\definecolor{cost}{HTML}{B03A2E}
\definecolor{reward}{HTML}{000000}
\newcommand{\cost}[1]{\textcolor{cost}{#1}}
\newcommand{\reward}[1]{\textcolor{reward}{#1}}
\newtheorem{theorem}{Theorem}
\newtheorem{lemma}{Lemma}
\newtheorem{definition}{Definition}
\newtheorem{remark}{Remark}
\newtheorem{assumption}{Assumption}
\newtheorem*{assumption*}{Assumption}
\declaretheoremstyle[
notefont=\normalfont\itshape,
notebraces={}{},
headfont=\normalfont\itshape,
bodyfont=\normalfont,
headformat=\NAME\NOTE
]{nopar}
\declaretheorem[name={Proof of}, style=nopar]{refproof}
\DeclareMathOperator*{\argmax}{argmax}
\DeclareMathOperator*{\argmin}{argmin}
\renewcommand{\thesection}{\arabic{section}}
\newcommand{\kibitz}[2]{\ifnum\Comments=1\textcolor{#1}{#2}\fi}
\newcounter{parentnumber}
\title{A Primal-Dual-Critic Algorithm for\\Offline Constrained Reinforcement Learning}
\author{
  Kihyuk Hong \\
  University of Michigan \\
  \texttt{kihyukh@umich.edu} \\
  \And
  Yuhang Li \\
  University of Michigan \\
  \texttt{liyuhang@umich.edu} \\
  \And
  Ambuj Tewari \\
  University of Michigan \\
  \texttt{tewaria@umich.edu} \\
}
\date{}
\begin{document}

\maketitle

\begin{abstract}
Offline constrained reinforcement learning (RL) aims to learn a policy that maximizes the expected cumulative reward subject to constraints on expected cumulative cost using an existing dataset.
In this paper, we propose Primal-Dual-Critic Algorithm (PDCA), a novel algorithm for offline constrained RL with general function approximation.
PDCA runs a primal-dual algorithm on the Lagrangian function estimated by critics.
The primal player employs a no-regret policy optimization oracle to maximize the Lagrangian estimate and the dual player acts greedily to minimize the Lagrangian estimate.
We show that PDCA can successfully find a near saddle point of the Lagrangian, which is nearly optimal for the constrained RL problem.
Unlike previous work that requires concentrability and a strong Bellman completeness assumption, PDCA only requires concentrability and realizability assumptions for sample-efficient learning.
\end{abstract}

\section{INTRODUCTION}

Offline constrained reinforcement learning (RL) aims to learn a decision making policy that performs well while satisfying safety constraints given a dataset of trajectories collected from historical experiments.
It enjoys the benefits of offline RL \parencite{levine_offline_2020}:
not requiring interaction with the environment enables real-world applications where collecting interaction data is expensive (e.g., robotics \parencite{kumar_workflow_2021,levine_learning_2018}) or dangerous (e.g., healthcare \parencite{tang_model_2021}).
It also enjoys the benefits of constrained RL \parencite{altman_constrained_1999}: being able to specify constraints to the behavior of the agent enables real-world applications with safety concerns (e.g., smart grid \parencite{wang_safe_2020}, robotics \parencite{gu_deep_2017}).

Offline constrained RL with function approximation (e.g., neural networks) is of particular interest because function approximation can encode inductive biases to allow sample-efficient learning in large state spaces.
As is the case for offline unconstrained RL \parencite{ozdaglar2023revisiting,xie2021bellman}, offline constrained RL with function approximation requires two classes of assumptions for sample-efficient learning.

The first class of assumptions, called \emph{representational assumptions}, requires the learner to have access to a sufficiently rich value function class that models action value functions of policies.
The mildest representational assumption is the realizability assumption that requires the action value functions of candidate policies to be captured by the function class.
A stronger representational assumption is the Bellman completeness assumption that requires the function class to be closed under the Bellman operator.

The second class of assumptions, called \emph{data coverage assumptions}, requires the offline dataset to be rich enough to cover the state-action distributions induced by target policies.
The assumptions address a major challenge in offline RL called \emph{distribution} shift, which refers to the mismatch of the state-action distributions induced by candidate policies and the distribution in the offline dataset.
The most commonly used data coverage assumption is concentrability \parencite{munos_error_2003,munos_error_2005}, which limits the norm of the ratio of state-action distribution induced by candidate policies to that induced by the behavior policy that generated the offline dataset.

Previous works on offline RL with function approximation require either a strong assumption on data coverage \parencite{xie_batch_2021} (stronger than concentrability) or a strong representational assumption \parencite{munos_finite-time_2008,antos_learning_2008,xie2021bellman,cheng_adversarially_2022} (stronger than realizability).
\textcite{chen_information-theoretic_2019} conjectured that concentrability and realizability of value functions are not sufficient for sample-efficient offline RL.
\textcite{foster_offline_2022} confirmed this by providing an information-theoretic lower bound which shows that concentrability and value function realizability are not sufficient for sample efficient offline RL.

Recently, a line of research on offline unconstrained RL emerged that only requires concentrability and realizability assumptions \parencite{xie_q_2020,zhan_offline_2022,zhu_importance_2023}.
In particular, they do not require Bellman completeness assumption, which is a strong representational assumption~\parencite{zhan_offline_2022,zanette_bellman_2022}.
They do not contradict the impossibility result by \textcite{foster_offline_2022} because they make an additional realizability assumption on the \textit{marginalized importance weights} (MIW; ratio of state-action distribution induced by policy to data distribution).
Motivated by their work, we propose a sample-efficient algorithm for offline constrained RL with function approximation that requires concentrability, value function realizability and MIW realizability assumptions.
We make the following contributions.
\begin{itemize}[leftmargin=*]
\item We show a sample complexity bound that scales with a concentrability measure, $1/\epsilon^2$ and a dimensionality measure of function classes, for finding a nearly optimal policy with suboptimality $\epsilon$ that approximately satisfies the cost constraints under the assumptions of value function realizability, concentrability, and MIW realizability of an optimal policy.
We do not require Bellman completeness, a strong representational assumption, required by previous work.
\item Our algorithm takes as an input a target cost threshold. By using a target cost threshold stricter than the desired threshold, the algorithm can produce a nearly optimal policy that \textit{exactly} satisfies the desired constraints with the same sample complexity.
\item We study the case where the function class for MIW is misspecified and does not realize the MIW of an optimal policy.
In this case, our algorithm can still find a policy at least as good as any policy of which MIW is realized by the function class but the sample complexity bound is suboptimal and scales with $1 / \epsilon^4$.
\item Benchmark experiments show that the empirical performance of our algorithm generally matches or outperforms the state-of-the-art practical algorithms COptiDICE and CPQ that produce Markovian policies.
\end{itemize}

\subsection{Related Work}

\paragraph{Offline RL without Completeness Assumption}

There is a recent line of works on offline \emph{unconstrained} RL that removes the Bellman completeness assumption by assuming MIW realizability.
\textcite{xie_q_2020} propose a Q-value based algorithm called MABO that learns the optimal Q-value function by solving a minimax optimization problem.
They require all-policy realizability of value functions, all-policy concentrability and all-policy marginalized importance weight realizability.
\textcite{zhan_offline_2022} propose a linear programming based algorithm called PRO-RL that regularizes the objective function to discourage distribution shift.
They only require single-policy realizability of both value function and marginalized importance weight, and only require single-policy concentrability.
However, their sample complexity is suboptimal ($\sim 1 / \epsilon^6$).
\textcite{zhu_importance_2023} propose an actor-critic based algorithm called A-Crab.
They require all-policy value function realizability, single-policy concentrability and single-policy marginalized importance weight realizability.

\paragraph{Offline Constrained RL}

The only work on provably sample efficient offline constrained RL with function approximation, to the best of our knowledge, is by \textcite{le_batch_2019} who propose a provably sample-efficient primal-dual algorithm that uses the fitted-Q iteration algorithm as a subroutine for updating the primal variable and a no-regret online algorithm for updating the dual variable.
Their analysis requires all-policy concentrability and Bellman completeness assumptions.
Our work improves over \textcite{le_batch_2019} by weakening the Bellman completeness assumption.

\paragraph{Practical Algorithms for Offline Constrained RL}

There are recent works on practical algorithms for offline constrained RL \emph{without} provable guarantees.
\textcite{lee_coptidice_2022} propose an algorithm called COptiDICE, which is motivated by the linear programming approach for solving RL.
\textcite{liu2023constrained} propose CDT, an adaptation of the decision transformer framework for offline RL \parencite{chen2021decision} to the offline constrained RL setting.
\textcite{xu2022constraints} propose CPQ, a Q-learning based algorithm that penalizes out of distribution actions.
\textcite{liu2023datasets} provide datasets and benchmarks of aforementioned algorithms.

We compare our theoretical guarantees with previous works in Table~\ref{table:comparison}.
The first three rows are works on offline \emph{unconstrained} RL with function approximation that do not assume Bellman completeness.
The remaining rows are works on offline \emph{constrained} RL with function approximation.
The column $N$ shows how the sample complexity bound scales with the error tolerance $\epsilon$.
$Q^\pi$ is the value function for the policy $\pi$; $w^\pi$ is the marginalized importance weight of the policy $\pi$; $\text{sp}$ is the span function; $\pi^\star$ is the optimal policy.
$\mathcal{T}^\pi$ is the Bellman operator and $\forall \pi, \,\mathcal{T}^\pi f \in \mathcal{F}$ means Bellman completeness.
The notations used in the table are formally defined in the next section.

\begin{table*}[t]
\caption{Comparison of algorithms for offline (constrained) RL with function approximation}
\label{table:comparison}
\centering
\begin{tabular}{cccccc}
 \toprule
 \multirow{2}{*}{Algorithm} & \multirow{2}{*}{\makecell{Supports \\ constraints}} & \multicolumn{3}{c}{Assumptions} & \multirow{2}{*}{$N$} \\
  \cline{3-5}
  &  & Representation & Data coverage & MIW & \\
 \midrule
 MABO \parencite{xie_q_2020} & No & $\forall \pi, Q^\pi \in \mathcal{F}$ & $\forall \pi, \Vert w^\pi \Vert_{2, \mu} \leq C_{\ell_2}$ & $\forall \pi, w^\pi \in \text{sp}(\mathcal{W})$ &  $1 / \epsilon^2$ \\
 PRO-RL \parencite{zhan_offline_2022} & No & $Q^{\pi^\star} \in \mathcal{F}$ & $\Vert w^{\pi^\star} \Vert_\infty \leq C_\infty^\star$ & $w^{\pi^\star} \in \mathcal{W}$ &  $1 / \epsilon^6$ \\
 A-Crab \parencite{zhu_importance_2023} & No & $\forall \pi, Q^\pi \in \mathcal{F}$ & $\Vert w^{\pi^\star} \Vert_{2, \mu} \leq C_{\ell_2}^\star$ & $w^{\pi^\star} \in \mathcal{W}$ &  $1 / \epsilon^2$ \\
 \midrule
 MBCL \parencite{le_batch_2019} & Yes & $\forall \pi, f; \mathcal{T}^\pi f \in \mathcal{F}$ & $\forall \pi, \Vert w^\pi \Vert_\infty \leq C_\infty$ &  &  $1 / \epsilon^2$ \\
 PDCA (Ours) & Yes & $\forall \pi, Q^\pi \in \mathcal{F}$ & $\forall \pi, \Vert w^\pi \Vert_{2, \mu} \leq C_{\ell_2}$ & $w^{\pi^\star} \in \mathcal{W}$ &  $1 / \epsilon^2$ \\
 \bottomrule
\end{tabular}
\end{table*}

Compared to the work by \textcite{le_batch_2019}, we relax the Bellman completeness assumption at the expense of introducing a MIW realizability of an optimal policy.
The MIW realizability is a mild assumption since function class $\mathcal{W}$ only needs to include the MIW of an \emph{optimal} policy.
Moreover, we show in Theorem~\ref{thm:main-2} that even when $\mathcal{W}$ does not realize the MIW of an optimal policy, our algorithm can find a policy that is at least as good as any policy whose MIW is realizable by $\mathcal{W}$.
This result allows robustness against misspecification of $\mathcal{W}$.

\section{PRELIMINARIES \& NOTATIONS}

\paragraph{Notation}

We denote by $\Delta(\mathcal{X})$ the probability simplex over a finite set $\mathcal{X}$.
We denote by $\mathbb{R}_+$ the set of nonnegative real numbers.
We write $\bm\Delta^I = \{ \bm{x} \in \mathbb{R}_+^I : \sum_{i = 1}^I x_i \leq 1 \}$.
We denote by $\text{Unif}(\mathcal{X})$ the uniform distribution over $\mathcal{X}$.
We write $[N] = \{1, \dots, N\}$ for a natural number $N$.
We write $\bm{1} = (1, \dots, 1)$ and $\bm{0} = (0, \dots, 0)$.
We write $(\cdot)_+ = \max \{0, \cdot\}$.

\subsection{Constrained Markov Decision Process}

We formulate offline constrained RL using an infinite-horizon discounted constrained Markov decision process (CMDP)\parencite{altman_constrained_1999} defined by a tuple $\mathcal{M} = \left( \mathcal{S}, \mathcal{A}, P, R, \{C_i\}_{i = 1}^I, \gamma, s_0 \right)$,
where $\mathcal{S}$ is the state space,
$\mathcal{A}$ is the action space,
$P : \mathcal{S} \times \mathcal{A} \rightarrow \Delta(\mathcal{S})$ is the transition probability kernel,
$R : \mathcal{S} \times \mathcal{A} \rightarrow [0, 1]$ is the reward function,
$C_i : \mathcal{S} \times \mathcal{A} \rightarrow [0, 1], i = 1, \dots, I$ are the cost functions,
$\gamma \in (0, 1)$ is the discount factor,
and $s_0 \in \mathcal{S}$ is the initial state.
We assume $R$ and $C_i$, $i = 1, \dots, I$ are known to the learner while $P$ is unknown.

A stationary policy $\pi : \mathcal{S} \rightarrow \Delta(\mathcal{A})$ maps each state to a probability distribution over the action space.
Each policy $\pi$, together with the transition probability kernel $P$, induces a discounted occupancy measure $d^\pi : \mathcal{S} \times \mathcal{A} \rightarrow [0, 1]$ defined as $d^\pi(s, a) \coloneqq (1 - \gamma) \sum_{t = 0}^\infty \gamma^t P^\pi(s_t = s, a_t = a)$ where $P^\pi$ is the probability measure on the trajectory $(s_0, a_0, s_1, a_1, \dots)$ induced by the interaction of $\pi$ and $P$.
The value of a policy $\pi$ for a function $U : \mathcal{S} \times \mathcal{A} \rightarrow [0, 1]$ is the expected discounted cumulative values when executing $\pi$.
It is denoted by $J_U(\pi) \coloneqq \mathbb{E}^\pi \left[ \sum_{t = 0}^\infty \gamma^t U(s_t, a_t) \right]$ where $\mathbb{E}^\pi$ is the expectation over the randomness of the trajectory $(s_0, a_0, s_1, a_1, \dots)$ induced by $\pi$ and $P$.
Note that $J_U(\pi) \in [0, \frac{1}{1 - \gamma}]$ and $(1 - \gamma) J_U(\pi) = \mathbb{E}_\pi[U(s, a)]$ where we use the shorthand $\mathbb{E}_\pi[ \cdot ]$ for $\mathbb{E}_{(s, a) \sim d^\pi}[ \cdot ]$.
The Q-value function of a policy $\pi$ for a function $U: \mathcal{S} \times \mathcal{A} \rightarrow [0, 1]$ is denoted by $Q_U^\pi(s, a) \coloneqq \mathbb{E}^\pi \left[ \sum_{t = 0}^\infty \gamma^t U(s_t, a_t) ~|~ s_0 = s, a_0 = a \right]$.

\subsection{Function Approximation}

We assume access to a policy class $\Pi \subseteq ( \pi: \mathcal{S} \rightarrow \Delta(\mathcal{A}))$ consisting of candidate policies.
We assume access to a function class $\mathcal{F} \subseteq (\mathcal{S} \times \mathcal{A} \rightarrow [0, \frac{1}{1 - \gamma}] )$ that models the Q-value functions for the reward $R$ and the costs $C_1, \dots, C_I$.
We make the following realizability assumption on $\mathcal{F}$.

\begin{assumption}[Value function realizability] \label{assumption:value-realizability}
For any policy $\pi \in \Pi$, we have $Q^\pi_R \in \mathcal{F}$ and $Q^\pi_{C_i} \in \mathcal{F}$ for all $i = 1, \dots, I$.
\end{assumption}

Unlike \textcite{le_batch_2019}, we do not assume Bellman completeness that requires $\mathcal{T}_U^\pi f \in \mathcal{F}$ for all $\pi \in \Pi$ and $f \in \mathcal{F}$ where $\mathcal{T}_U^\pi$ is the Bellman operator defined by $(\mathcal{T}_U^\pi f)(s, a) = U(s, a) + \gamma \mathbb{E}_{s' \sim P(\cdot|s, a)}[f(s', \pi)]$.
As \textcite{zhan_offline_2022} and \textcite{zanette_when_2022} discuss, it is a strong assumption hard to meet and has an unnatural non-monotone property: adding a function to the function class may make the function class violate Bellman completeness.

\subsection{Offline Constrained RL}

Offline constrained RL aims to find a policy $\pi : \mathcal{S} \rightarrow \Delta(\mathcal{A})$ among a given policy class $\Pi$ that maximizes $J_R(\pi)$ while satisfying the constraints $J_{C_i}(\pi) \leq \tau_i$ for all $i = 1, \dots, I$, where the thresholds $\tau_i \in [0, \frac{1}{1 - \gamma}]$ are given.
Offline constrained RL can be written as an optimization problem $\mathcal{P}(\bm\tau)$ defined as follows.

\begin{definition}[Optimization problem] \label{def:opt}
Given cost thresholds $\bm\tau = (\tau_1, \dots, \tau_I)$, we denote by $\mathcal{P}(\bm\tau)$ the following optimization problem.
\begin{equation} \label{eqn:opt} \tag{OPT}
\begin{aligned}
\max_{\pi \in \mathrm{Conv}(\Pi)}&~~J_R(\pi) \\
\mathrm{subject\ to}&~~J_{C_i}(\pi) \leq \tau_i, ~~i = 1, \dots I.
\end{aligned}
\end{equation}
\end{definition}

As done in \textcite{le_batch_2019}, instead of optimizing over the policy class $\Pi$, we optimize over its convex hull denoted by $\text{Conv}(\Pi)$.
The convex hull $\text{Conv}(\Pi)$ contains all policy mixtures of the form $\sum_{j = 1}^m \beta_j \pi_j$ where $\pi_1, \dots, \pi_m \in \Pi$, $\beta_j \geq 0$ for $j = 1, \dots, m$ and $\sum_{j = 1}^m \beta_j = 1$.
A policy mixture $\sum_{j = 1}^m \beta_j \pi_j$ is executed by sampling a single policy from $\pi_1, \dots, \pi_m$ according to the distribution $(\beta_1, \dots, \beta_m)$, and then executing the sampled policy for the entire trajectory.
Viewing the problem in the occupancy measure space, (\ref{eqn:opt}) can be seen as
\begin{equation} \label{eqn:opt-occupancy}
\begin{aligned}
\max_{\nu \in \text{Conv}(\mathcal{V})} &~~\langle \nu, R \rangle \\
\text{subject to} &~~\langle \nu, C_i \rangle \leq \tau_i, ~~i = 1, \dots, I
\end{aligned}
\end{equation}
where $\mathcal{V} = \{ d^\pi : \pi \in \Pi \}$ is the set of occupancy measures of policies in $\Pi$ and $d^\pi$, $R$, $C_i$, $i = 1, \dots I$ are viewed as a vector in $\mathbb{R}^{\vert \mathcal{S} \vert \vert \mathcal{A} \vert}$.
Since we define a mixture of policies in the trajectory level, the set of occupancy measures of policies in $\text{Conv}(\Pi)$ is just the convex hull of $\mathcal{V}$.
Since the above is an optimization problem in the space of $\mathbb{R}^{\vert S \vert \vert A \vert}$, strong duality holds if we assume the following Slater's condition.

\begin{assumption}[Slater's condition] \label{assumption:slater}
There exist a constant $\varphi > 0$ and a policy $\pi \in \Pi$ such that $J_{C_i}(\pi) \leq \tau_i - \frac{\varphi}{1 - \gamma}$ for all $i = 1, \dots, I$.
Assume $\varphi$ is known.
\end{assumption}

Slater's condition is a mild assumption commonly made for constrained RL \parencite{le_batch_2019,chen_primal-dual_2021,bai_achieving_2022,ding_natural_2020} for ensuring strong duality of the optimization problem.
It is mild because given the knowledge of the feasibility of the problem, we can guarantee that Slater's condition is met by slightly loosening the cost threshold.

\subsection{Offline Dataset}

In offline constrained RL, we assume access to an offline dataset $\mathcal{D} = \{(s_j, a_j, s_j') \}_{j = 1}^n$ where $(s_j, a_j)$ are generated i.i.d. from a data distribution $d^\mu$ induced by a behavior policy $\mu$ and $s_j' \sim P(\cdot~|~s_j, a_j)$.
Such an i.i.d. assumption on the offline dataset is commonly made in the offline RL literature \parencite{xie2021bellman,zhan_offline_2022,chen_offline_2022,zhu_importance_2023} to facilitate analysis of concentration bounds.
We assume the policy class $\Pi$ contains $\mu$.
We assume that the threshold $\bm\tau$ is chosen such that the optimization problem $\mathcal{P}(\bm\tau)$ is feasible.
However, we do not require the behavior policy $\mu$ to be feasible for $\mathcal{P}(\bm\tau)$.
To limit the distribution shift of policies from the data distribution, we make the following concentrability assumption,
where we use the notation $\Vert \cdot \Vert_{2, \mu} = \sqrt{\mathbb{E}_\mu[(\cdot)^2]}$.

\begin{assumption}[Concentrability] \label{assumption:concentrability}
For all $\pi \in \Pi$, we have $\Vert d^\pi / d^\mu \Vert_{2, \mu} \leq C_{\ell_2}$.
\end{assumption}

The concentrability assumption limits distribution shift of candidate policies from the data distribution. Specifically, the occupancy measure induced by a policy in $\Pi$ is covered by the data distribution $d^\mu$.
This assumption is weaker than the assumption made by \textcite{le_batch_2019}, who assume that the $\ell_\infty$ norm of the distribution shift of following any nonstationary policy that uses a policy in $\Pi$ every time step is bounded.

\subsection{Marginalized Importance Weight}

The notion of marginalized importance weight (MIW) is used extensively in the offline RL literature \parencite{xie_q_2020,chen_offline_2022,zhan_offline_2022,zhu_importance_2023,lee_coptidice_2022,lee_optidice_2021} to correct for the distribution mismatch between a policy $\pi$ and the behavior policy $\mu$.
It is defined as follows.
\begin{definition}[Marginalized importance weight]
For a policy $\pi$, we define the marginalized importance weight $w^\pi : \mathcal{S} \times \mathcal{A} \rightarrow \mathbb{R}^+$ as $w^\pi(s, a) \coloneqq \frac{d^\pi(s, a)}{d^\mu(s, a)}$.
\end{definition}
Immediately from the definition of MIW, we get the identity $\mathbb{E}_\pi[(\cdot)] = \mathbb{E}_\mu[w^\pi(s, a) (\cdot)]$, which we frequently use in the analysis.
We assume access to a function class $\mathcal{W}$ consisting of functions $w: \mathcal{S} \times \mathcal{A} \rightarrow \mathbb{R}_+$ that represents MIW with respect to the offline data distribution $d^\mu$.
We assume the following boundedness assumption on $\mathcal{W}$.

\begin{assumption}[Boundedness of $\mathcal{W}$] \label{assumption:w-boundedness}
Assume $\Vert w \Vert_\infty \leq C_\infty$ and $\Vert w \Vert_{2, \mu} \leq C_{\ell_2}$ for all $w \in \mathcal{W}$.
\end{assumption}

Denote by $\pi^\star$ an optimal policy of the optimization problem (\ref{eqn:opt}).
We assume that the MIW of $\pi^\star$ is realized by $\mathcal{W}$.

\begin{assumption}[Realizability of MIW] \label{assumption:miw-realizability}
Assume that $w^{\pi^\star} \in \mathcal{W}$ for an optimal policy $\pi^\star$.
\end{assumption}

The single-policy realizability of MIW that requires MIW of an optimal policy to be realizable by $\mathcal{W}$ is a weaker assumption than the all-policy realizability of MIW assumption required by \textcite{xie_q_2020}.
Compared to the set of assumptions made by \textcite{le_batch_2019}, we replace the strong Bellman completeness assumption with a single-policy realizability of MIW.

\section{ALGORITHM \& MAIN RESULTS}

In this section, we present our algorithm called Primal-Dual-Critic Algorithm (PDCA) and then present our main results on the the sample complexity bound.

\subsection{Primal-Dual Algorithm Structure}

The Lagrangian of the optimization problem $\mathcal{P}(\bm\tau)$ (Definition~\ref{def:opt}) is $L(\pi, \bm\lambda) = J_R(\pi) + \bm\lambda \cdot (\bm\tau - J_{\bm{C}}(\pi))$ where we use the notation $J_{\bm{C}}(\pi) = (J_{C_1}(\pi), \dots, J_{C_I}(\pi))$.
Our algorithm, like MBCL algorithm for offline constrained RL proposed by \textcite{le_batch_2019}, adopts the primal-dual algorithm structure that updates $\pi$ and $\bm\lambda$ alternatively.
The primal-dual algorithm structure can be seen as a sequential game of length $K$ between the $\pi$-player who tries to maximize $L(\cdot, \bm\lambda)$ and the $\lambda$-player who tries to minimize $L(\pi, \cdot)$.
Both players try to minimize their regrets against respective best actions in hindsight.

The key difference of our algorithm from MBCL is that in each round $k$, the $\pi$-player plays before the $\lambda$-player, while in MBCL, the order is reversed.
Since $\lambda$-player sees what $\pi$-player plays before playing, the $\lambda$-player can act greedily to minimize the regret.
On the other hand, the $\pi$-player has to use a no-regret policy optimization oracle, defined below, with a sublinear regret against adversarially chosen sequence of $\lambda$'s.

\begin{definition}[No-regret policy optimization oracle] \label{def:pi-player}
An algorithm is called a no-regret policy optimization oracle if for any sequence of functions $h_1, \dots, h_K : \mathcal{S} \times \mathcal{A} \rightarrow [-1, 1]$, the sequence of policies $\pi_1, \dots, \pi_K \in \Pi$ produced by the algorithm satisfies
$$
\frac{1}{K} \sum_{k = 1}^K \mathbb{E}_\pi [ h_k(s, \pi) - h_k(s, \pi_k) ] = \epsilon_{\text{opt}}(K)
$$
with high probability for any $\pi \in \mathrm{Conv}(\Pi)$ where $\epsilon_{\text{opt}}(K) \rightarrow 0$ as $K \rightarrow \infty$.
\end{definition}

A well-known instance of the oracle is the natural policy gradient algorithm \parencite{kakade_natural_2001} based on the updates $\pi_{k + 1}(a | s) \propto \pi_k(a | s) \exp(\eta h_k(s, a))$.

\subsection{Critics for Lagrangian}

We want to estimate the Lagrangian function $L(\pi, \bm\lambda) = J_R(\pi) + \bm\lambda \cdot (\bm\tau - J_{\bm{C}}(\pi))$ for all $\pi \in \Pi$ and $\bm\lambda \in B \bm\Delta^I$.
We use critics for $J_R(\pi)$ and $J_{\bm{C}}(\pi)$ that are inspired by the reward critic proposed by \textcite{zhu_importance_2023} for their actor-critic algorithm for offline unconstrained RL.
Our critic for $J_U(\pi)$ aims to solve
$$
\min_{f \in \mathcal{F}}~2 \mathcal{E}_\mu(\pi, f; U) \pm A_\mu(\pi, f)
$$
where the sign of $A_\mu$ is appropriately chosen and
\begin{align*}
\mathcal{E}_\mu(\pi, f; U) &\coloneqq \max_{w \in \mathcal{W}} \vert \mathbb{E}_\mu [ w(s, a)(f - \mathcal{T}^\pi_U f)(s, a)] \vert \\
A_\mu(\pi, f) &\coloneqq \mathbb{E}_\mu[f(s, \pi) - f(s, a)].
\end{align*}
Here, $\mathcal{T}_U^\pi : \mathbb{R}^{\mathcal{S} \times \mathcal{A}} \rightarrow \mathbb{R}^{\mathcal{S} \times \mathcal{A}}$ is the Bellman operator with $(\mathcal{T}_U^\pi f)(s, a) = U(s, a) + \gamma \mathbb{E}_{s' \sim P(\cdot~|~s, a)}[ f(s', \pi) ]$ and $f(s, \pi) = \sum_{a \in \mathcal{A}} \pi(a | s) f(s, a)$.
Minimizing the first term $\mathcal{E}_\mu$ in the objective function of the critics encourages Bellman-consistency.
The second term, with appropriate sign, facilitates the regret bound analysis for the $\pi$-player.
Since the data distribution of the behavior policy $d^\mu$ is unknown, we solve an empirical version $\min_{f \in \mathcal{F}} 2 \mathcal{E}_\mathcal{D}(\pi, f; U) + A_\mathcal{D}(\pi, f)$ where
\begin{align*}
\mathcal{E}_\mathcal{D}(\pi, f; U) &\coloneqq
\max_{w \in \mathcal{W}}\vert \mathbb{E}_\mathcal{D} [ w(s, a)(f(s, a) - U(s, a) - \gamma f(s', \pi))] \vert \\
A_\mathcal{D}(\pi, f) &\coloneqq \mathbb{E}_\mathcal{D}[f(s, \pi) - f(s, a)].
\end{align*}
Here,  $\mathbb{E}_\mathcal{D}[F(s, a, s')] = \frac{1}{\vert \mathcal{D} \vert} \sum_{(s, a, s') \in \mathcal{D}} F(s, a, s')$.
The critics for the reward value $J_R(\pi)$ and the cost value $J_{C_i}(\pi)$ are chosen to be
\begin{align}
\text{Reward critic:}&\quad \min_{f \in \mathcal{F}} 2 \mathcal{E}_\mathcal{D}(\pi, f; R) + A_\mathcal{D}(\pi, f) \label{eqn:reward-critic} \\
\text{Cost critic:}&\quad \min_{g \in \mathcal{F}} 2 \mathcal{E}_\mathcal{D}(\pi, g; C_i) - A_\mathcal{D}(\pi, g). \label{eqn:cost-critic}
\end{align}
Note that we use the same reward critic as the one used in \textcite{zhu_importance_2023}, but we negate the term $\mathcal{A}_\mathcal{D}$ for the cost critic.

\subsection{Cost Critic for $\lambda$-Player}

While the $\pi$-player optimizes for the Lagrangian estimated by critics (\ref{eqn:reward-critic}) and (\ref{eqn:cost-critic}) in the previous section, the $\lambda$-player optimizes for $\bm\lambda \cdot (\bm\tau - J_{\bm{C}}(\pi_k))$ estimated by an offline policy evaluation (OPE) oracle.

\begin{definition}[OPE oracle]
Let $\pi$ be a policy and $U$ a utility function.
Let $\mathcal{F}$ be a function class that contains the value function $Q_U^\pi$.
Suppose the dataset $\mathcal{D} = \{(s_j, a_j, s_j')\}_{j = 1}^n$ is generated with a behavior policy $\mu$ is a behavior policy with sufficient coverage such that $\Vert d^\pi / d^\mu \Vert_{2, \mu} \leq C_{\ell_2}$.
An algorithm that produces an estimate $h \in \mathbb{R}$ for $J_U(\pi)$ such that
$$
\vert h - J_U(\pi) \vert \leq \mathcal{O}\left(\frac{C_{\ell_2}}{1 - \gamma} \sqrt{\frac{\log(\vert \mathcal{F} \vert / \delta)}{n}}\right)
$$
with probability at least $1 - \delta$ is called an OPE oracle.
\end{definition}

An example of such an oracle is provided by \textcite{zanette_when_2022} whose algorithm produces an estimate with error decreasing at the required rate scaled by a constant factor called the incompleteness factor inherent to $\mathcal{F}$ that measures misalignment between $\mathcal{F}$ and $\mathcal{T}^\pi \mathcal{F}$.

\begin{remark}
We use a separate OPE cost critic for estimating the objective function of the $\lambda$-player instead of reusing the cost critic~(\ref{eqn:cost-critic}) used for estimating the Lagrangian for the $\pi$-player.
This is because the regrets for the two players take different forms.
As we show in Section~\ref{subsection:near-saddle-point}, the regret of $\pi$-player is the sum of $J_{R + \bm\lambda_k \cdot \bm{C}}(\pi^\star) - J_{R + \bm\lambda_k \cdot \bm{C}}(\pi_k)$ while the regret of $\lambda$-player is the sum of $(\bm\lambda_k - \bm\lambda^\star) \cdot (\bm\tau - J_{\bm{C}}(\pi_k))$.
Since the regret minimizing decision for $\lambda$-player depends on the sign of $\tau - J_{C_i}(\pi_k)$, we need to estimate $J_{C_i}(\pi_k)$, which involves calling an OPE oracle.
This is why we require concentrability of all policies in $\Pi$. We leave relaxing to a single-policy concentrability assumption as future work. \end{remark}

\begin{algorithm}
\KwInput{Dataset $\mathcal{D} = \{(s_j, a_j, s_j') \}_{j = 1}^n$, number of iterations $K$, cost thresholds $\bm\tau$, bound $B$, , no-regret policy optimization oracle PO, offline policy evaluation oracle OPE.}
\KwInit{$\pi_1$: uniform policy.}
\For{$k = 1, \dots, K$}{
  $f_k \leftarrow \argmin_{f \in \mathcal{F}} 2 \mathcal{E}_\mathcal{D}(\pi_k, f; R) + A_\mathcal{D}(\pi_k, f)$. \label{algline:reward-critic} \\
  $g_k^i \leftarrow \argmin_{g \in \mathcal{F}} 2 \mathcal{E}_\mathcal{D}(\pi_k, g; C_i) - A_\mathcal{D}(\pi_k, g)$ for all $i = 1, \dots, I$. \label{algline:cost-critic} \\
  $h_k^i \leftarrow \text{OPE}(\pi_k, C_i)$ for all $i = 1, \dots, I$. \\
  $\bm{\lambda}_k \leftarrow \argmin_{\bm\lambda \in B \bm\Delta^I} \bm\lambda \cdot (\bm\tau - \bm{h}_k)$. \\
  $\pi_{k + 1} \leftarrow \textsc{PO}(\pi_k, f_k + \bm\lambda_k \cdot (\bm\tau - \bm{g}_k))$. \\
}
\KwReturn{$\bar\pi = \text{Unif}(\pi_1, \dots, \pi_K)$}
\caption{\textbf{P}rimal-\textbf{D}ual-\textbf{C}ritic \textbf{A}lgorithm}
\label{alg:pdapc}
\end{algorithm}

\subsection{Proposed Algorithm}

We propose an algorithm called Primal-Dual-Critic Algorithm (PDCA) (Algorithm~\ref{alg:pdapc}) that uses critics for the Lagrangian function and no-regret policy optimization oracle for the $\pi$-player and a greedy $\lambda$-player.
The algorithm takes cost thresholds $\bm\tau$ as an input and runs a primal-dual algorithm on the estimate of Lagrangian $L(\pi, \bm\lambda) = J_R(\pi) + \bm\lambda \cdot (\bm\tau - J_{\bm{C}}(\pi))$.

The algorithm iterates for $K$ steps.
In each step $k$, the algorithm calculates the reward critic $f_k$ and the cost critics $\bm{g}_k = (g_k^1, \cdots, g_k^I)$ using the offline dataset $\mathcal{D}$.
Then, the $\pi$-player invokes a no-regret policy optimization oracle on the estimate $f_k + \bm\lambda_k \cdot (\bm\tau - \bm{g}_k)$.
The OPE oracle is used to estimate $J_{\bm{C}}(\pi_k)$ by $\bm{h}_k(s_0, \pi_k)$, and the $\lambda$-player chooses $\bm\lambda_k$ that minimizes $\bm\lambda \cdot (\bm\tau - \bm{h}_k)$.

After running the iterations, the algorithm returns the policy $\bar\pi = \text{Unif}(\pi_1, \dots, \pi_K)$, which is a uniform mixture of the policies $\pi_1, \dots, \pi_K$.
The uniform mixture policy initially samples a policy uniformly at random from $\{\pi_1, \dots, \pi_K \}$, and then follows the sampled policy for the entire trajectory.

\subsection{Main Results}

Our main theoretical result is a sample complexity bound of our algorithm called Primal-Dual-Critic Algorithm (PDCA) (Algorithm~\ref{alg:pdapc}) for finding a policy that satisfies the constraints approximately and is $\epsilon$-optimal with respect to the optimal policy $\pi^\star$ for $\mathcal{P}(\bm\tau)$.

\begin{restatable}{theorem}{mainthm} \label{thm:main-1}
Under assumptions~\ref{assumption:value-realizability},\ref{assumption:slater},\ref{assumption:concentrability},\ref{assumption:w-boundedness} and \ref{assumption:miw-realizability},
the policy $\bar\pi$ returned by the PDCA algorithm (Algorithm~\ref{alg:pdapc}) with the cost threshold $\bm\tau$, bound $B = 1 + \frac{1}{\varphi}$ and $K$ large enough, satisfies $J_{C_i}(\bar\pi) \leq \tau_i + \mathcal{O}(\epsilon)$ for all $i = 1, \dots, I$, and $J_R(\bar\pi) \geq J_R(\pi^\star) - \mathcal{O}(\epsilon)$ with probability at least $1 - \delta$ where $\pi^\star$ is an optimal policy for $\mathcal{P}(\bm\tau)$ as long as
$$
n \geq \Omega\left(\frac{(C_{\ell_2})^2 \log(I \vert \mathcal{F} \vert \vert \Pi \vert \vert \mathcal{W} \vert / \delta)}{(1 - \gamma)^{4} \varphi^2 \epsilon^2}\right).
$$
\end{restatable}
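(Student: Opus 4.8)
The plan is to show that the returned mixture $\bar\pi$ is an approximate saddle point of the Lagrangian $L(\pi,\bm\lambda)=J_R(\pi)+\bm\lambda\cdot(\bm\tau-J_{\bm C}(\pi))$ and then convert this into the stated optimality and feasibility guarantees via strong duality. The reduction is the standard online-to-saddle argument: since $L$ is affine in the occupancy measure, $\bar\pi=\mathrm{Unif}(\pi_1,\dots,\pi_K)$ satisfies $L(\bar\pi,\bm\lambda)=\frac1K\sum_k L(\pi_k,\bm\lambda)$ and $\bar{\bm\lambda}=\frac1K\sum_k\bm\lambda_k$ satisfies $L(\pi^\star,\bar{\bm\lambda})=\frac1K\sum_k L(\pi^\star,\bm\lambda_k)$, so for every $\bm\lambda\in B\bm\Delta^I$
\[
L(\pi^\star,\bar{\bm\lambda})-L(\bar\pi,\bm\lambda)=\underbrace{\tfrac1K\textstyle\sum_k\big(L(\pi^\star,\bm\lambda_k)-L(\pi_k,\bm\lambda_k)\big)}_{\mathrm{Reg}_\pi}+\underbrace{\tfrac1K\textstyle\sum_k\big(L(\pi_k,\bm\lambda_k)-L(\pi_k,\bm\lambda)\big)}_{\mathrm{Reg}_\lambda(\bm\lambda)}.
\]
It thus suffices to bound the two regrets and then pick the comparator $\bm\lambda$ cleverly.

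The heart of the argument is $\mathrm{Reg}_\pi$. The per-round gap is $\big(J_R(\pi^\star)-J_R(\pi_k)\big)-\bm\lambda_k\cdot\big(J_{\bm C}(\pi^\star)-J_{\bm C}(\pi_k)\big)$, and for each utility $U$ with its $\pi_k$-critic $f$ I would use the value-difference identity (telescoping plus the occupancy-flow relation $\gamma\,\mathbb{E}_{\pi^\star}\mathbb{E}_{s'}[\phi(s')]=\mathbb{E}_{d^{\pi^\star}}[\phi]-(1-\gamma)\phi(s_0)$) to write
\[
J_U(\pi^\star)-J_U(\pi_k)=\tfrac{1}{1-\gamma}\,\mathbb{E}_{\pi^\star}\big[f(s,\pi^\star)-f(s,\pi_k)\big]+\tfrac{1}{1-\gamma}\big(\mathbb{E}_{\pi^\star}-\mathbb{E}_{\pi_k}\big)\big[(\mathcal{T}_U^{\pi_k}f-f)(s,a)\big].
\]
Summing the first terms with weights $(1,-\bm\lambda_k)$ reproduces $\tfrac{1}{1-\gamma}\mathbb{E}_{\pi^\star}[h_k(s,\pi^\star)-h_k(s,\pi_k)]$ for the function $h_k=f_k-\bm\lambda_k\cdot\bm g_k$ passed to the oracle; rescaling $h_k$ into $[-1,1]$ (its range is $O(B/(1-\gamma))$) and invoking the no-regret guarantee bounds the average of these by $O\big(\tfrac{B}{(1-\gamma)^2}\epsilon_{\mathrm{opt}}(K)\big)$, which vanishes for $K$ large. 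The off-policy Bellman term is controlled by the critic: $\mathbb{E}_{\pi^\star}[(\mathcal{T}_U^{\pi_k}f-f)]=\mathbb{E}_\mu[w^{\pi^\star}(\mathcal{T}_U^{\pi_k}f-f)]$ is at most $\mathcal{E}_\mu(\pi_k,f;U)$ precisely because $w^{\pi^\star}\in\mathcal{W}$ (Assumption~\ref{assumption:miw-realizability}), and $\mathcal{E}_\mu$ is forced small since $Q_U^{\pi_k}$ is feasible for the critic objective with $\mathcal{E}_\mu(\pi_k,Q_U^{\pi_k};U)=0$.

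The main obstacle is the remaining \emph{on-policy} term $\mathbb{E}_{\pi_k}[(\mathcal{T}_U^{\pi_k}f-f)]=(1-\gamma)(J_U(\pi_k)-f(s_0,\pi_k))$, the critic's self-evaluation bias: it cannot be bounded through $\mathcal{E}_\mu$ because $w^{\pi_k}$ need not lie in $\mathcal{W}$. This is exactly the role of the sign-chosen $A_\mu$ regularizer. With $+A_\mu$ the reward critic is driven to \emph{under}-estimate $J_R(\pi_k)$ and with $-A_\mu$ each cost critic to \emph{over}-estimate $J_{C_i}(\pi_k)$; since $R$ enters with coefficient $+1$ and each $C_i$ with coefficient $-\lambda_k\le0$, both self-bias terms then contribute in the favorable direction for an \emph{upper} bound on $\mathrm{Reg}_\pi$. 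Making this conservatism rigorous—converting the $d^\mu$-based quantity $A_\mu$ into a bound on the on-policy value gap by comparing $f_k$ against $Q_U^{\pi_k}$ in the critic objective, propagating the resulting advantage inequality through the $(I-\gamma P^{\pi_k})^{-1}$ expansion of the critic error, and invoking the all-policy concentrability $\|w^{\pi_k}\|_{2,\mu}\le C_{\ell_2}$ (Assumption~\ref{assumption:concentrability})—is the step I expect to be hardest.

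It remains to handle estimation error, the greedy $\lambda$-player, and the final conversion. Passing from $\mathcal{E}_{\mathcal D},A_{\mathcal D}$ to $\mathcal{E}_\mu,A_\mu$ is a uniform-convergence argument: a union bound over $\mathcal{F}\times\mathcal{W}\times\Pi$ and the $I$ cost indices produces the $\log(I|\mathcal{F}||\Pi||\mathcal{W}|/\delta)$ factor, while Bernstein's inequality with variance proxy controlled by $\|w\|_{2,\mu}\le C_{\ell_2}$ and $\|w\|_\infty\le C_\infty$ (Assumption~\ref{assumption:w-boundedness}) yields deviations $O\big(\tfrac{C_{\ell_2}}{1-\gamma}\sqrt{\log(\cdot)/n}\big)$. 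For the $\lambda$-player, $\bm\lambda_k$ minimizes $\bm\lambda\cdot(\bm\tau-\bm h_k)$, so $\mathrm{Reg}_\lambda(\bm\lambda)\le\frac1K\sum_k(\bm\lambda_k-\bm\lambda)\cdot(\bm h_k-J_{\bm C}(\pi_k))\le 2B\max_i|h_k^i-J_{C_i}(\pi_k)|$, bounded by $2B$ times the OPE error. Collecting terms gives $L(\pi^\star,\bar{\bm\lambda})-L(\bar\pi,\bm\lambda)\le\mathrm{Reg}=O\big(\tfrac{C_{\ell_2}}{(1-\gamma)^2\varphi}\sqrt{\log(\cdot)/n}\big)$ for all $\bm\lambda\in B\bm\Delta^I$. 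Using feasibility of $\pi^\star$ and $\bar{\bm\lambda}\ge\bm0$ gives $L(\pi^\star,\bar{\bm\lambda})\ge J_R(\pi^\star)$; choosing $\bm\lambda=\bm0$ then yields $J_R(\bar\pi)\ge J_R(\pi^\star)-\mathrm{Reg}$, and choosing $\bm\lambda$ to place mass $B$ on the most-violated constraint, combined with the strong-duality bound $J_R(\bar\pi)-J_R(\pi^\star)\le\|\bm\lambda^\star\|_1\,V$ (where $V$ is the max violation), gives $(B-\|\bm\lambda^\star\|_1)V\le\mathrm{Reg}$. Finally Slater's condition (Assumption~\ref{assumption:slater}) bounds $\|\bm\lambda^\star\|_1\le 1/\varphi$, so with $B=1+1/\varphi$ we get $B-\|\bm\lambda^\star\|_1\ge1$ and hence $J_{C_i}(\bar\pi)\le\tau_i+O(\mathrm{Reg})$; setting $\mathrm{Reg}=O(\epsilon)$ produces the claimed sample complexity.
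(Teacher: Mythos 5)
Your proposal is correct and takes essentially the same route as the paper: the same two-regret saddle-point decomposition (Lemmas~\ref{lemma:saddle-point-step-1} and~\ref{lemma:saddle-point-step-2}), the same critic comparison against the realizable $Q_U^{\pi_k}$ with $\mathcal{E}_\mu(\pi_k, Q_U^{\pi_k}; U) = 0$, the same use of $w^{\pi^\star} \in \mathcal{W}$ to transfer the comparator's Bellman error, the same greedy-$\lambda$/OPE argument, and the same conversion via strong duality, the mass-$B$ comparator $\bm\lambda$, and the Slater bound $\Vert \bm\lambda^\star \Vert_1 \leq 1/\varphi$ (Lemma~\ref{lemma:key2} with $\eta = 0$ plus Lemma~\ref{lemma:dual-variable-bound}). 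The one step you flag as hardest closes in one line rather than through a resolvent expansion or concentrability: applying Lemma~\ref{lemma:perf-diff-1} with $\pi = \mu$ to both $f_k$ and $Q_U^{\pi_k}$ gives $(1-\gamma)\bigl(f_k(s_0, \pi_k) - J_U(\pi_k)\bigr) = A_\mu(\pi_k, f_k) - A_\mu(\pi_k, Q_U^{\pi_k}) + \mathbb{E}_\mu[(f_k - \mathcal{T}_U^{\pi_k} f_k)(s,a)]$, after which the critic's optimality against $Q_U^{\pi_k}$ absorbs the advantage difference together with $\mathcal{E}_\mu(\pi_k, f_k; U)$ (which need not be small on its own, only in this signed combination) --- this is exactly the paper's Lemma~\ref{lemma:perf-diff-2}, and Assumption~\ref{assumption:concentrability} is used only for the OPE oracle in the $\lambda$-player bound, not here.
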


The sample complexity bound provided by \textcite{le_batch_2019} is $\Omega(\frac{C_\infty(\text{dim}_F + \log(I / \delta))}{(1 - \gamma)^{10} \epsilon^2})$ where $\text{dim}_F$ is a complexity measure of the function class they use for the Lagrangian function, which is analogous to the log cardinality term $\log \vert \mathcal{F} \vert$ in our finite function class setting.
Compared to their bound, our bound saves a factor of $\frac{1}{(1 - \gamma)^6}$ and depends on concentrability coefficient $C_{\ell_2}$ instead of $C_\infty$.
As \textcite{zhu_importance_2023} discuss, $(C_{\ell_2})^2 \leq C_\infty$ and $C_\infty$ can be arbitrarily larger than $(C_{\ell_2})^2$.
Also, our algorithm requires weaker assumptions.
While \textcite{le_batch_2019} require $\ell_\infty$ concentrability for all sequences of policies, we require $\ell_2$ concentrability for fixed policies.
While \textcite{le_batch_2019} require Bellman completeness for the value function class, we only require realizability.
The only additional assumption we need is the single-policy realizability of the marginalized importance weight.

With different choices of inputs to the PDCA algorithm, we get the following results.
See Appendix~\ref{appendix:main-results} for the formal statements and proofs.
\paragraph{Arbitrary Comparator Policy} (Theorem~\ref{thm:main-2}) Without the Slater's condition and the MIW realizability assumptions, running PDCA with the cost threshold  $\bm\tau$ and the bound $B = \frac{1}{(1 - \gamma) \epsilon}$ gives a policy $\bar\pi$ that is nearly feasible and satisfies near-optimality ($J_R(\bar\pi) \geq J_R(\pi_c) - \mathcal{O}(\epsilon)$) against any comparator policy $\pi_c \in \text{Conv}(\Pi)$ of which MIW is realizable by $\mathcal{W}$. However, the sample complexity bound is of $\mathcal{O}(1 / \epsilon^4)$.

\paragraph{Exact Feasibility} (Theorem~\ref{thm:main-3}) With the same sample complexity as in Theorem~\ref{thm:main-1}, running PDCA with a tightened cost threshold $\bm\tau - \eta \bm{1}$ where $\eta = \mathcal{O}(\epsilon)$ and the bound $B = \mathcal{O}(\frac{1}{\varphi})$ gives a policy $\bar\pi$ that is exactly feasible ($J_{\bm{C}}(\bar\pi) \leq \bm\tau$) and nearly optimal ($J_R(\bar\pi) \geq J_R(\pi^\star) - \mathcal{O}(\epsilon)$).

\section{ANALYSIS} \label{section:analysis}

In this section, we provide a proof sketch for Theorem~\ref{thm:main-1}.
We show in Section~\ref{subsection:near-saddle-point} that PDCA finds a near saddle point of the Lagrangian.
We show in Section~\ref{subsection:near-optimal} that the near saddle point approximately solves the optimization problem (\ref{eqn:opt}).
We use the notation $a \lesssim b$ to indicate $a \leq b + \zeta(n, K)$ and $a \approx b$ to indicate $a = b + \zeta(n, K)$ where $\zeta(n, K) \rightarrow 0$ as $n, K \rightarrow \infty$.

\subsection{PDCA Finds a Near Saddle Point} \label{subsection:near-saddle-point}

We show that PDCA run with thresholds $\bm\tau$ and bound $B$ finds a near saddle point of the Lagrangian $L(\pi, \bm\lambda) = J_R(\pi) + \bm\lambda \cdot (\bm\tau - J_{\bm{C}}(\pi))$: the policy $\bar\pi$ returned by PDCA and $\bar{\bm\lambda} = \frac{1}{K} \sum_{k = 1}^K \bm\lambda_k$ for sufficiently large $K$ satisfy
$$
L(\pi, \bar{\bm\lambda}) \leq L(\bar\pi, \bm\lambda) + \mathcal{O}(\epsilon_{\text{stat}})
$$
for all $\pi \in \text{Conv}(\Pi)$ with $w^\pi \in \mathcal{W}$ and $\bm\lambda \in B \bm\Delta^I$ where $\epsilon_{\text{stat}} = \mathcal{O}(1 / \sqrt{n})$ is a statistical error term for estimating $\mathcal{E}_\mu$ and $A_\mu$.
See Appendix~\ref{appendix:concentration} for the full analysis of $\epsilon_{\text{stat}}$.
To show that $(\bar\pi, \bar{\bm\lambda})$ is a near saddle point, we decompose $L(\pi, \bar{\bm\lambda}) - L(\bar\pi, \bm\lambda)$ into regrets of the $\pi$-player and the $\lambda$-player, and bound each regret separately as follows. See Appendix~\ref{appendix:saddle-point} for full proof.

\paragraph{Regret Bound for $\pi$-Player}
Regret of $\pi$-player $\frac{1}{K} \sum_{k = 1}^K L(\pi, \bm\lambda_k) - \frac{1}{K} \sum_{k = 1}^K L(\pi_k, \bm\lambda_k)$ simplifies to $\frac{1}{K} \sum_{k = 1}^K (J_R(\pi) - J_R(\pi_k)) + \frac{1}{K} \sum_{k = 1}^K \bm\lambda_k \cdot (J_{\bm{C}}(\pi_k) - J_{\bm{C}}(\pi))$.
Decomposing $J_R(\pi) - J_R(\pi_k)$ by performance difference lemma (Lemma 12 in \textcite{cheng_adversarially_2022}) and using the properties of the critics give
$$
(1 - \gamma)(J_R(\pi) - J_R(\pi_k)) \lesssim
\mathbb{E}_\pi[f_k(s, \pi) - f_k(s, \pi_k)]
$$
which shows the performance difference with respect to the reward function $R$ can be upper bounded by the difference of the reward critic.
Similarly, we have
$$
(1 - \gamma)(J_{C_i}(\pi_k) - J_{C_i}(\pi)) \lesssim \mathbb{E}_\pi[g_k^i(s, \pi_k) - g_k^i(s, \pi)],
$$
for all $i = 1, \dots, I$, and it follows that
\begin{align*}
\textstyle \frac{1}{K} &\textstyle\sum_{k = 1}^K L(\pi, \bm\lambda_k) - \frac{1}{K} \textstyle\sum_{k = 1}^K L(\pi_k, \bm\lambda_k) \\
&\lesssim
\textstyle\frac{1}{K} \textstyle\sum_{k = 1}^K \mathbb{E}_\pi [z_k(s, \pi) - z_k(s, \pi_k)]
\leq \frac{1}{K} \epsilon_{\text{opt}}(K)
\end{align*}
where $z_k = f_k + \bm\lambda_k \cdot (\bm\tau - \bm{g}_k)$ is the input to policy optimization oracle (Definition~\ref{def:pi-player}) used by $\pi$-player; the last inequality is by the property of the oracle.

\paragraph{Regret Bound for $\lambda$-Player}
The regret of the $\lambda$-player $\frac{1}{K} \sum_{k = 1}^K L(\pi_k, \bm\lambda_k) - L(\bar\pi, \bm\lambda)$ can be simplified to $\frac{1}{K} \sum_{k = 1}^K (\bm\lambda_k - \bm\lambda) \cdot (\bm\tau - J_{\bm{C}}(\pi_k))$.
Recall that PDCA calls OPE oracle to estimate $J_{C_i}(\pi_k) \approx h_k^i$.
Since the $\lambda$-player greedily chooses $\bm\lambda_k \in B \bm\Delta^I$ that minimizes $\bm\lambda \cdot (\bm\tau - \bm{h}_k)$, we have for all $\bm\lambda \in B \bm\Delta^I$ that
$$
\textstyle\frac{1}{K} \textstyle\sum_{k = 1}^K (\bm\lambda_k - \bm\lambda) \cdot (\bm\tau - J_{\bm{C}}(\pi_k))
\approx
\textstyle\frac{1}{K} \textstyle\sum_{k = 1}^K (\bm\lambda_k - \bm\lambda)\cdot(\bm\tau - \bm{h}_k)
\leq
0.
$$

\subsection{A Near Saddle Point Nearly solves OPT} \label{subsection:near-optimal}

We can show that if the Slater's condition (Assumption~\ref{assumption:slater}) holds, then a near saddle point
 $(\bar\pi, \bar{\bm\lambda})$ of $L(\cdot, \cdot)$ that satisfies $L(\pi, \bar{\bm\lambda}) \leq L(\bar\pi, \bm\lambda)$ for all $\pi \in \text{Conv}(\Pi)$ with $w^\pi \in \mathcal{W}$ and $\bm\lambda \in B \bm\Delta^I$, then
\begin{align*}
J_R(\bar\pi) &\geq J_R(\pi^\star) - \xi \\
J_{C_i}(\bar\pi) &\leq \tau_i + \textstyle \frac{\xi}{B - 1 / \varphi}, \quad i = 1, \dots, I
\end{align*}
where $\pi^\star \in \text{Conv}(\Pi)$ is the optimal policy for $\mathcal{P}(\bm\tau)$.
See Appendix~\ref{appendix:saddle-point-property} for the proof of the above result.
Combining the results in Section~\ref{subsection:near-saddle-point} and Section~\ref{subsection:near-optimal}, Theorem~\ref{thm:main-1} follows.
See Appendix~\ref{appendix:main-results} for the full proof.

\section{EXPERIMENTS} \label{section:experiments}

To demonstrate the empirical performance of our algorithm, we compare with MBCL \parencite{le_batch_2019}, the only previous work with provable guarantees, and the following practical algorithms for offline constrained RL: COptiDICE~\parencite{lee_coptidice_2022}, CDT~\parencite{liu2023constrained}, CQP~\parencite{xu2022constraints}.
For comparing with MBCL, we use tabular setting since MBCL can only run with discrete action set.
For computational efficiency in solving the min-max optimization problems in Line \ref{algline:reward-critic},\ref{algline:cost-critic} of Algorithm~\ref{alg:pdapc}, we take $\mathcal{W} = [0, C_\infty]^{\mathcal{S} \times \mathcal{A}}$ where the bound $C_\infty$ is treated as a hyperparameter.
Such $\mathcal{W}$ reduces $\mathcal{E}_\mathcal{D}(\pi, f; U)$ to
\begin{align*}
\mathcal{E}_\mathcal{D}(\pi, f; U)
&= \max_{w \in \mathcal{W}} \vert \mathbb{E}_\mathcal{D} [ w(s, a) (f(s, a) - U(s, a) - \gamma f(s', \pi))] \vert \\
&=
C_\infty \max \{
\mathbb{E}_\mathcal{D} [ (f(s, a) - U(s, a) - \gamma f(s', \pi))_+],
\mathbb{E}_\mathcal{D} [ (U(s, a) + \gamma f(s', \pi) - f(s, a))_+]
\}.
\end{align*}
Following experimental settings of previous works, we use a single cost signal ($I = 1$) for all experiments.
For experimental details, see Appendix~\ref{appendix:experiments}.

\begin{figure}[t]
    \centering
    \includegraphics[width=0.7\columnwidth]{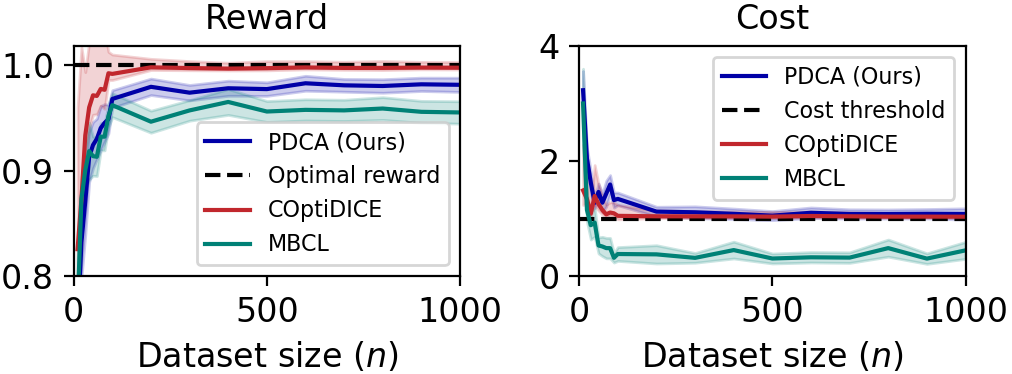}
    \caption{Tabular CMDP experiment}
    \label{fig:tabular}
\end{figure}

\subsection{Tabular Experiments}

Following \textcite{lee_coptidice_2022}, we randomly generate tabular CMDP with 10 states and 5 actions and prepare an offline dataset using a data distribution induced by a mixture of uniform policy and the optimal policy.
We compare the performance of PDCA to MBCL and COptiDICE on datasets of varying sizes.
For each dataset size, we repeat the experiments 10 times and report the average of the reward value and the cost value.
Figure~\ref{fig:tabular} shows the result.
The shaded region indicates the standard error.
Overall, PDCA outperforms MBCL and is comparable to COptiDICE.

\subsection{Real-World RL Benchmark Experiments}

\begin{algorithm}[t]
\KwInput{Dataset $\mathcal{D} = \{ (s_j, a_j, s_j') \}_{j = 1}^n$}
\KwInit{Network $f_\theta$ for reward critic; $g^i_{\theta^i}$, $i = 1, \dots, I$ for cost critics; $h^i_{\vartheta^i}$, $i = 1, \dots, I$ for OPE. Policy network $\pi_\psi$.}
\For{$k = 1, 2, \dots, K$}{
Sample a minibatch $\mathcal{D}_{\text{mini}}$ from dataset $\mathcal{D}$. \\
\tcp{Update critics}
$\ell_{\text{reward}}(\theta) = 2 \mathcal{E}_{\mathcal{D}_{\text{mini}}} (f_\theta, \pi_\psi) + A_{\mathcal{D}_{\text{mini}}}(f_\theta, \pi_\psi)$. \\
$\ell_{\text{cost}}(\theta^i) = 2 \mathcal{E}_{\mathcal{D}_{\text{mini}}} (g^i_{\theta^i}, \pi_\psi) - A_{\mathcal{D}_{\text{mini}}}(g^i_{\theta^i}, \pi_\psi)$ for $i = 1, \dots, I$. \\
$\ell_{\text{ope}}(\theta^i) = \mathcal{E}_{\mathcal{D}_{\text{mini}}} (g^i_{\theta^i}, \pi_\psi)$ for $i = 1, \dots I$. \\
$\theta \leftarrow \textsc{Adam}(\pi_\psi, \nabla \ell_{\text{reward}}(\theta), \eta_{\text{fast}})$. \\
$\theta^i \leftarrow \textsc{Adam}(\pi_\psi, \nabla \ell_{\text{cost}}(\theta^i), \eta_{\text{fast}})$, $i = 1, \dots, I$. \\
$\vartheta^i \leftarrow \textsc{Adam}(\pi_\psi, \nabla \ell_{\text{ope}}(\vartheta^i), \eta_{\text{fast}})$, $i = 1, \dots, I$. \\
\tcp{Update $\pi$.}
$\ell_{\text{actor}}(\psi) = -A_{\mathcal{D}_{\text{mini}}}(f_\theta + \sum_{i = 1}^I \lambda_i(\tau_i - g^i_{\theta^i}), \pi_\psi)$. \\
$\psi \leftarrow \textsc{Adam}(\ell_{\text{actor}}, \eta_{\text{slow}})$. \\
\tcp{Update $\lambda$.}
$z_i \leftarrow \tau_i - h^i_{\theta^i}(s_0, \pi_\psi)$, for $i = 1, \dots, I$ \\
$\lambda_i \leftarrow B$ if $z_i < 0$ otherwise $\lambda_i \leftarrow 0$, $i = 1, \dots, I$.
}
\caption{Practical Version of PDCA}
\label{alg:practical}
\end{algorithm}

We follow the experimental setup in \textcite{lee_coptidice_2022} and run the algorithms on 4 environments provided in the Real-World RL (RWRL) suite  \parencite{dulacarnold2020realworldrlempirical}.
For the benchmark experiments, we use a practical version of PDCA shown in Algorithm~\ref{alg:practical}.
We parameterize the function class $\mathcal{F}$ with neural networks.
The reward critic uses a neural network $f_\theta$ parameterized by $\theta$ and each cost critic for the cost $C_i$ uses a neural network $g^i_{\theta^i}$ parameterized by $\theta^i$.
For solving the optimization problems, the reward critic uses stochastic gradient descent algorithm with a learning rate $\eta_{\text{fast}}$ on the loss $2 \mathcal{E}_\mathcal{D}(f_\theta, \pi) + A_\mathcal{D}(f_\theta, \pi)$.
Similarly, the cost critic uses stochastic gradient descent algorithm with the same learning rate $\eta_{\text{fast}}$ on the loss $2 \mathcal{E}_\mathcal{D}(f_\theta, \pi) - A_\mathcal{D}(f_\theta, \pi)$.
Following the practical version of no-regret policy optimization oracle implemented by \textcite{cheng_adversarially_2022}, we use a policy network to parameterize $\Pi$.
The $\pi$-player uses a neural network $\pi_\psi$ parameterized by $\psi$ and use a stochastic gradient descent algorithm on the loss $-A_\mathcal{D}(f_\theta + \bm\lambda \cdot (\bm\tau - \bm{g}_{\bm\theta}), \pi_\psi)$.
For the OPE oracle, we use a neural network $h^i_{\vartheta^i}$ parameterized by $\vartheta^i$ and use a stochastic gradient descent algorithm on the loss $\mathcal{E}_{\mathcal{D}}(h^i_{\vartheta^i}, \pi)$ with learning rate $\eta_{\text{fast}}$.
The $\lambda$-player acts greedily and chooses $\bm\lambda \in B \bm\Delta^I$ that minimizes $\bm\lambda \cdot (\bm\tau - \bm{h}_\vartheta(s_0, \pi_\psi))$.
See Appendix~\ref{appendix:rwrl} for hyperparameter tuning details.

\paragraph{Environments}

We run experiments on four environments provided in the Real-World RL (RWRL) Benchmark suite \parencite{dulacarnold2020realworldrlempirical} used by \textcite{lee_coptidice_2022}: Cartpole, Walker, Quadruped, and Humanoid.
Following \textcite{lee_coptidice_2022}, for each environment, we choose the most challenging safety condition among the multiple safety conditions provided by RWRL suite.
We give the cost of 1 if the safety condition is violated at each time step.
The thresholds on the expected discounted cumulative costs are
 0.05 for Cartpole and Walker, and 0.01 for Quadruped and Humanoid.
We follow the same safety coefficient parameters (difficulty levels provided by RWRL suite) used by \textcite{lee_coptidice_2022}: for Cartpole and Walker we use 0.3, and for Quadruped and Humanoid we use 0.5.

\paragraph{Offline Dataset Generation}

Since RWRL suite does not provide an offline dataset we generate one for each environment by a policy trained by an online RL algorithm using a reward function penalized by cost function, $R - \lambda C$, where we vary $\lambda$.
Specifically, for each environment, we choose three different $\lambda$ values and for each $\lambda$, we run the soft actor-critic algorithm (SAC) \parencite{haarnoja_soft_2018} with the reward function $R - \lambda C$.
The SAC algorithm is run for 1,000,000 steps.
For each policy trained with different $\lambda$ values, we generate 1,000 trajectories.
During trajectory generation, actions are perturbed with Gaussian noise with mean=0 and std=0.15.
The three sets of trajectories, one for each $\lambda$, are mixed to form an offline dataset consisting of 3,000 trajectories.
For the $\lambda$ values, we use $0.3,0.8,1$ for Cartpole, $1,1.8,2$ for Walker, $0,0.1,0.5$ for Quadruped, $0,0.4,0.5$ for Humanoid.

\paragraph{Evaluation}

Every 1000 iterations, we evaluate the policy by running it on the environment online and recording the trajectories 5 times.
We report the average discounted cumulative reward and cost and their standard errors.
Figure~\ref{fig:benchmark} shows comparison of the performance of our algorithm and COptiDICE on 4 RWRL environments.
The black dotted horizontal line indicates the cost threshold.
The blue and red lines indicate the cumulative cost and reward for PDCA and COptiDICE respectively.
For the Cartpole environment, PDCA outperforms COptiDICE.
For the Walker environment, PDCA requires more iterations to converge and the performance is comparable to COptiDICE.
For the Quadruped environment, the cumulative rewards for PDCA and COptiDICE are comparable, but COptiDICE tends to produce a policy that is constraint-violating.
For the Humanoid environment, PDCA is comparable to COptiDICE.

\begin{figure}[t]
\centering
\includegraphics[width=1\linewidth]{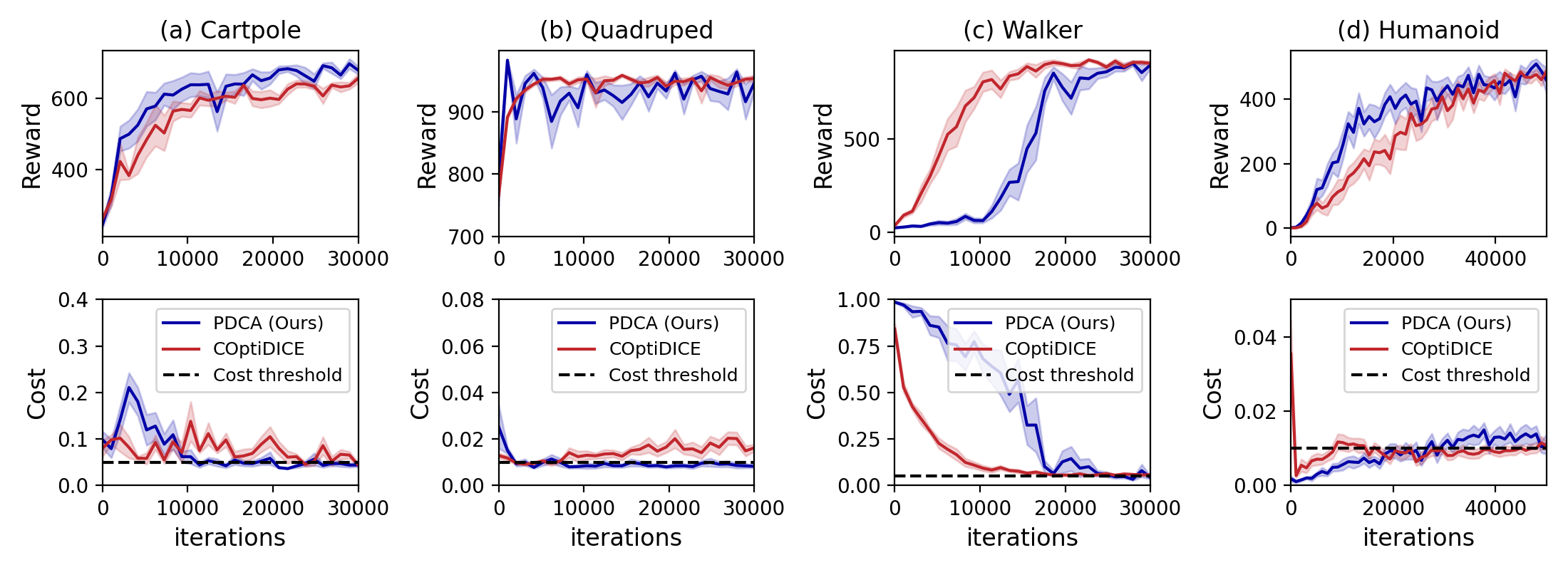}%
\caption{RWRL Benchmark Experiments}
\label{fig:benchmark}
\end{figure}

\subsection{Safety Gym Benchmark Experiments}

We run PDCA on the bullet safety gym~\parencite{Gronauer2022BulletSafetyGym} with offline datasets provided by \textcite{liu2023datasets} and compare the performance of PDCA to CDT~\parencite{liu2023constrained}, CPQ~\parencite{xu2022constraints} and COptiDICE~\parencite{lee_coptidice_2022}.
See Appendix~\ref{appendix:bullet} for the details of the offline datasets and hyperparameter tuning procedure.

\paragraph{Environments}

The Bullet Safety Gym \parencite{Gronauer2022BulletSafetyGym} provides environments based on physics simulator where the agent can move around the physical environment scattered with obstacles.
The layout of the obstacles is not fixed and randomly generated in each episode.
For our benchmark experiments, we use the three different agents: ball that can move freely on a plane and is controlled by a two-dimensional force vector; car that can control wheel velocities and steering angle; ant that is quadrupedal composed of nine rigid bodies with each leg controlled by two actuators.
We use two different tasks. The circle task encourages the agent to move on a circle. The reward signal depends on the speed of the agent and the proximity of the agent to the boundary. Costs are incurred when the agent leaves the circle.
The run task rewards the agent for running through an avenue between two safety boundaries. The agent incurs costs when exceeding speed limit.

\paragraph{Evaluation}

Following the evaluation protocol by \textcite{liu2023datasets}, we run with cost thresholds set to 10, 20, 40.
For each of the cost threshold, we run with 3 random seeds.
We report the average performance across the 9 runs for each environment.
To approximate the uniform mixture of historical policies produced by PDCA,  we take policies every 2500 iterations and report the average of the performance of the policies.
The performance of each policy is measured by running the policy on the corresponding environment for 20 episodes and taking the average of the discounted cumulative reward/cost.
When reporting, the cost value is normalized so that the cost threshold is scaled to 1.
See Table~\ref{table:safety-gym} for the results.
The column \reward{\textbf{R}} is the reward and \cost{\textbf{C}} the cost averaged over three random seeds and three cost thresholds.
Cost is normalized such that the threshold value is 1.
Boldfaced numbers indicate cost values that exceed the threshold.
The performance of PDCA is generally not dominated by CPQ and COptiDICE in the sense that none of the algorithms outperform PDCA in terms of both reward and cost violation.
The transformer based algorithm CDT generally outperforms other algorithms. We believe that this is because CDT learns non-Markovian policies, which may be better suited for the benchmark environments.

\begin{table}[h]
\caption{Safety Gym Results (Cost Threshold = 1.00)}
\label{table:safety-gym}
\centering
\begin{tabular}{ccccccccc}
 \toprule
 \multirow{2}{*}{\textbf{Task}} & \multicolumn{2}{c}{\textbf{CDT}} & \multicolumn{2}{c}{\textbf{CPQ}} & \multicolumn{2}{c}{\textbf{COptiDICE}} & \multicolumn{2}{c}{\textbf{PDCA}} \\
 \cline{2-9}
 & \reward{\textbf{R}} & \cost{\textbf{C}} & \reward{\textbf{R}} & \cost{\textbf{C}} &\reward{\textbf{R}} & \cost{\textbf{C}} &\reward{\textbf{R}} & \cost{\textbf{C}} \\
 \midrule
 AntCircle & \reward{0.54} & \cost{\bftab{1.78}} & \reward{0.00} & \cost{0.00} & \reward{0.17} & \cost{\bftab{5.04}} & \reward{0.22} & \cost{\bftab{3.53}} \\
 AntRun & \reward{0.72} & \cost{0.91} & \reward{0.03} & \cost{0.02} & \reward{0.61} & \cost{0.94} & \reward{0.28} & \cost{0.93} \\
 BallCircle & \reward{0.77} & \cost{\bftab{1.07}} & \reward{0.64} & \cost{0.76} & \reward{0.70} & \cost{\bftab{2.61}} & \reward{0.63} & \cost{\bftab{2.29}} \\
 BallRun & \reward{0.39} & \cost{\bftab{1.16}} & \reward{0.22} & \cost{\bftab{1.27}} & \reward{0.59} & \cost{\bftab{3.52}} & \reward{0.55} & \cost{\bftab{3.38}} \\
 CarCircle & \reward{0.75} & \cost{0.95} & \reward{0.71} & \cost{0.33} & \reward{0.49} & \cost{\bftab{3.14}} & \reward{0.22} & \cost{\bftab{2.42}} \\
 \bottomrule
\end{tabular}
\end{table}

\section{CONCLUSION}

We propose a primal-dual algorithm PDCA for offline constrained RL with function approximation.
PDCA is sample-efficient under concentrability, value function realizability and MIW realizability assumptions, which relaxes Bellman completeness assumption required by previous work.
PDCA requires all-policy concentrability only to guarantee the concentration bound on the estimates returned by OPE.
Relaxing this to the single-policy concentrability assumption is an interesting future work that will likely require using pessimistic estimates for the costs and modifying the strategy of the $\lambda$-player to work with pessimistic estimates.

\printbibliography

\newpage

\renewcommand\thesection{\Alph{section}}
\setcounter{section}{0}

\title{Supplementary Materials}

\section{PERFORMANCE DIFFERENCE LEMMAS} \label{appendix:perf-diff-lemma}

In this section, we provide two generalizations of the classical performance difference lemma \parencite{kakade_approximately_2002}.
For completeness, we first state the classical performance difference lemma below.

\begin{lemma}[Performance Difference Lemma. \textcite{kakade_approximately_2002}] \label{lemma:perf-diff}
\begin{equation} \label{eqn:perf-diff}
(1 - \gamma)(J_U(\widehat\pi) - J_U(\pi)) = A_\pi(\widehat\pi, Q_U^{\widehat\pi})
\end{equation}
\end{lemma}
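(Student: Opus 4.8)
The plan is to unfold the definition of the advantage-style functional and reduce the identity to a single telescoping computation over the occupancy measure $d^\pi$. Recall that, generalizing $A_\mu$ from the main text, $A_\pi(\widehat\pi, f) = \mathbb{E}_\pi[f(s, \widehat\pi) - f(s, a)]$, where $\mathbb{E}_\pi$ is expectation over $(s,a) \sim d^\pi$. Writing $V(s) \coloneqq Q_U^{\widehat\pi}(s, \widehat\pi)$ and invoking the Bellman fixed-point equation $Q_U^{\widehat\pi}(s,a) = U(s,a) + \gamma \mathbb{E}_{s' \sim P(\cdot|s,a)}[V(s')]$ (i.e.\ $Q_U^{\widehat\pi}$ is the fixed point of $\mathcal{T}_U^{\widehat\pi}$), the functional becomes
$$A_\pi(\widehat\pi, Q_U^{\widehat\pi}) = \mathbb{E}_\pi\big[V(s) - U(s,a) - \gamma \mathbb{E}_{s' \sim P(\cdot|s,a)}[V(s')]\big].$$

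Next I would split this into two pieces. The term $-\mathbb{E}_\pi[U(s,a)]$ equals $-(1-\gamma) J_U(\pi)$ by the identity $(1-\gamma) J_U(\pi) = \mathbb{E}_\pi[U(s,a)]$ recorded in the preliminaries. For the remaining value terms I would establish a flow identity: for any bounded $V$,
$$\mathbb{E}_\pi[V(s)] - \gamma \,\mathbb{E}_{(s,a) \sim d^\pi,\, s' \sim P(\cdot|s,a)}[V(s')] = (1-\gamma) V(s_0).$$
Granting this, we get $A_\pi(\widehat\pi, Q_U^{\widehat\pi}) = (1-\gamma) V(s_0) - (1-\gamma) J_U(\pi)$, and since the initial state $s_0$ is fixed, $V(s_0) = Q_U^{\widehat\pi}(s_0, \widehat\pi) = J_U(\widehat\pi)$, so the right-hand side is exactly $(1-\gamma)(J_U(\widehat\pi) - J_U(\pi))$, which proves the lemma.

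The main obstacle — really the only nontrivial step — is the flow identity. I would prove it by unrolling $d^\pi(s,a) = (1-\gamma) \sum_{t=0}^\infty \gamma^t P^\pi(s_t = s, a_t = a)$. The first term gives $\mathbb{E}_\pi[V(s)] = (1-\gamma) \sum_{t \geq 0} \gamma^t \mathbb{E}^\pi[V(s_t)]$, while the second term, after noting that drawing $(s,a) \sim d^\pi$ and then $s' \sim P(\cdot|s,a)$ produces $s'$ distributed as the one-step successor, gives $\gamma \,\mathbb{E}_{s'}[V(s')] = (1-\gamma) \sum_{t \geq 0} \gamma^{t+1} \mathbb{E}^\pi[V(s_{t+1})] = (1-\gamma) \sum_{t \geq 1} \gamma^t \mathbb{E}^\pi[V(s_t)]$. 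Subtracting telescopes every term with $t \geq 1$, leaving $(1-\gamma) \mathbb{E}^\pi[V(s_0)] = (1-\gamma) V(s_0)$. The only care needed is to justify interchanging the infinite sum with the expectation, which is immediate since $V$ is bounded (values lie in $[0, \tfrac{1}{1-\gamma}]$) and $\sum_t \gamma^t$ converges.
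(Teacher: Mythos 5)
Your proof is correct and is in substance the paper's own route: the paper derives this lemma as the specialization $f = Q_U^{\widehat\pi}$ of the generalized Lemma~\ref{lemma:perf-diff-1}, whose proof is exactly your telescoping flow identity $\mathbb{E}_\pi[V(s)] - \gamma\,\mathbb{E}_{(s,a) \sim d^\pi,\, s' \sim P(\cdot|s,a)}[V(s')] = (1-\gamma)V(s_0)$ applied with $V(s) = f(s, \widehat\pi)$. You merely inline that specialization, using the Bellman fixed-point equation $Q_U^{\widehat\pi} = \mathcal{T}_U^{\widehat\pi} Q_U^{\widehat\pi}$ to make the residual term $\mathbb{E}_\pi[(f - \mathcal{T}_U^{\widehat\pi} f)(s,a)]$ vanish from the outset instead of carrying it through, so the argument is the same.
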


The following is the first generalization of the performance difference lemma.
It decomposes the difference in performance of two policies, where the performance of one of the policies is measured with respect to an arbitrary Q-value function $f$.
The same result is proved as an intermediate step in the proof of Lemma 12 in \textcite{cheng_adversarially_2022}.
We state it as a separate lemma and provide a simplified proof below.

\begin{lemma} \label{lemma:perf-diff-1}
For any functions $U, f: \mathcal{S} \times \mathcal{A} \rightarrow \mathbb{R}$ and any policies $\pi, \widehat{\pi} : \mathcal{S} \rightarrow \Delta(\mathcal{A})$, we have
$$
(1 - \gamma) (f(s_0, \widehat{\pi}) - J_U(\pi))
= A_\pi(\widehat{\pi}, f) + \mathbb{E}_\pi [ (f - \mathcal{T}_U^{\widehat{\pi}} f)(s, a)].
$$
\end{lemma}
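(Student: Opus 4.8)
The plan is to verify the identity directly, by expanding the right-hand side and collapsing it onto the left-hand side, rather than re-deriving it through the trajectory-unrolling argument behind the classical performance difference lemma. First I would fix the notation: reading the subscript in $A_\pi(\widehat\pi, f)$ as the occupancy measure $d^\pi$ over which the expectation is taken (exactly as in the definition of $A_\mu(\pi, f) \coloneqq \mathbb{E}_\mu[f(s,\pi) - f(s,a)]$), we have $A_\pi(\widehat\pi, f) = \mathbb{E}_\pi[f(s,\widehat\pi) - f(s,a)]$, and recall $(\mathcal{T}_U^{\widehat\pi} f)(s,a) = U(s,a) + \gamma \mathbb{E}_{s'\sim P(\cdot|s,a)}[f(s',\widehat\pi)]$. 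Adding $A_\pi(\widehat\pi, f)$ to $\mathbb{E}_\pi[(f - \mathcal{T}_U^{\widehat\pi}f)(s,a)]$, the two copies of $f(s,a)$ cancel and we are left with
$$
A_\pi(\widehat\pi, f) + \mathbb{E}_\pi[(f - \mathcal{T}_U^{\widehat\pi}f)(s,a)] = \mathbb{E}_\pi[f(s,\widehat\pi)] - \mathbb{E}_\pi[U(s,a)] - \gamma \mathbb{E}_\pi[\mathbb{E}_{s'\sim P(\cdot|s,a)}[f(s',\widehat\pi)]].
$$

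The middle term is immediately handled by the preliminaries identity $(1-\gamma) J_U(\pi) = \mathbb{E}_\pi[U(s,a)]$, so it contributes exactly the $-(1-\gamma)J_U(\pi)$ term appearing on the left. It then remains to establish a ``Bellman-flow'' telescoping identity for the scalar function $g(s) \coloneqq f(s,\widehat\pi)$, namely
$$
\mathbb{E}_\pi[g(s)] - \gamma \mathbb{E}_\pi[\mathbb{E}_{s'\sim P(\cdot|s,a)}[g(s')]] = (1-\gamma)\, g(s_0).
$$
To prove this I would unfold the expectations against $d^\pi$ using its definition $d^\pi(s,a) = (1-\gamma)\sum_{t\geq 0}\gamma^t P^\pi(s_t = s, a_t = a)$, giving $\mathbb{E}_\pi[g(s)] = (1-\gamma)\sum_{t\geq 0}\gamma^t \mathbb{E}^\pi[g(s_t)]$ and, since $s'$ given $(s_t,a_t)$ is distributed as $s_{t+1}$, $\mathbb{E}_\pi[\mathbb{E}_{s'}[g(s')]] = (1-\gamma)\sum_{t\geq 0}\gamma^t\mathbb{E}^\pi[g(s_{t+1})]$. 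Subtracting $\gamma$ times the second series from the first telescopes: every term with $t\geq 1$ cancels, and only the $t=0$ term $(1-\gamma)\mathbb{E}^\pi[g(s_0)] = (1-\gamma)g(s_0)$ survives, using that the initial state $s_0$ is deterministic. Since $g(s_0) = f(s_0,\widehat\pi)$, combining this with the $U$-term computation recovers $(1-\gamma)(f(s_0,\widehat\pi) - J_U(\pi))$, which is the claim.

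The computation is almost entirely definition-chasing; the only place requiring care is the telescoping step, where one must get the index shift right between the two discounted series and use that $s_0$ is fixed so that $\mathbb{E}^\pi[g(s_0)] = g(s_0)$. I expect this to be the main (and only mild) obstacle, and it is precisely the step that manufactures the $(1-\gamma)$ factor on the left-hand side.
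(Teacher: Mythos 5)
Your proof is correct and is essentially the paper's own argument run in reverse: the paper expands $\mathbb{E}_\pi[f(s, \widehat\pi)]$ via the definition of $d^\pi$ and performs the same one-step index shift, conditioning on $(s_t, a_t)$ to produce $\mathcal{T}_0^{\widehat\pi} f$, which is exactly your ``Bellman-flow'' identity for $g(s) = f(s, \widehat\pi)$, followed by the same use of $(1 - \gamma) J_U(\pi) = \mathbb{E}_\pi[U(s, a)]$. The only difference is presentational --- you expand the right-hand side and isolate the telescoping as a standalone identity, whereas the paper folds that same telescoping into the expansion of the left-hand side --- so despite your stated intent to avoid the trajectory-unrolling argument, your flow-identity step is that argument, and there is no substantive gap.
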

\begin{proof}
Note that
\begin{align*}
\mathbb{E}_{(s, a) \sim d^\pi} [ f(s, \widehat\pi) ]
&= (1 - \gamma) \mathbb{E}^\pi \left[ \sum_{t = 0}^\infty \gamma^t f(s_t, \widehat\pi) \right] \\
&= (1 - \gamma) \mathbb{E}^\pi \left[ f(s_0, \widehat\pi) + \sum_{t = 0}^\infty \gamma^t \mathbb{E}^\pi \left[ \gamma f(s_{t + 1}, \widehat\pi) \mid s_t, a_t \right] \right] \\
&= (1 - \gamma) f(s_0, \widehat\pi) + (1 - \gamma) \mathbb{E}^\pi \left[ \sum_{t = 0}^\infty \gamma^t \mathcal{T}_0^{\widehat\pi} f(s_t, a_t) \right] \\
&= (1 - \gamma) f(s_0, \widehat\pi) + \mathbb{E}_{(s, a) \sim d^\pi} [ \mathcal{T}_0^{\widehat\pi} f(s, a)].
\end{align*}
Rearranging, we get
\begin{align*}
(1 - \gamma) f(s_0, \widehat\pi)
&=
\mathbb{E}_{(s, a) \sim d^\pi} [ (f - \mathcal{T}_0^{\widehat\pi} f)(s, a) ] - \mathbb{E}_{(s, a) \sim d^\pi} [f(s, a) - f(s, \widehat\pi)] \\
&=
\mathbb{E}_{(s, a) \sim d^\pi} [ (f - \mathcal{T}_0^{\widehat\pi} f)(s, a) ] + A_\pi(\widehat\pi, f).
\end{align*}
Using $(1 - \gamma) J_U(\pi) = \mathbb{E}_{(s, a) \sim d^\pi}[U(s, a)]$, we get
\begin{align*}
(1 - \gamma)(f(s_0, \widehat\pi) - J_U(\pi))
&= \mathbb{E}_\pi [(f - \mathcal{T}_0^{\widehat\pi} f)(s, a)] + A_\pi(\widehat\pi, f) - \mathbb{E}_\pi[U(s, a)] \\
&= A_\pi(\widehat\pi, f) + \mathbb{E}_\pi[(f - \mathcal{T}_U^{\widehat\pi} f)(s, a)].
\end{align*}
\end{proof}

Note that when we set $f = Q_U^{\widehat\pi}$ in the lemma above, we recover the classical performance difference lemma.
Now, we state the second generalization of the performance difference lemma.
The same lemma is stated and proved in \textcite{cheng_adversarially_2022} and also used in \textcite{zhu_importance_2023}.
We state the lemma and provide a simpler proof below.

\begin{lemma}[Performance difference decomposition. Lemma 12 in \textcite{cheng_adversarially_2022}] \label{lemma:perf-diff-2}
For any policies $\pi, \widehat\pi, \mu : \mathcal{S} \rightarrow \Delta(\mathcal{A})$ and any functions $U: \mathcal{S} \times \mathcal{A} \rightarrow \mathbb{R}$ and $f: \mathcal{S} \times \mathcal{A} \rightarrow \mathbb{R}$, we have
$$
\begin{aligned}
(1 - \gamma)(J_U(\pi) - J_U(\widehat\pi))
&= \mathbb{E}_{\mu}[(f - \mathcal{T}^{\widehat\pi}_U f)(s, a)]
+ \mathbb{E}_{\pi}[(\mathcal{T}^{\widehat\pi}_U f - f)(s, a)] \\
&\quad\quad\quad+ \mathbb{E}_{\pi}[f(s, \pi) - f(s, \widehat\pi)]
+ A_{\mu}(\widehat\pi, f) - A_{\mu}(\widehat\pi, Q_U^{\widehat\pi}).
\end{aligned}
$$
\end{lemma}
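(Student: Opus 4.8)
The plan is to obtain the identity from three applications of the generalized performance difference lemma (Lemma~\ref{lemma:perf-diff-1}) together with the classical one (Lemma~\ref{lemma:perf-diff}). The guiding observation is that the right-hand side splits into a ``$\pi$-part'', namely $\mathbb{E}_\pi[(\mathcal{T}_U^{\widehat\pi}f - f)(s,a)] + \mathbb{E}_\pi[f(s,\pi) - f(s,\widehat\pi)]$, and a ``$\mu$-part'', namely $\mathbb{E}_\mu[(f - \mathcal{T}_U^{\widehat\pi}f)(s,a)] + A_\mu(\widehat\pi, f) - A_\mu(\widehat\pi, Q_U^{\widehat\pi})$, and that the $\mu$-part is precisely the discrepancy produced by applying Lemma~\ref{lemma:perf-diff-1} to the approximate critic $f$ rather than to the true value $Q_U^{\widehat\pi}$. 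I would therefore write $(1-\gamma)(J_U(\pi) - J_U(\widehat\pi))$ as a telescoping difference through the common anchor $f(s_0,\widehat\pi)$,
\[
(1-\gamma)(J_U(\pi) - J_U(\widehat\pi)) = (1-\gamma)(f(s_0,\widehat\pi) - J_U(\widehat\pi)) - (1-\gamma)(f(s_0,\widehat\pi) - J_U(\pi)),
\]
and handle the two pieces separately.

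First I would treat $(1-\gamma)(f(s_0,\widehat\pi) - J_U(\widehat\pi))$. Applying Lemma~\ref{lemma:perf-diff-1} at the data distribution $\mu$ gives $(1-\gamma)(f(s_0,\widehat\pi) - J_U(\mu)) = A_\mu(\widehat\pi, f) + \mathbb{E}_\mu[(f - \mathcal{T}_U^{\widehat\pi}f)(s,a)]$. Applying the same lemma with $f$ replaced by the true value $Q_U^{\widehat\pi}$ --- and using that $Q_U^{\widehat\pi}$ is the fixed point of $\mathcal{T}_U^{\widehat\pi}$, so the Bellman-residual term vanishes, together with $Q_U^{\widehat\pi}(s_0,\widehat\pi) = J_U(\widehat\pi)$ --- recovers the classical identity $(1-\gamma)(J_U(\widehat\pi) - J_U(\mu)) = A_\mu(\widehat\pi, Q_U^{\widehat\pi})$ of Lemma~\ref{lemma:perf-diff}. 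Subtracting the second from the first eliminates the unwanted $J_U(\mu)$ and yields
\[
(1-\gamma)(f(s_0,\widehat\pi) - J_U(\widehat\pi)) = \mathbb{E}_\mu[(f - \mathcal{T}_U^{\widehat\pi}f)(s,a)] + A_\mu(\widehat\pi, f) - A_\mu(\widehat\pi, Q_U^{\widehat\pi}),
\]
which is exactly the $\mu$-part.

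For the second piece I would apply Lemma~\ref{lemma:perf-diff-1} at the distribution $\pi$, obtaining $(1-\gamma)(f(s_0,\widehat\pi) - J_U(\pi)) = A_\pi(\widehat\pi, f) + \mathbb{E}_\pi[(f - \mathcal{T}_U^{\widehat\pi}f)(s,a)]$. Substituting both pieces into the telescoping difference and negating the second, the term $-\mathbb{E}_\pi[(f - \mathcal{T}_U^{\widehat\pi}f)(s,a)]$ becomes $\mathbb{E}_\pi[(\mathcal{T}_U^{\widehat\pi}f - f)(s,a)]$, and the term $-A_\pi(\widehat\pi, f) = \mathbb{E}_\pi[f(s,a) - f(s,\widehat\pi)]$ collapses to $\mathbb{E}_\pi[f(s,\pi) - f(s,\widehat\pi)]$ via the elementary identity $\mathbb{E}_{(s,a)\sim d^\pi}[f(s,a)] = \mathbb{E}_{(s,a)\sim d^\pi}[f(s,\pi)]$, since $a \sim \pi(\cdot \mid s)$ under $d^\pi$. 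Assembling these reproduces the stated equation. The only genuinely delicate step is the second one: recognizing that pairing $\mu$ with both $f$ and the Bellman fixed point $Q_U^{\widehat\pi}$ and subtracting is what manufactures the two advantage corrections $A_\mu(\widehat\pi, f) - A_\mu(\widehat\pi, Q_U^{\widehat\pi})$; everything after that is bookkeeping.
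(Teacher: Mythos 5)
Your proof is correct and essentially identical to the paper's: regrouping your two pieces (with the first further split through $J_U(\mu)$) gives exactly the paper's three-term telescoping $(J_U(\pi) - f(s_0,\widehat\pi)) + (f(s_0,\widehat\pi) - J_U(\mu)) + (J_U(\mu) - J_U(\widehat\pi))$, with Lemma~\ref{lemma:perf-diff-1} applied to the first two terms and the classical Lemma~\ref{lemma:perf-diff} to the third. The only cosmetic difference is that you re-derive the classical lemma by setting $f = Q_U^{\widehat\pi}$ in Lemma~\ref{lemma:perf-diff-1}, which the paper invokes directly (having already noted that specialization after Lemma~\ref{lemma:perf-diff-1}).
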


\begin{proof}
We have
\begin{align*}
(1 &- \gamma)(J_U(\pi) - J_U(\widehat\pi)) \\
&=
(1 - \gamma)(J_U(\pi) - f(s_0, \widehat\pi)) + (1 - \gamma)(f(s_0, \widehat\pi) - J_U(\mu))
+ (1 - \gamma)(J_U(\mu) - J_U(\widehat\pi)) \\
&=
- (A_{\pi}(\widehat\pi, f) + \mathbb{E}_{\pi} [(f - \mathcal{T}_U^{\widehat\pi} f)(s, a)])
+ (A_{\mu}(\widehat\pi, f) + \mathbb{E}_{\mu} [(f - \mathcal{T}_U^{\widehat\pi} f)(s, a)])
- A_{\mu}(\widehat\pi, Q_U^{\pi})
\end{align*}
where the second inequality uses the generalized performance difference lemma (Lemma~\ref{lemma:perf-diff-1}) for the first two terms and the classical performance difference lemma (Lemma~\ref{lemma:perf-diff}) for the third term.
Rearranging and observing that $A_{\pi}(\widehat\pi, f) = \mathbb{E}_{\pi}[f(s, \widehat\pi) - f(s, \pi)]$ complete the proof.
\end{proof}

Indeed, the lemma above is a generalization because setting $\mu = \pi$ reduces to the classical performance difference lemma (Lemma~\ref{lemma:perf-diff}).

\section{CONCENTRATION INEQUALITIES} \label{appendix:concentration}

In this section, we provide concentration inequalities for relating $\mathcal{E}_\mu$ and $A_\mu$ to the empirical versions $\mathcal{E}_\mathcal{D}$ and $A_\mathcal{D}$ respectively.
First, we show a concentration bound on $\mathbb{E}_\mathcal{D}[w(s, a) (f(s, a) - U(s, a) - \gamma f(s', \pi))]$, which will be used to show a concentration bound on $\mathcal{E}_\mathcal{D}(\pi, f, U)$.

\begin{lemma}[Concentration of Bellman error] \label{lemma:concentration-bellman-error}
Let $w : \mathcal{S} \times \mathcal{A} \rightarrow \mathbb{R}_+$ with $\Vert w \Vert \leq C_\infty$ and $\Vert w \Vert_{2, \mu} \leq C_{\ell_2}$.
Let $f : \mathcal{S} \times \mathcal{A} \rightarrow [0, \frac{1}{1 - \gamma}]$ and $U : \mathcal{S} \times \mathcal{A} \rightarrow [0, 1]$ be any functions.
Let $\pi : \mathcal{S} \rightarrow \Delta(\mathcal{A})$ be any policy.
With probability at least $1 - \delta$, we have
$$
\begin{aligned}
\vert
\mathbb{E}_\mu[w(s, a) (f - \mathcal{T}_U^\pi)(s, a)]
-
\mathbb{E}_\mathcal{D}&[w(s, a) (f(s, a) - U(s, a) - \gamma f(s', \pi))]
\vert \\
&\leq \mathcal{O} \left(
\frac{C_{\ell_2}}{1 - \gamma} \sqrt{
\frac{\log(1 / \delta)}{n}
}
+
\frac{C_\infty}{1 - \gamma} \frac{\log(1 / \delta)}{n}
\right).
\end{aligned}
$$
\end{lemma}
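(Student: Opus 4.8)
The plan is to recognize the left-hand side as the deviation of an empirical average of i.i.d.\ bounded random variables from its mean, and then apply Bernstein's inequality. For each sample $(s_j, a_j, s_j') \in \mathcal{D}$, define the random variable $X_j \coloneqq w(s_j, a_j)(f(s_j, a_j) - U(s_j, a_j) - \gamma f(s_j', \pi))$, so that $\mathbb{E}_\mathcal{D}[w(s, a)(f(s, a) - U(s, a) - \gamma f(s', \pi))] = \frac{1}{n}\sum_{j = 1}^n X_j$. The first step is to verify that $X_j$ has the intended mean. Conditioning on $(s_j, a_j)$ and using $s_j' \sim P(\cdot \mid s_j, a_j)$ gives $\mathbb{E}[f(s_j', \pi) \mid s_j, a_j] = \mathbb{E}_{s' \sim P(\cdot \mid s_j, a_j)}[f(s', \pi)]$, hence $\mathbb{E}[X_j \mid s_j, a_j] = w(s_j, a_j)(f - \mathcal{T}_U^\pi f)(s_j, a_j)$; taking the outer expectation over $(s_j, a_j) \sim d^\mu$ yields $\mathbb{E}[X_j] = \mathbb{E}_\mu[w(s, a)(f - \mathcal{T}_U^\pi f)(s, a)]$. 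Thus the quantity to bound is exactly $\lvert \frac{1}{n}\sum_{j} (X_j - \mathbb{E}[X_j]) \rvert$.

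Next I would establish the two ingredients Bernstein's inequality requires: an almost-sure bound and a variance bound on $X_j$. Since $f$ takes values in $[0, \frac{1}{1 - \gamma}]$ and $U$ in $[0, 1]$, the factor $f(s, a) - U(s, a) - \gamma f(s', \pi)$ lies in $[-\frac{1}{1 - \gamma}, \frac{1}{1 - \gamma}]$, the extreme negative value being $-1 - \frac{\gamma}{1 - \gamma} = -\frac{1}{1 - \gamma}$ and the extreme positive value being $\frac{1}{1 - \gamma}$. Combined with $\Vert w \Vert_\infty \leq C_\infty$, this gives the almost-sure bound $\lvert X_j \rvert \leq \frac{C_\infty}{1 - \gamma} \eqqcolon M$. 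For the variance I would bound $\mathrm{Var}(X_j) \leq \mathbb{E}[X_j^2] = \mathbb{E}_\mu[w(s, a)^2 (f - U - \gamma f(s', \pi))^2]$ and pull out the almost-sure bound $\frac{1}{(1 - \gamma)^2}$ on the squared factor, leaving $\frac{1}{(1 - \gamma)^2}\mathbb{E}_\mu[w(s, a)^2] = \frac{\Vert w \Vert_{2, \mu}^2}{(1 - \gamma)^2} \leq \frac{(C_{\ell_2})^2}{(1 - \gamma)^2}$.

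The final step is to invoke Bernstein's inequality for the i.i.d.\ centered variables $X_j - \mathbb{E}[X_j]$, which yields, with probability at least $1 - \delta$, a bound of the form $\sqrt{\frac{2\,\mathrm{Var}(X)\log(2/\delta)}{n}} + \frac{2 M \log(2/\delta)}{3 n}$. Substituting $\mathrm{Var}(X) \leq \frac{(C_{\ell_2})^2}{(1 - \gamma)^2}$ and $M = \frac{C_\infty}{1 - \gamma}$ reproduces the claimed $\mathcal{O}\!\left(\frac{C_{\ell_2}}{1 - \gamma}\sqrt{\frac{\log(1/\delta)}{n}} + \frac{C_\infty}{1 - \gamma}\frac{\log(1/\delta)}{n}\right)$ rate.

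I expect no serious obstacle: this is a routine Bernstein argument for a fixed $(w, f, U, \pi)$, with the union bound over the function classes deferred to the later concentration analysis of $\mathcal{E}_\mathcal{D}$. The one point requiring care, and indeed the entire reason for stating the bound in this split form, is to keep the variance controlled by $\Vert w \Vert_{2, \mu}$ rather than $\Vert w \Vert_\infty$, so that the leading $1/\sqrt{n}$ term carries only the $\ell_2$ concentrability $C_{\ell_2}$ while $C_\infty$ enters solely through the lower-order $1/n$ term. A cruder estimate $\mathrm{Var}(X_j) \leq M^2$ would replace $C_{\ell_2}$ by $C_\infty$ in the dominant term and defeat the purpose of the sharper $\ell_2$ assumption.
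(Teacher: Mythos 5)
Your proposal is correct and matches the paper's proof essentially line for line: the same definition of $X_j$, the same conditioning argument to identify $\mathbb{E}[X_j]$ with $\mathbb{E}_\mu[w(s,a)(f - \mathcal{T}_U^\pi f)(s,a)]$, the same almost-sure bound $\frac{C_\infty}{1-\gamma}$, the same variance bound via $\Vert w \Vert_{2,\mu} \leq C_{\ell_2}$, and the same application of Bernstein's inequality. If anything, your version is marginally more careful, since you bound $\mathrm{Var}(X_j)$ (including the randomness of $s'$) directly via $\mathbb{E}[X_j^2] \leq \frac{1}{(1-\gamma)^2}\Vert w\Vert_{2,\mu}^2$, whereas the paper writes the Bernstein variance term as $\mathrm{Var}_\mu[w(s,a)(f - \mathcal{T}_U^\pi f)(s,a)]$ before bounding it the same way.
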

\begin{proof}
Define
$$
X_j = w(s_j, a_j) (f(s_j, a_j) - U(s_j, a_j) - \gamma f(s_j', \pi)).
$$
Note that $\mathbb{E}_\mathcal{D}[w(s, a)(f(s, a) - U(s, a) - \gamma f(s', \pi))] = \frac{1}{n} \sum_{j = 1}^n X_j$.
By assumption of the data distribution, $X_1, \dots, X_n$ are i.i.d.
By the boundedness assumption on $w$ and $f$, we have $\vert X_j \vert \leq \mathcal{O}(C_\infty / (1 - \gamma))$.
Also, we have
\begin{align*}
\mathbb{E} [X_j]
&= \mathbb{E}_{(s, a) \sim d^\mu, s' \sim P(\cdot | s, a)} [w(s, a) (f(s, a) - U(s, a) - \gamma f(s', \pi))] \\
&= \mathbb{E}_{(s, a) \sim d^\mu} [\mathbb{E}_{s' \sim P(\cdot | s, a)} [w(s, a)(f(s, a) - U(s, a) - \gamma f(s', \pi)) | s, a]] \\
&= \mathbb{E}_\mu [ w(s, a)(f - \mathcal{T}_U^\pi f)(s, a)].
\end{align*}
By Bernstein's inequality, we have with probability at least $1 - \delta$ that
\begin{align*}
\Bigg\vert
\mathbb{E}_\mu [w(s, a)(f &- \mathcal{T}_U^\pi f)(s, a)] - \frac{1}{n} \sum_{j = 1}^n X_j
\Bigg\vert \\
&\leq
\mathcal{O}\left(
\sqrt{\frac{\text{Var}_\mu[w(s, a)(f - \mathcal{T}_U^\pi f)(s, a)] \log (1 / \delta)}{n}} + \frac{C_\infty \log(1 / \delta)}{(1 - \gamma)n}
\right)
\end{align*}
The variance term $\text{Var}_\mu[w(s, a)(f - \mathcal{T}_U^\pi f)(s, a)]$ can be bounded as follows.
\begin{align*}
\text{Var}_\mu[w(s, a)(f - \mathcal{T}_U^\pi f)(s, a)]
&\leq
\mathbb{E}_\mu[w(s, a)^2 (f - \mathcal{T}_U^\pi f)^2(s, a)] \\
&\leq
\mathcal{O}(\Vert w \Vert_{2, \mu}^2 / (1 - \gamma)^2) \\
&\leq
\mathcal{O}((C_{\ell_2})^2 / (1 - \gamma)^2)
\end{align*}
where the second inequality uses the boundedness assumption on $f$ and the last inequality uses the boundedness assumption on $w$.
Hence, we have
$$
\Bigg\vert
\mathbb{E}_\mu [w(s, a)(f - \mathcal{T}_U^\pi f)(s, a)] - \frac{1}{n} \sum_{j = 1}^n X_j
\Bigg\vert
\leq
\mathcal{O}\left(
\frac{C_{\ell_2}}{1 - \gamma} \sqrt{\frac{\log (1 / \delta)}{n}} + \frac{C_\infty \log(1 / \delta)}{(1 - \gamma)n}
\right).
$$
This completes the proof.
\end{proof}

The following lemma relates $\mathcal{E}_\mu$ to $\mathcal{E}_\mathcal{D}$.
The proof closely follows that of Lemma~4 in \textcite{zhu_importance_2023}, which shows the same result for a single reward function.

\begin{lemma}[Concentration of Bellman error term] \label{lemma:concentration}
Under Assumption~\ref{assumption:w-boundedness}, with probability at least $1 - \delta$, we have
\begin{align*}
\vert \mathcal{E}_\mu(\pi, f; R) - \mathcal{E}_\mathcal{D}(\pi, f; R) \vert &\leq \mathcal{O}(\epsilon_{\text{stat}}) \\
\vert \mathcal{E}_\mu(\pi, g_i; C_i) - \mathcal{E}_\mathcal{D}(\pi, g_i; C_i) \vert &\leq \mathcal{O}(\epsilon_{\text{stat}}) ~~\text{for all}~i = 1, \dots, I,
\end{align*}
for any $\pi \in \Pi$, $f \in \mathcal{F}$ and $g_i \in \mathcal{F}$, $i = 1, \dots, I$, where
$$
\epsilon_{\text{stat}} \coloneqq \frac{C_{\ell_2}}{1 - \gamma} \sqrt{\frac{\log (I \vert \mathcal{F} \vert \vert \Pi \vert \vert \mathcal{W} \vert / \delta)}{n}} + \frac{C_\infty \log(I \vert \mathcal{F} \vert \vert \Pi \vert \vert \mathcal{W} \vert / \delta)}{(1 - \gamma)n}.
$$
\end{lemma}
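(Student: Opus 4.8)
The plan is to reduce the quantity $|\mathcal{E}_\mu(\pi, f; U) - \mathcal{E}_\mathcal{D}(\pi, f; U)|$, which involves an outer maximum over $w \in \mathcal{W}$, to the per-$w$ Bellman error difference already controlled by Lemma~\ref{lemma:concentration-bellman-error}, and then to pay both for that maximum and for ranging over all $(\pi, f, U)$ through a single union bound over the finite classes.

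First I would strip away the outer maximum. Writing $A_w^\mu = \mathbb{E}_\mu[w(s,a)(f - \mathcal{T}_U^\pi f)(s,a)]$ and $A_w^\mathcal{D} = \mathbb{E}_\mathcal{D}[w(s,a)(f(s,a) - U(s,a) - \gamma f(s',\pi))]$, the definitions give $\mathcal{E}_\mu(\pi,f;U) = \max_{w \in \mathcal{W}} |A_w^\mu|$ and $\mathcal{E}_\mathcal{D}(\pi,f;U) = \max_{w \in \mathcal{W}} |A_w^\mathcal{D}|$. Using the elementary inequality $|\max_w x_w - \max_w y_w| \le \max_w |x_w - y_w|$ together with the reverse triangle inequality $\bigl| |A_w^\mu| - |A_w^\mathcal{D}| \bigr| \le |A_w^\mu - A_w^\mathcal{D}|$, I obtain
$$
|\mathcal{E}_\mu(\pi, f; U) - \mathcal{E}_\mathcal{D}(\pi, f; U)| \le \max_{w \in \mathcal{W}} |A_w^\mu - A_w^\mathcal{D}|.
$$
This converts the goal into a uniform-over-$w$ bound on exactly the Bellman error difference appearing in Lemma~\ref{lemma:concentration-bellman-error}.

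Next I would apply Lemma~\ref{lemma:concentration-bellman-error} pointwise. For each fixed tuple $(w, f, \pi)$ and each utility $U \in \{R, C_1, \dots, C_I\}$ it controls $|A_w^\mu - A_w^\mathcal{D}|$ by $\mathcal{O}\bigl(\frac{C_{\ell_2}}{1-\gamma}\sqrt{\log(1/\delta)/n} + \frac{C_\infty}{1-\gamma}\log(1/\delta)/n\bigr)$, where the boundedness of $w$ required there is supplied by Assumption~\ref{assumption:w-boundedness}. Since $\mathcal{W}$, $\mathcal{F}$, and $\Pi$ are finite and there are $I+1$ utilities, a union bound over the $|\mathcal{W}|\,|\mathcal{F}|\,|\Pi|\,(I+1)$ events replaces $\log(1/\delta)$ with $\log((I+1)|\mathcal{W}||\mathcal{F}||\Pi|/\delta) = \mathcal{O}(\log(I|\mathcal{F}||\Pi||\mathcal{W}|/\delta))$, which is precisely the logarithmic factor appearing in $\epsilon_{\text{stat}}$. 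On this single high-probability event the displayed inequality holds simultaneously for every $\pi \in \Pi$, every $f \in \mathcal{F}$ (resp. every $g_i \in \mathcal{F}$), and every utility, so taking the max over $w$ inside is legitimate and both claimed bounds follow at once.

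The only real subtlety is the order of operations: I must union-bound over $w \in \mathcal{W}$ \emph{before} taking the maximum, so that the per-$w$ concentration holds simultaneously for all $w$ and the interchange $|\mathcal{E}_\mu - \mathcal{E}_\mathcal{D}| \le \max_w |A_w^\mu - A_w^\mathcal{D}|$ can be combined with the pointwise estimates. Attempting to concentrate $\max_{w \in \mathcal{W}} |A_w^\mu|$ (or its empirical analogue) directly would otherwise demand uniform-convergence machinery over $\mathcal{W}$; finiteness of $\mathcal{W}$ is exactly what makes the crude union bound suffice and is the reason the result scales only with $\log|\mathcal{W}|$.
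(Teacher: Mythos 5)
Your proposal is correct and follows essentially the same route as the paper's proof: the paper likewise reduces $\vert \mathcal{E}_\mu - \mathcal{E}_\mathcal{D} \vert$ to the per-$w$ deviation (by evaluating both maxima at the population argmax $w^\star$ and using $\vert a \vert - \vert b \vert \leq \vert a - b \vert$, which is exactly your $\vert \max_w x_w - \max_w y_w \vert \leq \max_w \vert x_w - y_w \vert$ inequality unpacked), then applies Lemma~\ref{lemma:concentration-bellman-error} with a union bound over $\mathcal{W}$ followed by a union bound over $(h, \pi, U) \in \mathcal{F} \times \Pi \times \{R, C_1, \dots, C_I\}$. Your remark about ordering the union bound over $w$ before taking the maximum is the right way to see why the argument is sound, and the resulting $\log(I \vert \mathcal{F} \vert \vert \Pi \vert \vert \mathcal{W} \vert / \delta)$ factor matches the paper's $\epsilon_{\text{stat}}$.
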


\begin{proof}
It is enough to show that, with probability $1 - \delta$, we have
$\vert \mathcal{E}_\mu(\pi, h; U) - \mathcal{E}_\mathcal{D}(\pi, h; U) \vert \leq \mathcal{O}(\epsilon_{\text{stat}})$
for any policy $\pi \in \Pi$, any function $h \in \mathcal{F}$ and any function $U \in \{R, C_1, \dots, C_I\}$.
Fix $\pi \in \Pi$, $h \in \mathcal{F}$ and $U \in \{R, C_1, \dots, C_I\}$, and define
$$
X_j(w) = w(s_j, a_j) (h(s_j, a_j) - U(s_j, a_j) - \gamma h(s_j', \pi))
$$
where $w \in \mathcal{W}$.
By Lemma~\ref{lemma:concentration-bellman-error}, we have
$$
\Bigg\vert
\mathbb{E}_\mu [w(s, a)(h - \mathcal{T}_U^\pi h)(s, a)] - \frac{1}{n} \sum_{j = 1}^n X_j(w)
\Bigg\vert
\leq
\mathcal{O}\left(
\frac{C_{\ell_2}}{1 - \gamma} \sqrt{\frac{\log (1 / \delta)}{n}} + \frac{C_\infty \log(1 / \delta)}{(1 - \gamma)n}
\right).
$$
Since the inequality above holds for all $w \in \mathcal{W}$, it follows by a union bound over $w \in \mathcal{W}$ that
\begin{align*}
\mathcal{E}_\mu&(\pi, h; U) - \mathcal{E}_\mathcal{D}(\pi, h; U)
=
\max_{w \in \mathcal{W}} \left\vert \mathbb{E}_\mu \left[
w(s, a) (h - \mathcal{T}_U^\pi h)(s, a)
\right] \right\vert \\
&\quad\quad-
\max_{w \in \mathcal{W}} \left\vert \mathbb{E}_\mathcal{D} \left[
w(s, a)(h(s, a) - U(s, a) - \gamma h(s', \pi))
\right] \right\vert \\
&\leq
\left\vert \mathbb{E}_\mu \left[
w^\star(s, a) (h - \mathcal{T}_U^\pi h)(s, a)
\right] \right\vert
-
\left\vert \mathbb{E}_\mathcal{D} \left[
w^\star(s, a)(h(s, a) - U(s, a) - \gamma h(s', \pi))
\right] \right\vert \\
&\leq
\left\vert \mathbb{E}_\mu \left[
w^\star(s, a) (h - \mathcal{T}_U^\pi h)(s, a)
\right] -
\mathbb{E}_\mathcal{D} \left[
w^\star(s, a)(h(s, a) - U(s, a) - \gamma h(s', \pi))
\right] \right\vert \\
&=
\Bigg\vert
\mathbb{E}_\mu [w^\star(s, a)(h - \mathcal{T}_U^\pi h)(s, a)] - \frac{1}{n} \sum_{j = 1}^n X_j(w^\star)
\Bigg\vert \\
&\leq
\mathcal{O}\left(
\frac{C_{\ell_2}}{1 - \gamma} \sqrt{\frac{\log (\vert \mathcal{W} \vert / \delta)}{n}} + \frac{C_\infty \log(\vert \mathcal{W} \vert / \delta)}{(1 - \gamma)n}
\right)
\end{align*}
where in the first inequality, we use the notation $w^\star = \argmax_{w \in \mathcal{W}} \vert \mathbb{E}_\mu [w(s, a)(f - \mathcal{T}_U^\pi f)(s, a)] \vert$;
the second inequality follows by the identity $\vert a \vert - \vert b \vert \leq \vert a - b \vert$;
and the last inequality uses the previous result.
The bound of $\mathcal{E}_\mathcal{D}(\pi, h; U) - \mathcal{E}_\mu(\pi, h; U)$ follows similarly, and the union bound of the two bounds gives
$$
\vert \mathcal{E}_\mu(\pi, h; U) - \mathcal{E}_\mathcal{D}(\pi, h; U) \vert
\leq
\mathcal{O}\left(
\frac{C_{\ell_2}}{1 - \gamma} \sqrt{\frac{\log (\vert \mathcal{W} \vert / \delta)}{n}} + \frac{C_\infty \log(\vert \mathcal{W} \vert / \delta)}{(1 - \gamma)n}
\right).
$$
A union bound on all $(h, \pi, U) \in \mathcal{F} \times \Pi \times \{R, C_1, \dots, C_I\}$ completes the proof.
\end{proof}

The following lemma relates $A_\mu$ to $A_\mathcal{D}$.
The proof closely follows that of Lemma~5 in \textcite{zhu_importance_2023}, which shows the same result for a single reward function.

\begin{lemma}[Concentration of the advantage function] \label{lemma:concentration-advantage}
With probability at least $1 - \delta$, for any $\pi \in \Pi$, $f \in \mathcal{F}$, we have
$$
\vert A_\mu(\pi, f) - A_\mathcal{D}(\pi, f) \vert
\leq
\mathcal{O}\left(
\sqrt{\frac{\log (\vert \mathcal{F} \vert \vert \Pi \vert / \delta)}{n(1 - \gamma)^2}}
\right)
\leq
\mathcal{O}(\epsilon_{\text{stat}})
$$
where $\epsilon_{\text{stat}}$ is defined in Lemma~\ref{lemma:concentration}.
\end{lemma}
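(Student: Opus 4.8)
The plan is to recognize $A_\mu(\pi, f)$ and $A_\mathcal{D}(\pi, f)$ as the population and empirical means of a single bounded random variable, and then apply a standard Hoeffding bound followed by a union bound over the finite classes $\mathcal{F}$ and $\Pi$. First I would fix a policy $\pi \in \Pi$ and a function $f \in \mathcal{F}$ and define, for each sample $(s_j, a_j, s_j') \in \mathcal{D}$, the random variable $Y_j = f(s_j, \pi) - f(s_j, a_j)$. Since the pairs $(s_j, a_j)$ are drawn i.i.d.\ from $d^\mu$, the $Y_j$ are i.i.d.\ with $\mathbb{E}[Y_j] = \mathbb{E}_\mu[f(s, \pi) - f(s, a)] = A_\mu(\pi, f)$, and their empirical average is exactly $\frac{1}{n} \sum_{j = 1}^n Y_j = A_\mathcal{D}(\pi, f)$.

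The observation that makes this lemma simpler than Lemma~\ref{lemma:concentration} is that $Y_j$ involves neither an importance weight $w$ nor the next state $s_j'$, so there is no heavy-tailed contribution and a plain Hoeffding bound suffices (no Bernstein argument is needed, hence no $C_\infty/n$ term). Because $f$ takes values in $[0, \frac{1}{1 - \gamma}]$, both $f(s, \pi) = \sum_a \pi(a|s) f(s, a)$ and $f(s, a)$ lie in $[0, \frac{1}{1 - \gamma}]$, so $Y_j \in [-\frac{1}{1 - \gamma}, \frac{1}{1 - \gamma}]$ has range $\frac{2}{1 - \gamma}$. Hoeffding's inequality then gives, for the fixed pair $(\pi, f)$, that with probability at least $1 - \delta'$,
$$
\vert A_\mu(\pi, f) - A_\mathcal{D}(\pi, f) \vert \leq \mathcal{O}\left( \frac{1}{1 - \gamma} \sqrt{\frac{\log(1 / \delta')}{n}} \right).
$$

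Next I would take a union bound over all $(\pi, f) \in \Pi \times \mathcal{F}$ with $\delta' = \delta / (\vert \mathcal{F} \vert \vert \Pi \vert)$, which yields the claimed bound $\mathcal{O}(\sqrt{\log(\vert \mathcal{F} \vert \vert \Pi \vert / \delta) / (n(1 - \gamma)^2)})$ uniformly over the classes. Finally, to establish $\leq \mathcal{O}(\epsilon_{\text{stat}})$, I would compare against the leading term $\frac{C_{\ell_2}}{1 - \gamma} \sqrt{\log(I \vert \mathcal{F} \vert \vert \Pi \vert \vert \mathcal{W} \vert / \delta) / n}$ of $\epsilon_{\text{stat}}$: its logarithmic factor is the larger one, and crucially $C_{\ell_2} \geq 1$, since $\Vert d^\pi / d^\mu \Vert_{2, \mu}^2 \geq (\mathbb{E}_\mu[d^\pi / d^\mu])^2 = 1$ by Jensen's inequality, so the advantage bound is dominated by $\epsilon_{\text{stat}}$.

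Since every step is a textbook concentration argument, I do not expect a genuine obstacle; the only points requiring care are correctly bounding the range of $Y_j$ using the codomain of $\mathcal{F}$, and verifying $C_{\ell_2} \geq 1$ so that the final comparison with $\epsilon_{\text{stat}}$ goes through.
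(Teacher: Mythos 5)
Your proposal is correct and matches the paper's proof essentially verbatim: the paper likewise notes $\mathbb{E}[A_\mathcal{D}(\pi, f)] = A_\mu(\pi, f)$, bounds $\vert f(s, \pi) - f(s, a) \vert \leq \mathcal{O}(\frac{1}{1-\gamma})$, applies Hoeffding's inequality for a fixed pair, and finishes with a union bound over $(\pi, f) \in \Pi \times \mathcal{F}$. Your explicit verification that the bound is dominated by $\epsilon_{\text{stat}}$ via $C_{\ell_2} \geq 1$ (by Jensen) is a detail the paper leaves implicit, but it is a correct addition, not a departure.
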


\begin{proof}
Note that $\mathbb{E}[A_\mathcal{D}(\pi, f)] = A_\mu(\pi, f)$ and $\vert f(s_i, \pi) - f(s_i, a_i) \vert \leq \mathcal{O}(\frac{1}{1 - \gamma})$.
Fixing $\pi \in \mathcal{F}$ and $f \in \mathcal{F}$ and applying Hoeffding's inequality, we have with probability at least $1 - \delta$ that
$$
\vert A_\mu(\pi, f) - A_\mathcal{D}(\pi, f) \vert
\leq
\mathcal{O}\left(
\sqrt{\frac{\log (1 / \delta)}{n(1 - \gamma)^2}}
\right).
$$
Applying union bound on $(\pi, f) \in \Pi \times \mathcal{F}$ completes the proof.
\end{proof}

\section{PDCA PRODUCES A NEAR SADDLE POINT} \label{appendix:saddle-point}

In this section, we show that our algorithm PDCA (Algorithm~\ref{alg:pdapc}) produces a near saddle point.

\begin{lemma} \label{lemma:saddle-point}
Consider the policy $\bar\pi$ produced by the algorithm PDCA (Algorithm~\ref{alg:pdapc}) with threshold $\bm\tau$ and bound $B$.
Let $\bar{\bm\lambda} \coloneqq \frac{1}{K} \sum_{k = 1}^K \bm\lambda_k$ where $\bm\lambda_1, \dots, \bm\lambda_K \in B \bm\Delta^I$ is the sequence of Lagrange multipliers produced by the $\lambda$-player in the algorithm.
Then, under Assumption~\ref{assumption:value-realizability},\ref{assumption:concentrability},\ref{assumption:w-boundedness} and \ref{assumption:miw-realizability}, with probability at least $1 - \delta$, we have for any $\pi \in \text{Conv}(\Pi)$ satisfying $w^\pi \in \mathcal{W}$ and any $\bm\lambda \in B \bm\Delta^I$ that
$$
L(\pi, \bar{\bm\lambda}) \leq L(\bar\pi, \bm\lambda) + \frac{1 + 2B}{(1 - \gamma)^2} \epsilon_{\text{opt}}(K) + \mathcal{O}\left(
\frac{B \epsilon_{\text{stat}}}{1 - \gamma}
\right).
$$
\end{lemma}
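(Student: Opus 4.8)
The plan is to decompose the saddle-point gap $L(\pi, \bar{\bm\lambda}) - L(\bar\pi, \bm\lambda)$ into a $\pi$-player regret and a $\lambda$-player regret and bound each separately. Since $\bar\pi$ is a trajectory-level uniform mixture, its occupancy measure is $\tfrac1K\sum_k d^{\pi_k}$, so $L(\cdot,\bm\lambda)$ averages over the mixture, giving $L(\bar\pi,\bm\lambda) = \tfrac1K\sum_k L(\pi_k,\bm\lambda)$; and since $L(\pi,\cdot)$ is linear in $\bm\lambda$ with $\bar{\bm\lambda} = \tfrac1K\sum_k\bm\lambda_k$, also $L(\pi,\bar{\bm\lambda}) = \tfrac1K\sum_k L(\pi,\bm\lambda_k)$. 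Inserting $\pm\tfrac1K\sum_k L(\pi_k,\bm\lambda_k)$ then yields
\[
L(\pi, \bar{\bm\lambda}) - L(\bar\pi, \bm\lambda) = \underbrace{\tfrac1K\sum_{k}\big[L(\pi,\bm\lambda_k) - L(\pi_k,\bm\lambda_k)\big]}_{\text{$\pi$-player regret}} + \underbrace{\tfrac1K\sum_{k}\big[L(\pi_k,\bm\lambda_k) - L(\pi_k,\bm\lambda)\big]}_{\text{$\lambda$-player regret}}.
\]

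First I would handle the $\pi$-player term, whose $k$-th summand equals $(J_R(\pi) - J_R(\pi_k)) + \bm\lambda_k\cdot(J_{\bm C}(\pi_k) - J_{\bm C}(\pi))$. To each component I apply the performance difference decomposition (Lemma~\ref{lemma:perf-diff-2}) with $\widehat\pi = \pi_k$ and $f = f_k$ for the reward and $f = g_k^i$ for cost $i$. This produces two Bellman-error terms, $\mathbb{E}_\mu[(f_k - \mathcal{T}_R^{\pi_k}f_k)(s,a)]$ and $\mathbb{E}_\pi[(\mathcal{T}_R^{\pi_k}f_k - f_k)(s,a)]$; rewriting the second via the MIW identity $\mathbb{E}_\pi[\cdot] = \mathbb{E}_\mu[w^\pi(\cdot)]$ and invoking $w^\pi \in \mathcal{W}$ (Assumption~\ref{assumption:miw-realizability}) together with the constant weight, both are controlled by $\mathcal{E}_\mu(\pi_k, f_k; R)$, so their sum is at most $2\mathcal{E}_\mu(\pi_k, f_k; R)$. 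The leftover advantage-correction $A_\mu(\pi_k, f_k) - A_\mu(\pi_k, Q_R^{\pi_k})$ I bound through optimality of the reward critic: $f_k$ minimizes $2\mathcal{E}_\mathcal{D}(\pi_k,\cdot;R)+A_\mathcal{D}(\pi_k,\cdot)$, so comparing against $Q_R^{\pi_k}\in\mathcal{F}$ (Assumption~\ref{assumption:value-realizability}, with $\mathcal{E}_\mu(\pi_k,Q_R^{\pi_k};R)=0$) and passing from $\mathcal{D}$ to $\mu$ via Lemmas~\ref{lemma:concentration} and~\ref{lemma:concentration-advantage} gives $A_\mu(\pi_k, f_k) - A_\mu(\pi_k, Q_R^{\pi_k}) \lesssim -2\mathcal{E}_\mu(\pi_k, f_k; R)$. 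The two $\mathcal{E}_\mu$ contributions cancel, leaving $(1-\gamma)(J_R(\pi) - J_R(\pi_k)) \lesssim \mathbb{E}_\pi[f_k(s,\pi) - f_k(s,\pi_k)]$; the identical argument for the cost critic, whose objective carries the opposite sign on $A_\mathcal{D}$, gives $(1-\gamma)(J_{C_i}(\pi_k) - J_{C_i}(\pi)) \lesssim \mathbb{E}_\pi[g_k^i(s,\pi_k) - g_k^i(s,\pi)]$. Combining, the summand is exactly $\tfrac{1}{1-\gamma}\mathbb{E}_\pi[z_k(s,\pi) - z_k(s,\pi_k)]$ with $z_k = f_k + \bm\lambda_k\cdot(\bm\tau - \bm g_k)$ the oracle input. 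Since $\|z_k\|_\infty = \mathcal{O}((1+B)/(1-\gamma))$, rescaling $z_k$ into $[-1,1]$ before invoking the oracle of Definition~\ref{def:pi-player} and folding in the $1/(1-\gamma)$ factor bounds the averaged $\pi$-player regret by $\tfrac{1+2B}{(1-\gamma)^2}\epsilon_{\text{opt}}(K)$ up to $\mathcal{O}(\epsilon_{\text{stat}}/(1-\gamma))$ slack.

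Next I would bound the $\lambda$-player term, whose $k$-th summand simplifies to $(\bm\lambda_k - \bm\lambda)\cdot(\bm\tau - J_{\bm C}(\pi_k))$. Replacing $J_{\bm C}(\pi_k)$ by the OPE estimate $\bm h_k$ splits this into $(\bm\lambda_k - \bm\lambda)\cdot(\bm\tau - \bm h_k)$ plus an error $(\bm\lambda_k - \bm\lambda)\cdot(\bm h_k - J_{\bm C}(\pi_k))$. The first is nonpositive because $\bm\lambda_k$ greedily minimizes $\bm\lambda\cdot(\bm\tau - \bm h_k)$ over $B\bm\Delta^I$; the second is at most $\|\bm\lambda_k - \bm\lambda\|_1 \leq 2B$ times the OPE error, which the OPE oracle guarantees is $\mathcal{O}(\epsilon_{\text{stat}})$. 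Collecting this with the statistical slack carried over from the $\pi$-player bound produces the $\mathcal{O}(B\epsilon_{\text{stat}}/(1-\gamma))$ term, and summing the two regret bounds gives the statement.

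The hard part will be the $\pi$-player analysis: engineering the exact cancellation of the two $\mathcal{E}_\mu$ Bellman-error terms against the advantage-correction. This hinges on the factor $2$ in front of $\mathcal{E}_\mathcal{D}$ in the critic objectives, on the opposite signs of $A_\mathcal{D}$ in the reward critic~(\ref{eqn:reward-critic}) versus the cost critic~(\ref{eqn:cost-critic}), and on both $w^\pi$ and the constant weight lying in $\mathcal{W}$. The other delicate point is bookkeeping: tracking the high-probability union bounds behind $\epsilon_{\text{stat}}$ (Lemmas~\ref{lemma:concentration} and~\ref{lemma:concentration-advantage}) and verifying that dividing by $(1-\gamma)$ and rescaling $z_k$ propagate precisely the factors $(1-\gamma)^{-2}$ and $B$ that appear in the stated bound.
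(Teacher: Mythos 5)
Your proposal is correct and follows essentially the same route as the paper's proof (Lemmas~\ref{lemma:saddle-point-step-1} and~\ref{lemma:saddle-point-step-2}): the identical regret decomposition via linearity of $L$ in each argument, the performance difference decomposition of Lemma~\ref{lemma:perf-diff-2} combined with critic optimality against the realizable $Q_R^{\pi_k}$, $Q_{C_i}^{\pi_k}$ and the concentration bounds, the rescaled no-regret oracle applied to $z_k$, and the greedy $\lambda$-player argument with OPE error. Your phrasing of the advantage-correction bound as $A_\mu(\pi_k,f_k) - A_\mu(\pi_k,Q_R^{\pi_k}) \lesssim -2\mathcal{E}_\mu(\pi_k,f_k;R)$ followed by cancellation is just a rearrangement of the paper's sequential substitution through $\mathcal{E}_\mathcal{D}$ and back, not a genuinely different argument.
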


We prove Lemma~\ref{lemma:saddle-point} by bounding the regrets of the $\pi$-player and $\lambda$-player against their best actions in hindsight.

\subsection{Bounding the Regret of the $\pi$-Player} \label{appendix:pi-regret}

\begin{lemma} \label{lemma:saddle-point-step-1}
Under Assumption~\ref{assumption:value-realizability},\ref{assumption:w-boundedness} and \ref{assumption:miw-realizability}, with probability at least $1 - \delta$, the sequences $\pi_1, \dots, \pi_K \in \Pi$ and $\bm\lambda_1, \dots, \bm\lambda_K \in B \bm\Delta^I$ produced by the algorithm $\textsc{PDCA}$ using the cost threshold $\bm\tau$ and the bound $B$ satisfy
$$
L(\pi, \bar{\bm\lambda}) - \frac{1}{K} \sum_{k = 1}^K L(\pi_k, \bm\lambda_k)
\leq
\frac{1 + 2B}{(1 - \gamma)^2} \epsilon_{\text{opt}}(K) + \mathcal{O}\left(\frac{B \epsilon_{\text{stat}}}{1 - \gamma} \right)
$$
for all $\pi \in \text{Conv}(\Pi)$ with $w^\pi \in \mathcal{W}$ where $L(\pi, \bm\lambda) = J_R(\pi) + \bm\lambda \cdot (\bm\tau - J_{\bm{C}}(\pi))$.
\end{lemma}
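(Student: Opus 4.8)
The plan is to reduce the averaged primal regret to a single-round regret that the no-regret oracle can absorb. Because $\bar{\bm\lambda}=\frac1K\sum_k\bm\lambda_k$ and $L(\pi,\cdot)$ is affine, I would first rewrite $L(\pi,\bar{\bm\lambda})=\frac1K\sum_k L(\pi,\bm\lambda_k)$, so the left-hand side equals $\frac1K\sum_k\big[(J_R(\pi)-J_R(\pi_k))+\bm\lambda_k\cdot(J_{\bm C}(\pi_k)-J_{\bm C}(\pi))\big]$. The goal then becomes to replace each population value gap by the corresponding critic gap, round by round.

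The core step is to prove, for the reward critic $f_k$, that $(1-\gamma)(J_R(\pi)-J_R(\pi_k))\lesssim\mathbb{E}_\pi[f_k(s,\pi)-f_k(s,\pi_k)]$, and symmetrically $(1-\gamma)(J_{C_i}(\pi_k)-J_{C_i}(\pi))\lesssim\mathbb{E}_\pi[g_k^i(s,\pi_k)-g_k^i(s,\pi)]$ for each cost critic. For the reward bound I would apply the performance difference decomposition (Lemma~\ref{lemma:perf-diff-2}) with $U=R$, $\widehat\pi=\pi_k$, and third policy $\mu$, producing two Bellman-error terms $\mathbb{E}_\mu[(f_k-\mathcal{T}_R^{\pi_k}f_k)]$ and $-\mathbb{E}_\pi[(f_k-\mathcal{T}_R^{\pi_k}f_k)]$, a policy-gap term $\mathbb{E}_\pi[f_k(s,\pi)-f_k(s,\pi_k)]$, and $A_\mu(\pi_k,f_k)-A_\mu(\pi_k,Q_R^{\pi_k})$. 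Then I would invoke critic optimality: since $Q_R^{\pi_k}\in\mathcal{F}$ (Assumption~\ref{assumption:value-realizability}) and $\mathcal{E}_\mu(\pi_k,Q_R^{\pi_k};R)=0$, comparing the empirical loss of $f_k$ against that of $Q_R^{\pi_k}$ and passing to the population loss through the concentration bounds (Lemmas~\ref{lemma:concentration} and~\ref{lemma:concentration-advantage}) gives $2\mathcal{E}_\mu(\pi_k,f_k;R)+A_\mu(\pi_k,f_k)-A_\mu(\pi_k,Q_R^{\pi_k})\lesssim 0$. Substituting this and bounding each Bellman-error term in absolute value by $\mathcal{E}_\mu(\pi_k,f_k;R)$ — the $d^\mu$ term using that the constant weight $w^\mu\equiv 1$ lies in $\mathcal{W}$, and the $d^\pi$ term via the identity $\mathbb{E}_\pi[\cdot]=\mathbb{E}_\mu[w^\pi\cdot]$ with $w^\pi\in\mathcal{W}$ (Assumption~\ref{assumption:miw-realizability}) — makes the two $\mathcal{E}_\mu$ contributions cancel against the $2\mathcal{E}_\mu$ term, leaving exactly the reward bound with $\mathcal{O}(\epsilon_{\text{stat}})$ slack. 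The cost bound is identical except that the sign flip $-A_\mathcal{D}$ in the cost-critic objective is precisely what makes the analogous cancellation go through in the opposite direction.

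Finally I would recombine. Dividing the reward and cost bounds by $(1-\gamma)$ and weighting the cost bounds by $\lambda_k^i$, the policy-gap terms assemble into $\frac{1}{1-\gamma}\mathbb{E}_\pi[z_k(s,\pi)-z_k(s,\pi_k)]$ with $z_k=f_k+\bm\lambda_k\cdot(\bm\tau-\bm g_k)$ the actual input fed to the oracle, while the slack terms sum to $\mathcal{O}(B\epsilon_{\text{stat}}/(1-\gamma))$ using $\sum_i\lambda_k^i\le B$. Averaging over $k$ and applying the no-regret oracle (Definition~\ref{def:pi-player}) to the rescaled sequence $z_k/\Vert z_k\Vert_\infty$ — where $\Vert z_k\Vert_\infty\le\frac{1+2B}{1-\gamma}$ follows from $\Vert f_k\Vert_\infty,\Vert g_k^i\Vert_\infty,\tau_i\le\frac{1}{1-\gamma}$ and $\sum_i\lambda_k^i\le B$ — turns $\frac1K\sum_k\mathbb{E}_\pi[z_k(s,\pi)-z_k(s,\pi_k)]$ into $\frac{1+2B}{1-\gamma}\epsilon_{\text{opt}}(K)$, and the outer $\frac{1}{1-\gamma}$ factor yields the claimed $\frac{1+2B}{(1-\gamma)^2}\epsilon_{\text{opt}}(K)$ term.

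I expect the delicate step to be the reward/cost critic bound rather than the bookkeeping: getting the two Bellman-error terms from Lemma~\ref{lemma:perf-diff-2} to cancel precisely against the $2\mathcal{E}_\mu$ produced by critic optimality is exactly what forces the coefficient $2$ in the critic objectives and what consumes the MIW-realizability assumption, and it is the one place where the comparator must satisfy $\pi\in\text{Conv}(\Pi)$ with $w^\pi\in\mathcal{W}$ rather than merely $\pi\in\Pi$. Everything else is linearity in $\bm\lambda$, the simplex bound $\sum_i\lambda_k^i\le B$, and careful tracking of the statistical and optimization error constants.
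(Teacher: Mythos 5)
Your proposal is correct and follows essentially the same route as the paper's proof: linearity of $L(\pi,\cdot)$ in $\bm\lambda$, the performance-difference decomposition (Lemma~\ref{lemma:perf-diff-2}) with the two Bellman-error terms absorbed into $2\mathcal{E}_\mu$ via $w^\pi \in \mathcal{W}$ (and the implicit $w^\mu \equiv 1$), critic optimality against the realizable $Q_R^{\pi_k}$ and $Q_{C_i}^{\pi_k}$ with the sign-flipped cost objective, the concentration lemmas, and the no-regret oracle applied to $z_k = f_k + \bm\lambda_k \cdot (\bm\tau - \bm{g}_k)$ with $\Vert z_k \Vert_\infty \leq \frac{1+2B}{1-\gamma}$. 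The only cosmetic difference is that you invoke critic optimality directly at the population level (concluding $2\mathcal{E}_\mu(\pi_k,f_k;R)+A_\mu(\pi_k,f_k)-A_\mu(\pi_k,Q_R^{\pi_k})\lesssim 0$), whereas the paper round-trips through the empirical objectives $\mathcal{E}_\mathcal{D}, A_\mathcal{D}$ before returning to $\mathcal{E}_\mu(\pi_k,Q_R^{\pi_k};R)=0$; the two orderings are equivalent up to the same $\mathcal{O}(\epsilon_{\text{stat}})$ slack.
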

\begin{proof}
Fix a policy $\pi \in \text{Conv}(\Pi)$ satisfying $w^\pi \in \mathcal{W}$.
Such a policy exists by Assumption~\ref{assumption:miw-realizability}.
By the definition of $L(\cdot, \cdot)$, we get
$$
L(\pi, \bar{\bm\lambda}) - \frac{1}{K} \sum_{k = 1}^K L(\pi_k, \bm\lambda_k)
=
\frac{1}{K} \sum_{k = 1}^K (\underbrace{J_R(\pi) - J_R(\pi_k)}_{(a)}) + \frac{1}{K} \sum_{k = 1}^K \sum_{i = 1}^I \lambda_k^i(\underbrace{J_{C_i}(\pi_k) - J_{C_i}(\pi)}_{(b)}).
$$

By a union bound with $\delta$ scaled appropriately, the concentration bounds for $\mathcal{E}_\mathcal{D}$ and $A_\mathcal{D}$ in Lemma~\ref{lemma:concentration} and Lemma~\ref{lemma:concentration-advantage}, and the regret bound of the oracle used by the $\pi$-player (Definition~\ref{def:pi-player}) hold with probability at least $1 - \delta$.
For the rest of the proof, we assume these events hold.

\paragraph{Bounding $(a)$}
We use the performance difference lemma (Lemma~\ref{lemma:perf-diff-2}) to bound $(a)$ as follows.
\begin{align*}
(1 - \gamma) &(J_R(\pi) - J_R(\pi_k)) \\
&=
\mathbb{E}_\mu[(f_k - \mathcal{T}_R^{\pi_k} f_k)(s, a)]
+ \mathbb{E}_\pi[(\mathcal{T}_R^{\pi_k} f_k - f_k)(s, a)] \\
&\quad\quad\quad+ \mathbb{E}_\pi[f_k(s, \pi) - f_k(s, \pi_k)]
+ A_\mu(\pi_k, f_k) - A_\mu(\pi_k, Q_R^{\pi_k}) \\
&\leq
2 \mathcal{E}_\mu(\pi_k, f_k; R)
+ \mathbb{E}_\pi[f_k(s, \pi) - f_k(s, \pi_k)]
+ A_\mu(\pi_k, f_k) - A_\mu(\pi_k, Q_R^{\pi_k}) \\
&\leq
\mathbb{E}_\pi[f_k(s, \pi) - f_k(s, \pi_k)]
+ \underbrace{2 \mathcal{E}_\mathcal{D}(\pi_k, f_k; R)
+ A_\mathcal{D}(\pi_k, f_k)}_{(\star)} - A_\mathcal{D}(\pi_k, Q_R^{\pi_k})
+ \mathcal{O}(\epsilon_{\text{stat}})
\end{align*}
where the first inequality follows by $w^\pi \in \mathcal{W}$ which implies $ \vert \mathbb{E}_\pi[(f - \mathcal{T}_U^\pi f)(s, a)] \vert \leq \mathcal{E}_\mu(\pi, f; U)$;
and the second inequality follows by the concentration results in Lemma~\ref{lemma:concentration} and Lemma~\ref{lemma:concentration-advantage}.
Recall that the reward critic chooses $f_k \in \mathcal{F}$ that minimizes $2 \mathcal{E}_\mathcal{D} (\pi_k, \cdot; R) + A_\mathcal{D}(\pi_k, \cdot)$.
We have $Q_R^{\pi_k} \in \mathcal{F}$ by the realizability assumption (Assumption~\ref{assumption:value-realizability}).
Hence,
$$
2 \mathcal{E}_\mathcal{D}(\pi_k, f_k; R) + A_\mathcal{D}(\pi_k, f_k)
\leq
2 \mathcal{E}_\mathcal{D}(\pi_k, Q_R^{\pi_k}; R) + A_\mathcal{D}(\pi_k, Q_R^{\pi_k}).
$$
Using this inequality for bounding $(\star)$ and continuing the bound of $J_R(\pi) - J_R(\pi_k)$, we get
\begin{align*}
(1 - \gamma) (J_R(\pi) - J_R(\pi_k))
&\leq
\mathbb{E}_\pi[f_k(s, \pi) - f_k(s, \pi_k)]
+ 2 \mathcal{E}_\mathcal{D}(\pi_k, Q_R^{\pi_k}; R)
+ \mathcal{O}(\epsilon_{\text{stat}}) \\
&\leq
\mathbb{E}_\pi[f_k(s, \pi) - f_k(s, \pi_k)]
+ 2 \mathcal{E}_\mu(\pi_k, Q_R^{\pi_k}; R) + \mathcal{O}(\epsilon_{\text{stat}}) \\
&=
\mathbb{E}_\pi[f_k(s, \pi) - f_k(s, \pi_k)] + \mathcal{O}(\epsilon_{\text{stat}})
\end{align*}
where the last equality uses the fact that $Q_R^\pi$ solves $f - \mathcal{T}_R^\pi f = 0$, which gives $\mathcal{E}_\mu(\pi_k, Q_R^{\pi_k}; R) = \max_{w \in \mathcal{W}} \vert \mathbb{E}_\mu[ w(s, a) (Q_R^{\pi_k} - \mathcal{T}_R^{\pi_k} Q_R^{\pi_k})] \vert = 0$.

\paragraph{Bounding $(b)$}
Similarly, we can bound $(b)$ as follows.
\begin{align*}
(1 - \gamma)&(J_{C_i}(\pi_k) - J_{C_i}(\pi)) \\
&=
\mathbb{E}_\mu[(\mathcal{T}_{C_i}^{\pi_k} g_k^i - g_k^i)(s, a)]
+ \mathbb{E}_\pi[(g_k^i - \mathcal{T}_{C_i}^{\pi_k} g_k^i)(s, a)] \\
&\quad\quad\quad+ \mathbb{E}_\pi[g_k^i(s, \pi_k) - g_k^i(s, \pi)]
- A_\mu(\pi_k, g_k^i) + A_\mu(\pi_k, Q_{C_i}^{\pi_k}) \\
&\leq
2 \mathcal{E}_\mu(\pi_k, g_k^i; C_i)
+ \mathbb{E}_\pi[g_k^i(s, \pi_k) - g_k^i(s, \pi)]
- A_\mu(\pi_k, g_k^i) + A_\mu(\pi_k, Q_{C_i}^{\pi_k}) \\
&\leq
\mathbb{E}_\pi[g_k^i(s, \pi_k) - g_k^i(s, \pi)]
+ 2 \mathcal{E}_\mathcal{D}(\pi_k, g_k^i; C_i)
- A_\mathcal{D}(\pi_k, g_k^i) + A_\mathcal{D}(\pi_k, Q_{C_i}^{\pi_k}) + \mathcal{O}(\epsilon_{\text{stat}}) \\
&\leq
\mathbb{E}_\pi[g_k^i(s, \pi_k) - g_k^i(s, \pi)]
+ 2 \mathcal{E}_\mathcal{D}(\pi_k, Q_{C_i}^{\pi_k}; C_i)
+ \mathcal{O}(\epsilon_{\text{stat}}) \\
&\leq
\mathbb{E}_\pi[g_k^i(s, \pi_k) - g_k^i(s, \pi)]
+ 2 \mathcal{E}_\mu(\pi_k, Q_{C_i}^{\pi_k}; C_i) + \mathcal{O}(\epsilon_{\text{stat}}) \\
&=
\mathbb{E}_\pi[g_k^i(s, \pi_k) - g_k^i(s, \pi)] + \mathcal{O}(\epsilon_{\text{stat}})
\end{align*}
where the third inequality uses the realizability assumption (Assumption~\ref{assumption:value-realizability}) for $Q_{C_i}^{\pi_k} \in \mathcal{F}$ and the fact that the cost critic chooses $g_k^i \in \mathcal{F}$ that minimizes $2\mathcal{E}_\mathcal{D}(\pi_k, \cdot; C_i) - A_\mathcal{D}(\pi_k, \cdot)$.

\paragraph{Using the Property of $\pi$-Player}
Using the bounds for $(a)$ and $(b)$ and continuing, we get
\begin{align*}
\frac{1 - \gamma}{K} \sum_{k = 1}^K &(L(\pi, \bm\lambda_k) - L(\pi_k, \bm\lambda_k)) \\
&=
\frac{1 - \gamma}{K} \sum_{k = 1}^K (J_R(\pi) - J_R(\pi_k)) + \frac{1 - \gamma}{K} \sum_{k = 1}^K \sum_{i = 1}^I \lambda_k^i(J_{C_i}(\pi_k) - J_{C_i}(\pi)) \\
&\leq
\frac{1}{K} \sum_{k = 1}^K \Bigg(
\mathbb{E}_\pi [ f_k(s, \pi) - f_k(s, \pi_k) ] + \sum_{i = 1}^I \lambda_k^i \mathbb{E}_\pi [ g_k^i(s, \pi_k) - g_k^i(s, \pi) ]
\Bigg) + \mathcal{O} \left(B \epsilon_{\text{stat}} \right) \\
&=
\frac{1}{K} \sum_{k = 1}^K \mathbb{E}_\pi [z_k(s, \pi) - z_k(s, \pi_k)] + \mathcal{O} \left(B \epsilon_{\text{stat}} \right) \\
&\leq
\frac{1 + 2B}{1 - \gamma} \epsilon_{\text{opt}}(K) + \mathcal{O} \left(B \epsilon_{\text{stat}} \right)
\end{align*}
where $z_k = f_k + \sum_{i = 1}^I \lambda_k^i(\tau_i - g_k^i)$ and the last inequality follows by the property of the policy optimization oracle (Definition~\ref{def:pi-player}) employed by the $\pi$-player and the fact that $\vert z_k(s, a) \vert \leq \frac{1 + 2B}{1 - \gamma}$ for all $s \in \mathcal{S}$ and $a \in \mathcal{A}$.
Rearranging completes the proof.
\end{proof}

\subsection{Bound the Regret of the $\lambda$-Player}

\begin{lemma} \label{lemma:saddle-point-step-2}
Under Assumption~\ref{assumption:value-realizability},\ref{assumption:concentrability},\ref{assumption:w-boundedness} and \ref{assumption:miw-realizability}, with probability at least $1 - \delta$, the sequences $\pi_1, \dots, \pi_K \in \Pi$ and $\bm\lambda_1, \dots, \bm\lambda_K \in B \bm\Delta^I$ produced by the algorithm $\textsc{PDCA}$ using the cost threshold $\bm\tau$ and the bound $B$ satisfy
$$
\frac{1}{K} \sum_{k = 1}^K L(\pi_k, \bm{\lambda}_k)
\leq
\frac{1}{K} \sum_{k = 1}^K L(\pi_k, \bm\lambda) + \mathcal{O}\left(
\frac{B \epsilon_{\text{stat}}}{1 - \gamma}
\right)
$$
for all $\bm\lambda \in B \bm\Delta^I$ where $L(\pi, \bm\lambda) = J_R(\pi) + \bm\lambda \cdot (\bm\tau - J_{\bm{C}}(\pi))$.
\end{lemma}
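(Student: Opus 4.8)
The plan is to exploit the fact that, with the policy $\pi_k$ held fixed, the Lagrangian is affine in $\bm\lambda$, so the reward term cancels and the $\lambda$-player's regret reduces to a purely linear online optimization problem. Concretely, for every fixed $\bm\lambda \in B\bm\Delta^I$ and every $k$,
\[
L(\pi_k, \bm\lambda_k) - L(\pi_k, \bm\lambda) = (\bm\lambda_k - \bm\lambda)\cdot(\bm\tau - J_{\bm{C}}(\pi_k)),
\]
since the $J_R(\pi_k)$ contribution appears identically in both terms. Thus it suffices to bound $\frac{1}{K}\sum_k (\bm\lambda_k - \bm\lambda)\cdot(\bm\tau - J_{\bm{C}}(\pi_k))$. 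The difficulty is that the $\lambda$-player never observes the true loss vector $\bm\tau - J_{\bm{C}}(\pi_k)$; it only sees the OPE surrogate $\bm\tau - \bm{h}_k$.

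First I would split each summand into an ideal part and an estimation part,
\[
(\bm\lambda_k - \bm\lambda)\cdot(\bm\tau - J_{\bm{C}}(\pi_k)) = \underbrace{(\bm\lambda_k - \bm\lambda)\cdot(\bm\tau - \bm{h}_k)}_{(\mathrm{I})} + \underbrace{(\bm\lambda_k - \bm\lambda)\cdot(\bm{h}_k - J_{\bm{C}}(\pi_k))}_{(\mathrm{II})}.
\]
For term $(\mathrm{I})$ I would use that the $\lambda$-player acts greedily: $\bm\lambda_k = \argmin_{\bm\lambda \in B\bm\Delta^I} \bm\lambda \cdot (\bm\tau - \bm{h}_k)$, so $\bm\lambda_k \cdot (\bm\tau - \bm{h}_k) \leq \bm\lambda \cdot (\bm\tau - \bm{h}_k)$ for every fixed comparator $\bm\lambda \in B\bm\Delta^I$, whence $(\mathrm{I}) \leq 0$ term-by-term. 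This is precisely why a greedy follow-the-leader player suffices here: because the per-step loss is linear in $\bm\lambda$, the step-wise minimizer against the \emph{observed} loss also beats any fixed comparator in hindsight.

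For the estimation term $(\mathrm{II})$ I would invoke the OPE oracle guarantee. Each $h_k^i = \mathrm{OPE}(\pi_k, C_i)$ satisfies $\vert h_k^i - J_{C_i}(\pi_k)\vert \leq \mathcal{O}(\epsilon_{\text{stat}})$, \emph{provided} the oracle's coverage precondition $\Vert d^{\pi_k}/d^\mu\Vert_{2,\mu} \leq C_{\ell_2}$ holds, which is guaranteed for every $\pi_k \in \Pi$ by the all-policy concentrability assumption (Assumption~\ref{assumption:concentrability}). Taking a union bound over the finitely many policies in $\Pi$ and the $I$ cost functions --- the resulting log-cardinality $\log(I\vert\Pi\vert\vert\mathcal{F}\vert/\delta)$ being absorbed into $\epsilon_{\text{stat}}$ --- this holds simultaneously for all $k$, so $\Vert \bm{h}_k - J_{\bm{C}}(\pi_k)\Vert_\infty \leq \mathcal{O}(\epsilon_{\text{stat}})$. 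Since $\bm\lambda_k, \bm\lambda \in B\bm\Delta^I$ gives $\Vert \bm\lambda_k - \bm\lambda\Vert_1 \leq 2B$, Hölder's inequality yields $(\mathrm{II}) \leq 2B\cdot\mathcal{O}(\epsilon_{\text{stat}}) = \mathcal{O}(B\epsilon_{\text{stat}})$, which lies within $\mathcal{O}(B\epsilon_{\text{stat}}/(1-\gamma))$ because $1-\gamma \leq 1$.

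Averaging over $k = 1,\dots,K$ and using $(\mathrm{I}) \leq 0$ then gives the claimed bound. The only genuinely delicate step is the uniform control of the OPE error across all iterations: the crucial observation is that $\pi_k$ ranges over all of $\Pi$, so it is the \emph{all-policy} concentrability --- not single-policy --- that makes the OPE oracle applicable at every $k$, consistent with the remark in the main text that all-policy concentrability is required precisely to guarantee the $\lambda$-player's OPE estimates.
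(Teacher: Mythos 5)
Your proposal is correct and follows essentially the same route as the paper's proof: the same reduction of the $\lambda$-player's regret to $\frac{1}{K}\sum_{k}(\bm\lambda_k - \bm\lambda)\cdot(\bm\tau - J_{\bm{C}}(\pi_k))$, the same split into a surrogate term killed pointwise by the greedy choice of $\bm\lambda_k$ and an OPE-error term handled by a union bound over $\Pi \times \{C_1,\dots,C_I\}$ plus H\"older with $\Vert \bm\lambda_k - \bm\lambda \Vert_1 \leq 2B$. Your accounting is in fact marginally tighter (you get $\mathcal{O}(B\epsilon_{\text{stat}})$ where the paper allows itself the slack $\mathcal{O}(B\epsilon_{\text{stat}}/(1-\gamma))$), and your closing observation about why \emph{all-policy} concentrability is what licenses the OPE oracle at every iterate matches the paper's remark exactly.
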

\begin{proof}
Recall that the OPE oracle produces an estimate $h$ for the value of $\pi$ with respect to a utility function $U$ that satisfies $\vert J_U(\pi) - h \vert \leq \mathcal{O}(\frac{C_{\ell_2}}{1 - \gamma} \sqrt{\frac{\log(\vert \mathcal{F} \vert / \delta)}{n}}) \leq \mathcal{O}(\epsilon_{\text{stat}})$ with probability at least $1 - \delta$.
By applying a union bound on $(\pi, U) \in \Pi \times \{C_1, \dots, C_I\}$, we have with probability at least $1 - \delta$ that
$$
(1 - \gamma)\vert J_{C_i}(\pi_k) - h_k^i \vert \leq \mathcal{O}(\epsilon_{\text{stat}})
$$
for all $k = 1, \dots, K$ and all $i = 1, \dots, I$.
Hence,
\begin{align*}
\frac{1}{K} \sum_{k = 1}^K L(\pi_k, \bm\lambda_k) - \frac{1}{K} \sum_{k = 1}^K L(\pi_k, \bm\lambda) &=
\frac{1}{K} \sum_{k = 1}^K \sum_{i = 1}^I (\lambda_k^i - \lambda^i)(\tau_i - J_{C_i}(\pi_k)) \\
&\leq
\frac{1}{K} \sum_{k = 1}^K \sum_{i = 1}^I (\lambda_k^i - \lambda^i)(\tau_i - h_k^i)
+ \mathcal{O} \left(
\frac{B \epsilon_{\text{stat}}}{1 - \gamma}
\right).
\end{align*}
The first term in the last expression is $\frac{1}{K} \sum_{k = 1}^K \bm\lambda_k \cdot (\bm\tau - \bm{h}_k) - \frac{1}{K} \sum_{k = 1}^K \bm\lambda \cdot (\bm\tau - \bm{h}_k) \leq 0$ since the $\lambda$-player chooses $\bm\lambda_k$ greedily that minimizes $\bm\lambda \mapsto \bm\lambda \cdot(\bm\tau - \bm{h}_k)$, and we are done.
\end{proof}

\subsection{Proof of Lemma~\ref{lemma:saddle-point}}

Combining the results of Lemma~\ref{lemma:saddle-point-step-1} and Lemma~\ref{lemma:saddle-point-step-2}, we can show that the pair $(\bar\pi, \bar{\bm\lambda})$ is approximately a saddle point where $\bar\pi$ is the policy returned by PDCA and $\bar{\bm\lambda}$ is the average of the sequence of Lagrange multipliers $\bm\lambda_1, \dots, \bm\lambda_K$ produced by PDCA.

\begin{refproof}[Lemma~\ref{lemma:saddle-point}]
Fix a policy $\pi \in \text{Conv}(\Pi)$ and a Lagrange multiplier $\bm\lambda \in B \bm\Delta^I$.
Then,
\begin{align*}
L(\pi, \bar{\bm\lambda})
&\leq
\frac{1}{K} \sum_{k = 1}^K L(\pi_k, \bm\lambda_k) +
\frac{1 + 2B}{(1 - \gamma)^2} \epsilon_{\text{opt}} (K) + \mathcal{O}\left(\frac{B \epsilon_{\text{stat}}}{1 - \gamma} \right) \\
&\leq
\frac{1}{K} \sum_{k = 1}^K  L(\pi_k, \bm\lambda) + \frac{1 + 2B}{(1 - \gamma)^2} \epsilon_{\text{opt}}(K) + \mathcal{O}\left(
\frac{B \epsilon_{\text{stat}}}{1 - \gamma}
\right)
\end{align*}
where the first inequality uses Lemma~\ref{lemma:saddle-point-step-1} and the second inequality uses Lemma~\ref{lemma:saddle-point-step-2}.
Observing that $\frac{1}{K}\sum_{k = 1}^K L(\pi_k, \bm\lambda) = L(\bar\pi, \bm\lambda)$ by the linearity of $L(\cdot, \bm\lambda)$ completes the proof.
\end{refproof}

\section{PROPERTIES OF A NEAR SADDLE POINT} \label{appendix:saddle-point-property}

In this section, we study the properties of a near saddle point formally defined below.

\begin{definition} \label{def:near-saddle-point}
We say $(\bar{x}, \bar{y})$ is a $\xi$-near saddle point for a function $L(\cdot, \cdot)$ with respect to the input space $\mathcal{X} \times \mathcal{Y}$ if
$L(x, \bar{y}) \leq L(\bar{x}, y) + \xi$ for all $x \in \mathcal{X}$ and $y \in \mathcal{Y}$.
\end{definition}

\begin{lemma} \label{lemma:key1}
Suppose $(\bar\pi, \bar{\bm\lambda})$ is a $\xi$-near saddle point for $L(\cdot, \cdot)$ with respect to $\widetilde\Pi \times B \bm\Delta^I$ where $\widetilde\Pi \subseteq \text{Conv}(\Pi)$ is a class of mixtures of policies and at least one mixture policy in $\widetilde\Pi$ is feasible for $\mathcal{P}(\bm\tau)$.
Then, we have
\begin{align}
J_R(\bar\pi) &\geq J_R(\pi_c) - \xi \tag{Optimality} \\
J_{C_i}(\bar\pi) &\leq \tau_i + \frac{\xi}{B} + \frac{1}{B(1 - \gamma)}, \quad \text{for all}~i = 1, \dots I \tag{Feasibility}
\end{align}
where $\pi_c$ is any feasible policy in $\widetilde\Pi$.
\end{lemma}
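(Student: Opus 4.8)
The plan is to read everything off the defining inequality of a near saddle point, namely $L(\pi, \bar{\bm\lambda}) \le L(\bar\pi, \bm\lambda) + \xi$ for all $\pi \in \widetilde\Pi$ and $\bm\lambda \in B\bm\Delta^I$. I would fix the free policy argument to be the feasible comparator $\pi = \pi_c \in \widetilde\Pi$ throughout, and then choose the dual argument $\bm\lambda$ adversarially to extract each of the two desired bounds. Two elementary facts will be used repeatedly: $J_R$ takes values in $[0, \tfrac{1}{1-\gamma}]$, and since $\bar{\bm\lambda}$ is a convex combination of points in $B\bm\Delta^I$ it again lies in $B\bm\Delta^I$, so in particular $\bar{\bm\lambda} \ge \bm 0$ componentwise. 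Because $\pi_c$ is feasible, $\bm\tau - J_{\bm C}(\pi_c) \ge \bm 0$, and hence the penalty term in $L(\pi_c, \bar{\bm\lambda}) = J_R(\pi_c) + \bar{\bm\lambda}\cdot(\bm\tau - J_{\bm C}(\pi_c))$ is nonnegative, giving $L(\pi_c, \bar{\bm\lambda}) \ge J_R(\pi_c) \ge 0$.

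For the optimality bound I would set $\bm\lambda = \bm 0 \in B\bm\Delta^I$ on the right-hand side, which collapses $L(\bar\pi, \bm 0)$ to $J_R(\bar\pi)$. Combining with the lower bound $L(\pi_c, \bar{\bm\lambda}) \ge J_R(\pi_c)$ and the saddle-point inequality yields $J_R(\pi_c) \le J_R(\bar\pi) + \xi$, which is exactly the claimed optimality.

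For the feasibility bound I keep $\pi = \pi_c$ but now, for each fixed $i$, choose the vertex $\bm\lambda = B \bm e_i \in B\bm\Delta^I$ (where $\bm e_i$ is the $i$-th standard basis vector, so $\bm e_i \in \bm\Delta^I$) on the right-hand side, so that $L(\bar\pi, B\bm e_i) = J_R(\bar\pi) + B(\tau_i - J_{C_i}(\bar\pi))$. Rearranging the saddle-point inequality isolates $B(J_{C_i}(\bar\pi) - \tau_i) \le J_R(\bar\pi) - L(\pi_c, \bar{\bm\lambda}) + \xi$. I then upper bound the right-hand side using $J_R(\bar\pi) \le \tfrac{1}{1-\gamma}$ and $L(\pi_c, \bar{\bm\lambda}) \ge 0$, obtaining $B(J_{C_i}(\bar\pi) - \tau_i) \le \tfrac{1}{1-\gamma} + \xi$; dividing by $B$ gives $J_{C_i}(\bar\pi) \le \tau_i + \tfrac{\xi}{B} + \tfrac{1}{B(1-\gamma)}$.

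There is no genuinely hard step here; the only things to check are that the chosen dual vectors $\bm 0$ and $B\bm e_i$ really belong to $B\bm\Delta^I$, and that the signs line up so that feasibility of $\pi_c$ works in our favor in both derivations. The conceptual crux is simply the adversarial choice of $\bm\lambda$: taking $\bm\lambda = \bm 0$ switches off the penalty to read off the reward comparison, while taking $\bm\lambda = B\bm e_i$ maximally penalizes the $i$-th constraint and thereby converts the saddle-point gap into an approximate feasibility certificate whose slack is controlled by $1/B$.
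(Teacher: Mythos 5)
Your proof is correct and takes essentially the same route as the paper's: both read optimality off the saddle inequality with $\bm\lambda = \bm{0}$, and both obtain feasibility by placing the full dual mass $B$ on a violated constraint, then combining $L(\pi_c, \bar{\bm\lambda}) \geq J_R(\pi_c) \geq 0$ (feasibility of $\pi_c$ plus $\bar{\bm\lambda} \geq \bm{0}$) with $J_R(\bar\pi) \leq \frac{1}{1 - \gamma}$. The only cosmetic difference is that you choose $\bm\lambda = B \bm{e}_i$ separately for each $i$, whereas the paper sets $\lambda_j = B$ only at $j = \argmin_{i \in [I]}(\tau_i - J_{C_i}(\bar\pi))$ and then propagates via $m \leq \tau_i - J_{C_i}(\bar\pi)$; the resulting bounds are identical.
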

\begin{proof}
We first prove $J_R(\bar\pi) \geq J_R(\pi_c) - \xi$, near optimality of $\bar\pi$.

\paragraph{Optimality}

Since $(\bar\pi, \bar{\bm\lambda})$ is a $\xi$-near saddle point for $L(\cdot, \cdot)$ with respect to $\widetilde\Pi \times B \bm\Delta^I$ and $\pi_c \in \widetilde\Pi$, we have $L(\pi_c, \bar{\bm\lambda}) \leq L(\bar\pi, \bm\lambda) + \xi$ for all $\bm\lambda \in B \bm\Delta^I$.
Choosing $\bm\lambda = \bm{0}$, we get
$$
L(\pi_c, \bar{\bm\lambda}) \leq L(\bar\pi, 0) + \xi = J_R(\bar\pi) + \xi.
$$
Rearranging, we get
\begin{align*}
J_R(\bar\pi)
&\geq
J_R(\pi_c) + \bar{\bm\lambda} \cdot (\bm\tau - J_{\bm{C}}(\pi_c)) - \xi \\
&\geq
J_R(\pi_c) - \xi
\end{align*}
where the second inequality uses the feasibility of $\pi_c$ for $\mathcal{P}(\bm\tau)$.
This proves the near optimality of $\bar\pi$ with respect to $\pi_c$.

\paragraph{Feasibility}

Now, to prove near feasibility of $\bar\pi$, recall from the proof of the near optimality that $L(\pi_c, \bar{\bm\lambda}) \leq L(\bar\pi, \bm\lambda) + \xi$ for all $\bm\lambda \in B \bm\Delta^I$ holds since $(\bar\pi, \bar{\bm\lambda})$ is a $\xi$-near saddle point.
Choosing $\bm\lambda$ such that $\lambda_j = B$ for $j = \argmin_{i \in [I]} (\tau_i - J_{C_i}(\bar\pi))$ and $\lambda_j = 0$ for other $j$'s, and defining $m = \min_{i \in [I]}(\tau_i - J_{C_i}(\bar\pi))$, we get
$$
L(\pi_c, \bar{\bm\lambda}) \leq L(\bar\pi, \bm\lambda) + \xi = J_R(\bar\pi) + Bm + \xi.
$$
On the other hand, the feasibility of $\pi_c$ for $\mathcal{P}(\bm\tau)$ gives
$$
L(\pi_c, \bar{\bm\lambda})
=
J_R(\pi_c) + \bar{\bm\lambda} \cdot (\bm\tau - J_{\bm{C}}(\pi_c))
\geq
J_R(\pi_c).
$$
Combining the previous two inequalities, we get
$$
Bm + \xi
\geq
J_R(\pi_c) - J_R(\bar\pi)
\geq
-\frac{1}{1 - \gamma}
$$
where the last inequality uses $0 \leq J_R(\cdot) \leq \frac{1}{1 - \gamma}$.
Rearranging, and using the fact that $m \leq \tau_i - J_{C_i}(\bar\pi)$ for all $i = 1, \dots, I$, we get
$$
\tau_i - J_{C_i}(\bar\pi) \geq m \geq
-\frac{1}{B(1 - \gamma)} - \frac{\xi}{B}
$$
for all $i = 1, \dots, I$.
and it follows that
$$
J_{C_i}(\bar\pi) \leq \tau_i + \frac{\xi}{B} + \frac{1}{B(1 - \gamma)}.
$$
\end{proof}

Now, we study the case where PDCA is run with a tightened cost threshold $\bm\tau - \eta \bm{1}$ where $\eta \geq 0$.
We denote by $L_\eta(\pi, \bm\lambda) = J_R(\pi) + \bm\lambda \cdot (\bm\tau - \eta \bm{1} - J_{\bm{C}}(\pi))$ the Lagrangian for the tightened problem $\mathcal{P}(\bm\tau - \eta \bm{1})$.
The following lemma shows the property of a $\xi$-near saddle point for $L_\eta(\cdot, \cdot)$.

\begin{lemma} \label{lemma:key2}
Assume that Slater's condition (Assumption~\ref{assumption:slater}) holds and that $\eta < \frac{\phi}{1 - \gamma}$ so that $\mathcal{P}(\bm\tau - \eta \bm{1})$ also satisfies Slater's condition.
Suppose $(\bar\pi, \bar{\bm\lambda})$ is a $\xi$-near saddle point for $L_\eta(\cdot, \cdot)$ with respect to $\widetilde\Pi \times B \bm\Delta^I$.
Let $(\pi_\eta^\star, \bm\lambda_\eta^\star)$ be a primal-dual solution to $\mathcal{P}(\bm\tau - \eta \bm{1})$ and $\pi_\eta^\star \in \widetilde\Pi$.
Assume $B > \Vert \bm\lambda_\eta^\star \Vert_1$.
Then, we have
\begin{align}
J_R(\bar\pi) &\geq J_R(\pi_\eta^\star) - \xi \tag{Optimality} \\
J_{C_i}(\bar\pi) &\leq \tau_i - \eta + \frac{\xi}{B - \Vert \bm\lambda_\eta^\star \Vert_1},\quad \text{for all}~i = 1, \dots, I \tag{Feasibility}
\end{align}
\end{lemma}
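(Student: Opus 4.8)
The plan is to mirror the two-part structure of Lemma~\ref{lemma:key1}, but to exploit strong duality of the tightened problem $\mathcal{P}(\bm\tau - \eta \bm{1})$ so that the optimal dual variable $\bm\lambda_\eta^\star$ replaces the crude origin $\bm{0}$ as the reference dual point. First I would record the consequence of the hypotheses on $\eta$: since $\eta < \frac{\varphi}{1 - \gamma}$, Slater's condition transfers from $\mathcal{P}(\bm\tau)$ to $\mathcal{P}(\bm\tau - \eta \bm{1})$, so strong duality holds and $(\pi_\eta^\star, \bm\lambda_\eta^\star)$ is a genuine saddle point of $L_\eta$ over $\text{Conv}(\Pi) \times \mathbb{R}_+^I$. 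In particular, for every $\pi \in \text{Conv}(\Pi)$ we have $L_\eta(\pi, \bm\lambda_\eta^\star) \leq L_\eta(\pi_\eta^\star, \bm\lambda_\eta^\star) = J_R(\pi_\eta^\star)$, the final equality being complementary slackness. This bound, evaluated at $\pi = \bar\pi$, is the engine of the feasibility argument.

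For optimality I would repeat the argument of Lemma~\ref{lemma:key1} almost verbatim: since $\pi_\eta^\star \in \widetilde\Pi$, the near saddle point inequality evaluated with $\bm\lambda = \bm{0}$ gives $L_\eta(\pi_\eta^\star, \bar{\bm\lambda}) \leq J_R(\bar\pi) + \xi$, while feasibility of $\pi_\eta^\star$ for $\mathcal{P}(\bm\tau - \eta \bm{1})$ together with $\bar{\bm\lambda} \geq \bm{0}$ forces $L_\eta(\pi_\eta^\star, \bar{\bm\lambda}) \geq J_R(\pi_\eta^\star)$. Chaining these yields $J_R(\bar\pi) \geq J_R(\pi_\eta^\star) - \xi$.

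The feasibility bound is where the improvement appears, and the key step is a sharper choice of dual test point. Let $j^\star = \argmin_{i \in [I]}(\tau_i - \eta - J_{C_i}(\bar\pi))$ and $m = \tau_{j^\star} - \eta - J_{C_{j^\star}}(\bar\pi)$. Rather than placing all the mass $B$ on coordinate $j^\star$ (as in Lemma~\ref{lemma:key1}), I would test the near saddle point inequality against $\bm\lambda = \bm\lambda_\eta^\star + (B - \Vert \bm\lambda_\eta^\star \Vert_1) \bm{e}_{j^\star}$, the optimal dual shifted by the residual mass on the most-violated coordinate (here $\bm{e}_{j^\star}$ is the $j^\star$-th standard basis vector). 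This point is admissible: $\bm\lambda_\eta^\star \geq \bm{0}$, its $\ell_1$ norm is $\Vert \bm\lambda_\eta^\star \Vert_1 < B$ by assumption, and adding the residual mass to a single coordinate keeps all entries nonnegative while raising the $\ell_1$ norm to exactly $B$, so $\bm\lambda \in B\bm\Delta^I$. With this choice the evaluation splits cleanly as $L_\eta(\bar\pi, \bm\lambda) = L_\eta(\bar\pi, \bm\lambda_\eta^\star) + (B - \Vert \bm\lambda_\eta^\star \Vert_1)\, m$.

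Finally I would assemble the chain $J_R(\pi_\eta^\star) \leq L_\eta(\pi_\eta^\star, \bar{\bm\lambda}) \leq L_\eta(\bar\pi, \bm\lambda) + \xi = L_\eta(\bar\pi, \bm\lambda_\eta^\star) + (B - \Vert \bm\lambda_\eta^\star \Vert_1) m + \xi \leq J_R(\pi_\eta^\star) + (B - \Vert \bm\lambda_\eta^\star \Vert_1) m + \xi$, where the first inequality is feasibility of $\pi_\eta^\star$ with $\bar{\bm\lambda} \geq \bm{0}$, the second is the near saddle point property, and the last uses the saddle point bound $L_\eta(\bar\pi, \bm\lambda_\eta^\star) \leq J_R(\pi_\eta^\star)$ from the first paragraph. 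Cancelling $J_R(\pi_\eta^\star)$ leaves $(B - \Vert \bm\lambda_\eta^\star \Vert_1) m \geq -\xi$; dividing by the positive quantity $B - \Vert \bm\lambda_\eta^\star \Vert_1$ and using $m \leq \tau_i - \eta - J_{C_i}(\bar\pi)$ for every $i$ gives the claimed feasibility bound. The main obstacle is the idea behind the shifted test point rather than any computation: one must see that anchoring at $\bm\lambda_\eta^\star$ (legitimized by strong duality) replaces the lossy $\frac{1}{B(1-\gamma)}$ slack of Lemma~\ref{lemma:key1} with the tighter denominator $B - \Vert \bm\lambda_\eta^\star \Vert_1$. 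The remaining care points, namely that the tightened problem inherits Slater's condition so $\bm\lambda_\eta^\star$ exists, and that the shifted test point is admissible, are routine once the construction is in place.
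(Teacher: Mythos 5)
Your proof is correct and follows essentially the same route as the paper: the optimality half is identical, and the feasibility half rests on the same ingredients, namely strong duality and complementary slackness of $(\pi_\eta^\star, \bm\lambda_\eta^\star)$ for the tightened problem, feasibility of $\pi_\eta^\star$, and the near saddle point inequality tested at a dual point carrying total mass $B$ on the most violated coordinate. Your shifted test point $\bm\lambda_\eta^\star + (B - \Vert \bm\lambda_\eta^\star \Vert_1)\,\bm{e}_{j^\star}$ merely merges into a single chain what the paper states as two separate bounds on $J_R(\pi_\eta^\star) - J_R(\bar\pi)$ (a lower bound from $L_\eta(\bar\pi, \bm\lambda_\eta^\star) \leq J_R(\pi_\eta^\star)$ and an upper bound from testing at $B\bm{e}_{j^\star}$), arriving at the identical inequality $(B - \Vert \bm\lambda_\eta^\star \Vert_1)\, m \geq -\xi$.
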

\begin{proof}
We first prove near optimality of $\bar\pi$.

\paragraph{Optimality}

Since $(\bar\pi, \bar{\bm\lambda})$ is a $\xi$-near saddle point for $L_\eta(\cdot, \cdot)$ with respect to $\widetilde\Pi \times B \bm\Delta^I$ and $\pi_\eta^\star \in \widetilde\Pi$, we have $L_\eta(\pi_\eta^\star, \bar{\bm\lambda}) \leq L_\eta(\bar\pi, \bm\lambda) + \xi$ for all $\bm\lambda \in B \bm\Delta^I$.
Choosing $\bm\lambda = \bm{0}$, we get
$$
L_\eta(\pi_\eta^\star, \bar{\bm\lambda}) \leq L_\eta(\bar\pi, 0) + \xi = J_R(\bar\pi) + \xi.
$$
Rearranging, we get
$$
J_R(\bar\pi)
\geq
J_R(\pi_\eta^\star) + \bar{\bm\lambda} \cdot (\bm\tau - \eta \bm{1} - J_{\bm{C}}(\pi_\eta^\star)) - \xi
\geq
J_R(\pi_\eta^\star) - \xi
$$
where the second inequality uses the feasibility of $\pi_\eta^\star$ for $\mathcal{P}(\bm\tau - \eta \bm{1})$.
Now, we prove feasibility of $\bar\pi$.

\paragraph{Feasibility}

Recall that $(\pi_\eta^\star, \bm\lambda_\eta^\star)$ is a primal-dual solution to the optimization problem $\mathcal{P}(\bm\tau - \eta \bm{1})$ and $L_\eta(\cdot, \cdot)$ is the Lagrangian function corresponding to the problem $\mathcal{P}(\bm\tau - \eta \bm{1})$.
By strong duality, $(\pi_\eta^\star, \bm\lambda_\eta^\star)$ is a saddle point for $L_\eta(\cdot, \cdot)$ with respect to $\text{Conv}(\Pi) \times \mathbb{R}_+^I$.
Hence, we have
$$
L_\eta(\bar\pi, \bm\lambda_\eta^\star)
\leq
L_\eta(\pi_\eta^\star, \bm\lambda_\eta^\star)
=
J_R(\pi_\eta^\star) + \bm\lambda_\eta^\star \cdot (\bm\tau - \eta \bm{1} - J_{\bm{C}}(\pi_\eta^\star))
=
J_R(\pi_\eta^\star)
$$
where the first inequality follows from the fact that $(\pi_\eta^\star, \bm\lambda_\eta^\star)$ is a saddle point of $L_\eta(\cdot, \cdot)$ and the last equality follows from the complementary slackness property of the solution $(\pi_\eta^\star, \bm\lambda_\eta^\star)$.
Rearranging, we get
\begin{equation} \label{eqn:perf-diff-lower-bound}
J_R(\pi_\eta^\star) - J_R(\bar\pi)
\geq
\bm\lambda_\eta^\star \cdot (\bm\tau - \eta \bm{1} - J_{\bm{C}}(\bar\pi))
\geq
(m - \eta) \Vert \bm\lambda_\eta^\star \Vert_1
\end{equation}
where we define $m = \min_{i \in [I]} (\tau_i - J_{C_i}(\bar\pi))$.
Now, to upper bound $J_R(\pi_\eta^\star) - J_R(\bar\pi)$, we first use the feasibility of $\pi_\eta^\star$ for $\mathcal{P}(\bm\tau - \eta \bm{1})$ as follows.
$$
L_\eta(\pi_\eta^\star, \bar{\bm\lambda})
=
J_R(\pi_\eta^\star) + \bar{\bm\lambda} \cdot (\bm\tau - \eta \bm{1} - J_{\bm{C}}(\pi_\eta^\star))
\geq
J_R(\pi_\eta^\star).
$$
On the other hand, since $(\bar\pi, \bar{\bm\lambda})$ is a $\xi$-near saddle point for $L_\eta(\cdot, \cdot)$ with respect to $\widetilde\Pi \times B \bm\Delta^I$ and $\pi_\eta^\star \in \widetilde\Pi$, we have
$L_\eta(\pi_\eta^\star, \bar{\bm\lambda}) \leq L_\eta(\bar\pi, \bm\lambda) + \xi$ for any $\bm\lambda \in B \bm\Delta^I$.
By choosing $\bm\lambda$ such that $\lambda_j = B$ for $j = \argmin_{i \in [I]} (\tau_i - J_{C_i}(\bar\pi))$ and recalling $m = \min_{i \in [I]} (\tau_i - J_{C_i}(\bar\pi))$, we get
$$
L_\eta(\pi_\eta^\star, \bar{\bm\lambda})
\leq
L_\eta(\bar\pi, \bm\lambda) + \xi
= J_R(\bar\pi) + B (m - \eta) + \xi.
$$
Combining the previous two results (upper bound and lower bound of $L_\eta(\pi_\eta^\star, \bar{\bm\lambda})$), we get
\begin{equation} \label{eqn:perf-diff-upper-bound}
J_R(\pi_\eta^\star) - J_R(\bar\pi) \leq B (m - \eta) + \xi.
\end{equation}
Combining the lower bound (\ref{eqn:perf-diff-lower-bound}) and the upper bound (\ref{eqn:perf-diff-upper-bound}) of $J_R(\pi_\eta^\star) - J_R(\bar\pi)$ and rearranging, we get
$$
m - \eta \geq \frac{- \xi}{B - \Vert \bm\lambda_\eta^\star \Vert_1}.
$$
Since $\tau_i - \eta - J_{C_i}(\bar\pi) \geq m - \eta$ for all $i \in [I]$, rearranging the above gives
$$
J_{C_i}(\bar\pi) \leq \tau_i - \eta + \frac{\xi}{B - \Vert \bm\lambda_\eta^\star \Vert_1}
$$
for all $i = 1, \dots, I$.
\end{proof}
Note that the lemma above requires $B > \Vert \bm\lambda_\eta^\star \Vert_1$.
We will show in Theorem~\ref{thm:main-3}  that with Slater's condition, we can upper bound $\Vert \bm\lambda_\eta^\star \Vert_1$ so that we can choose $B$ that indeed satisfies $B > \Vert \bm\lambda_\eta^\star \Vert_1$.

\section{PROOF OF MAIN RESULTS} \label{appendix:main-results}

\subsection{Proof of Theorem~\ref{thm:main-1}}

We restate the theorem for convenience:

\mainthm*

\begin{proof}
Recall from Lemma~\ref{lemma:concentration} that $\epsilon_{\text{stat}} \coloneqq \frac{C_{\ell_2}}{1 - \gamma} \sqrt{\frac{\log (I \vert \mathcal{F} \vert \vert \Pi \vert \vert \mathcal{W} \vert / \delta)}{n}} + \frac{C_\infty \log(I \vert \mathcal{F} \vert \vert \Pi \vert \vert \mathcal{W} \vert / \delta)}{(1 - \gamma)n}$.
The bound on $n$ guarantees
$$
\epsilon_{\text{stat}} \leq \mathcal{O}((1 - \gamma) \varphi\,\epsilon).
$$
Invoking Lemma~\ref{lemma:saddle-point} with cost threshold $\bm\tau$ and bound $B = 1 + \frac{1}{\varphi}$, we get with probability at least $1 - \delta$ that
$$
L(\pi, \bar{\bm\lambda}) \leq L(\bar\pi, \bm\lambda) + \epsilon_{\text{saddle}}
$$
for all $\pi \in \text{Conv}(\Pi)$ with $w^\pi \in \mathcal{W}$ and $\bm\lambda \in B \bm\Delta^I$ where $\epsilon_{\text{saddle}} \coloneqq \frac{1 + 2B}{(1 - \gamma)^2} \epsilon_{\text{opt}}(K) + \mathcal{O}\left(
\frac{B \epsilon_{\text{stat}}}{1 - \gamma}
\right)$.
Since PDCA chooses $K$ such that $\frac{1 + 2B}{(1 - \gamma)^2} \epsilon_{\text{opt}}(K) \leq \epsilon$ and the bound on $n$ guarantees $\epsilon_{\text{stat}} \leq \mathcal{O}(( 1 - \gamma) \varphi \, \epsilon)$, we have $\epsilon_{\text{saddle}} \leq \mathcal{O}(\epsilon)$.
Hence, invoking Lemma~\ref{lemma:key2} with $\xi = \epsilon_{\text{saddle}}$, $B = 1 + \frac{1}{\varphi}$ and $\eta = 0$, we get
\begin{align*}
J_R(\bar\pi) &\geq J_R(\pi_c) - \epsilon_{\text{saddle}} \geq J_R(\pi_c) - \mathcal{O}(\epsilon) \\
J_{C_i}(\bar\pi) &\leq \tau_i + \frac{\epsilon_{\text{saddle}}}{1 + \frac{1}{\varphi} - \Vert \bm\lambda^\star \Vert_1} \leq \tau_i + \mathcal{O}(\epsilon), \quad i = 1, \dots, I
\end{align*}
where $\bm\lambda^\star$ is the optimal dual variable for the problem $\mathcal{P}(\bm\tau)$ and the last inequality uses $\Vert \bm\lambda^\star \Vert_1 \leq \frac{1}{\varphi}$, which follows by the Slater's condition (Assumption~\ref{assumption:slater}) and Lemma~\ref{lemma:dual-variable-bound}.

\end{proof}

\subsection{Result for arbitrary competing policy}

\begin{theorem} \label{thm:main-2}
Under assumptions~\ref{assumption:value-realizability},\ref{assumption:concentrability},\ref{assumption:w-boundedness} and \ref{assumption:miw-realizability},
the policy $\bar\pi$ returned by the PDCA algorithm (Algorithm~\ref{alg:pdapc}) with the cost threshold $\bm\tau$ and bound $B = \frac{1}{(1 - \gamma)\epsilon}$ satisfies $J_{C_i}(\bar\pi) \leq \tau_i + \mathcal{O}(\epsilon)$ for all $i = 1, \dots, I$, and $J_R(\pi) \geq J_R(\pi^\star) - \mathcal{O}(\epsilon)$ with probability at least $1 - \delta$ where $\pi^\star$ the optimal policy for $\mathcal{P}(\bm\tau)$ as long as
$$
n \geq \Omega\left(\frac{(C_{\ell_2})^2 \log(I \vert \mathcal{F} \vert \vert \Pi \vert \vert \mathcal{W} \vert / \delta)}{(1 - \gamma)^{6} \epsilon^4}\right).
$$
\end{theorem}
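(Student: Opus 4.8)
The plan is to mirror the proof of Theorem~\ref{thm:main-1}, but to replace the Slater-based saddle-point analysis (Lemma~\ref{lemma:key2}) with the Slater-free one (Lemma~\ref{lemma:key1}), and to compensate for the absence of Slater's condition by taking the much larger multiplier bound $B = \frac{1}{(1-\gamma)\epsilon}$. First I would invoke Lemma~\ref{lemma:saddle-point} with cost threshold $\bm\tau$ and this choice of $B$. Since Assumptions~\ref{assumption:value-realizability},\ref{assumption:concentrability},\ref{assumption:w-boundedness} and \ref{assumption:miw-realizability} hold, the lemma certifies that $(\bar\pi, \bar{\bm\lambda})$ is a $\xi$-near saddle point of $L$ over $\widetilde\Pi \times B\bm\Delta^I$, where $\widetilde\Pi = \{\pi \in \text{Conv}(\Pi) : w^\pi \in \mathcal{W}\}$ and $\xi = \epsilon_{\text{saddle}} \coloneqq \frac{1+2B}{(1-\gamma)^2}\epsilon_{\text{opt}}(K) + \mathcal{O}(\frac{B\epsilon_{\text{stat}}}{1-\gamma})$.

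Next I would apply Lemma~\ref{lemma:key1} with this $\widetilde\Pi$. The MIW realizability assumption (Assumption~\ref{assumption:miw-realizability}) gives $w^{\pi^\star} \in \mathcal{W}$, so the optimal policy $\pi^\star$ for $\mathcal{P}(\bm\tau)$ lies in $\widetilde\Pi$; since $\pi^\star$ is feasible, it serves simultaneously as the required feasible witness in $\widetilde\Pi$ and as the comparator $\pi_c$ of the lemma. Lemma~\ref{lemma:key1} then yields $J_R(\bar\pi) \geq J_R(\pi^\star) - \xi$ and $J_{C_i}(\bar\pi) \leq \tau_i + \frac{\xi}{B} + \frac{1}{B(1-\gamma)}$ for every $i$.

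It remains to substitute $B = \frac{1}{(1-\gamma)\epsilon}$ and calibrate $K$ and $n$. The feasibility slack splits into $\frac{1}{B(1-\gamma)} = \epsilon$, the term a large $B$ is designed to shrink, plus $\frac{\xi}{B}$, which is lower order once $\xi = \mathcal{O}(\epsilon)$. To force $\xi = \epsilon_{\text{saddle}} = \mathcal{O}(\epsilon)$ I would first pick $K$ large enough that $\frac{1+2B}{(1-\gamma)^2}\epsilon_{\text{opt}}(K) \leq \epsilon$ (possible since $\epsilon_{\text{opt}}(K) \to 0$), and then control the statistical term: with $B \sim \frac{1}{(1-\gamma)\epsilon}$, the requirement $\frac{B\epsilon_{\text{stat}}}{1-\gamma} \leq \epsilon$ becomes $\epsilon_{\text{stat}} \lesssim (1-\gamma)^2\epsilon^2$. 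Recalling the form of $\epsilon_{\text{stat}}$ from Lemma~\ref{lemma:concentration}, balancing its dominant $\frac{C_{\ell_2}}{1-\gamma}\sqrt{\log(\cdots)/n}$ term against $(1-\gamma)^2\epsilon^2$ gives exactly the claimed $n \gtrsim \frac{(C_{\ell_2})^2 \log(\cdots)}{(1-\gamma)^6\epsilon^4}$. Plugging back, the optimality bound reads $J_R(\bar\pi) \geq J_R(\pi^\star) - \mathcal{O}(\epsilon)$ and the feasibility bound reads $J_{C_i}(\bar\pi) \leq \tau_i + \mathcal{O}(\epsilon)$.

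The main obstacle---and the reason this theorem degrades to a $1/\epsilon^4$ rate---is the tension created by dropping Slater's condition. Without it, the only lever for feasibility is the term $\frac{1}{B(1-\gamma)}$ in Lemma~\ref{lemma:key1}, which forces $B$ to grow like $1/\epsilon$. But $\epsilon_{\text{saddle}}$ scales linearly in $B$, so a $1/\epsilon$-sized $B$ inflates the saddle-point error and demands $\epsilon_{\text{stat}} \lesssim \epsilon^2$ rather than $\epsilon_{\text{stat}} \lesssim \epsilon$; since $\epsilon_{\text{stat}} \sim 1/\sqrt{n}$, this squares the sample requirement. Verifying that the $\frac{\xi}{B}$ and $\frac{1}{B(1-\gamma)}$ pieces of the feasibility slack, together with the optimality gap $\xi$, all land at $\mathcal{O}(\epsilon)$ under a single choice of $B$ is the step I would check most carefully, since this balance is what pins down the exponent of $\epsilon$ in $n$.
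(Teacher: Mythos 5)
Your proposal is correct and follows essentially the same route as the paper's proof: invoke Lemma~\ref{lemma:saddle-point} with $B = \frac{1}{(1-\gamma)\epsilon}$, apply the Slater-free Lemma~\ref{lemma:key1}, and calibrate $K$ and $n$ so that $\epsilon_{\text{saddle}} \leq \mathcal{O}(\epsilon)$, with the requirement $\epsilon_{\text{stat}} \lesssim (1-\gamma)^2\epsilon^2$ yielding exactly the stated $1/\epsilon^4$ sample complexity. Your explicit identification of $\pi^\star$ as both the feasible witness in $\widetilde\Pi$ (via Assumption~\ref{assumption:miw-realizability}) and the comparator $\pi_c$ is in fact slightly more careful than the paper's own write-up, which leaves $\pi_c$ implicit.
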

\begin{proof}
Recall from Lemma~\ref{lemma:concentration} that $\epsilon_{\text{stat}} \coloneqq \frac{C_{\ell_2}}{1 - \gamma} \sqrt{\frac{\log (I \vert \mathcal{F} \vert \vert \Pi \vert \vert \mathcal{W} \vert / \delta)}{n}} + \frac{C_\infty \log(I \vert \mathcal{F} \vert \vert \Pi \vert \vert \mathcal{W} \vert / \delta)}{(1 - \gamma)n}$.
The choice $n \geq \Omega\left(\frac{(C_{\ell_2})^2 \log(I \vert \mathcal{F} \vert \vert \Pi \vert \vert \mathcal{W} \vert / \delta)}{(1 - \gamma)^{6} \epsilon^4}\right)$ guarantees
$$
\epsilon_{\text{stat}} \leq \mathcal{O}((1 - \gamma)^2 \epsilon^2).
$$
Invoking Lemma~\ref{lemma:saddle-point} with cost threshold $\bm\tau$ and bound $B = \frac{1}{(1 - \gamma) \epsilon}$, we get with probability at least $1 - \delta$ that
$$
L(\pi, \bar{\bm\lambda}) \leq L(\bar\pi, \bm\lambda) + \epsilon_{\text{saddle}}
$$
for all $\pi \in \text{Conv}(\Pi)$ and $\bm\lambda \in \frac{1}{(1 - \gamma)\epsilon} \bm\Delta^I$ where $\epsilon_{\text{saddle}} \coloneqq \frac{1 + 2B}{(1 - \gamma)^2} \epsilon_{\text{opt}}^\pi(K) + \frac{2B}{1 - \gamma} \epsilon_{\text{opt}}^\lambda(K) + \mathcal{O}\left(
\frac{B \epsilon_{\text{stat}}}{1 - \gamma}
\right)$.
Since PDCA chooses $K$ such that $\frac{1 + 2B}{(1 - \gamma)^2} \epsilon_{\text{opt}}^\pi(K) \leq \epsilon$ and $\frac{2B}{1 - \gamma} \epsilon_{\text{opt}}^\lambda(K) \leq \epsilon$, and $n$ is chosen to guarantee $\epsilon_{\text{stat}} \leq \mathcal{O}((1 - \gamma)^2 \epsilon^2)$, we have $\epsilon_{\text{saddle}} \leq \mathcal{O}(\epsilon)$.
Hence, invoking Lemma~\ref{lemma:key1} with $\xi = \epsilon_{\text{saddle}}$ and $B = \frac{1}{(1 - \gamma)\epsilon}$, we get
\begin{align*}
J_R(\bar\pi) &\geq J_R(\pi_c) - \epsilon_{\text{saddle}} \geq J_R(\pi_c) - \mathcal{O}(\epsilon) \\
J_{C_i}(\bar\pi) &\leq \tau_i + \frac{\epsilon_{\text{saddle}}}{B} + \frac{1}{B(1 - \gamma)} \leq \tau_i + \mathcal{O}(\epsilon), \quad i = 1, \dots, I.
\end{align*}

\end{proof}

\subsection{Learning policy satisfying constraints exactly}

Note that results in previous sections provide a bound on sample complexity for finding a nearly optimal policy that \textit{approximately} satisfies the constraints.
In this section, we provide a bound for finding a nearly optimal policy that satisfies the constraints \textit{exactly} by running PDCA with tightened constraints.
We need the following additional technical assumption on MIW realizability.

\begin{assumption} \label{assumption:two-policy-miw}
    Suppose the Slater's condition holds (Assumption~\ref{assumption:slater}). For some constant $\alpha \in [\frac{\varphi}{c(1 - \gamma)}, \frac{\varphi}{1 - \gamma}]$ where $c \geq 0$, we have $w^{\pi^\star_\alpha} \in \mathcal{W}$ where $\pi^\star_\alpha$ denotes an optimal policy of the optimization problem $\mathcal{P}(\bm\tau - \alpha \bm{1})$.
\end{assumption}

\begin{theorem} \label{thm:main-3}
Let $\epsilon \in (0, \frac{1}{2}]$ be given.
Under assumptions~\ref{assumption:value-realizability},\ref{assumption:slater},\ref{assumption:concentrability},\ref{assumption:w-boundedness},\ref{assumption:miw-realizability} and \ref{assumption:two-policy-miw},
the policy $\bar\pi$ returned by the PDCA algorithm (Algorithm~\ref{alg:pdapc}) with the cost threshold $\bm\tau - \eta \bm{1}$, where $\eta = \varphi \epsilon$, and bound $B = \frac{5}{\varphi}$ satisfies $J_{C_i}(\bar\pi) \leq \tau_i$ for all $i = 1, \dots, I$, and $J_R(\pi) \geq J_R(\pi^\star) - \mathcal{O}(\epsilon)$ with probability at least $1 - \delta$, where $\pi^\star$ is an optimal policy for $\mathcal{P}(\bm\tau)$ as long as
$$
n \geq \Omega\left(\frac{(C_{\ell_2})^2 \log(I \vert \mathcal{F} \vert \vert \Pi \vert \vert \mathcal{W} \vert / \delta)}{(1 - \gamma)^{4} \varphi^2 \epsilon^2}\right).
$$
\end{theorem}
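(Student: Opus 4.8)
\begin{refproof}[Theorem~\ref{thm:main-3} (plan)]
The plan is to combine the near-saddle-point guarantee of Lemma~\ref{lemma:saddle-point} with the near-saddle-point consequences of Lemma~\ref{lemma:key2}, now applied to the \emph{tightened} Lagrangian $L_\eta(\pi,\bm\lambda) = J_R(\pi) + \bm\lambda\cdot(\bm\tau - \eta\bm1 - J_{\bm C}(\pi))$. First I would set $\eta = \varphi\epsilon$ to coincide with the tightening level $\alpha$ of Assumption~\ref{assumption:two-policy-miw}, so that $w^{\pi^\star_\eta}\in\mathcal W$; this places $\pi^\star_\eta$ in the comparator class $\widetilde\Pi = \{\pi\in\mathrm{Conv}(\Pi) : w^\pi\in\mathcal W\}$ over which Lemma~\ref{lemma:saddle-point} certifies a saddle point. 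Running PDCA with threshold $\bm\tau - \eta\bm1$ and bound $B = 5/\varphi$, Lemma~\ref{lemma:saddle-point} (whose proof treats the threshold generically, as in the proofs of Theorems~\ref{thm:main-1} and \ref{thm:main-2}) shows that $(\bar\pi,\bar{\bm\lambda})$ is an $\epsilon_{\text{saddle}}$-near saddle point of $L_\eta$ over $\widetilde\Pi\times B\bm\Delta^I$. The sample bound on $n$ forces $\epsilon_{\text{stat}}\le\mathcal O((1-\gamma)\varphi\epsilon)$, and taking $K$ large enough makes the oracle term negligible, so $\epsilon_{\text{saddle}} = \frac{1+2B}{(1-\gamma)^2}\epsilon_{\text{opt}}(K) + \mathcal O(\frac{B\epsilon_{\text{stat}}}{1-\gamma})\le\mathcal O(\epsilon)$; in fact I would track constants to guarantee $\epsilon_{\text{saddle}}\le 3\epsilon$.

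Next I would verify the hypotheses of Lemma~\ref{lemma:key2}. Reusing the Slater policy of Assumption~\ref{assumption:slater} for the tightened problem $\mathcal P(\bm\tau - \eta\bm1)$ shows it inherits Slater's condition with gap $\varphi - (1-\gamma)\eta = \varphi(1-(1-\gamma)\epsilon)\ge \varphi/2$, using $\eta=\varphi\epsilon$, $(1-\gamma)<1$ and $\epsilon\le\tfrac12$; the referenced dual-variable bound (Lemma~\ref{lemma:dual-variable-bound}) then gives $\Vert\bm\lambda^\star_\eta\Vert_1\le 2/\varphi$. Since $B = 5/\varphi > 2/\varphi\ge\Vert\bm\lambda^\star_\eta\Vert_1$ and $\eta\le\varphi/2<\varphi/(1-\gamma)$, all hypotheses of Lemma~\ref{lemma:key2} hold with $\widetilde\Pi$ and $\xi=\epsilon_{\text{saddle}}$, so I may invoke it.

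For feasibility, Lemma~\ref{lemma:key2} yields $J_{C_i}(\bar\pi)\le\tau_i-\eta+\frac{\epsilon_{\text{saddle}}}{B-\Vert\bm\lambda^\star_\eta\Vert_1}\le\tau_i-\varphi\epsilon+\frac{\varphi\,\epsilon_{\text{saddle}}}{3}$, where I used $B-\Vert\bm\lambda^\star_\eta\Vert_1\ge 3/\varphi$. The point of tightening is exactly that the correction $\frac{\varphi\epsilon_{\text{saddle}}}{3}$ is subtracted against $\eta=\varphi\epsilon$: since $\epsilon_{\text{saddle}}\le 3\epsilon$, the correction is at most $\varphi\epsilon=\eta$, and the two cancel to give the \emph{exact} feasibility $J_{C_i}(\bar\pi)\le\tau_i$ for every $i$.

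For optimality, Lemma~\ref{lemma:key2} gives $J_R(\bar\pi)\ge J_R(\pi^\star_\eta)-\epsilon_{\text{saddle}}$, so it remains only to pay for tightening, i.e. to lower bound $J_R(\pi^\star_\eta)$ by $J_R(\pi^\star)$. I would do this by a Slater-mixture sensitivity argument: the policy $\pi_{\text{mix}}=(1-\theta)\pi^\star+\theta\pi_{\text{Slater}}\in\mathrm{Conv}(\Pi)$ with $\theta=(1-\gamma)\epsilon$ satisfies $J_{C_i}(\pi_{\text{mix}})\le\tau_i-\theta\frac{\varphi}{1-\gamma}=\tau_i-\eta$, hence is feasible for $\mathcal P(\bm\tau-\eta\bm1)$, while $J_R(\pi_{\text{mix}})\ge(1-\theta)J_R(\pi^\star)\ge J_R(\pi^\star)-\frac{\theta}{1-\gamma}=J_R(\pi^\star)-\epsilon$. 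Optimality of $\pi^\star_\eta$ over the tightened feasible set then gives $J_R(\pi^\star_\eta)\ge J_R(\pi_{\text{mix}})\ge J_R(\pi^\star)-\epsilon$, so $J_R(\bar\pi)\ge J_R(\pi^\star)-\mathcal O(\epsilon)$. The main obstacle is on the feasibility side: it forces the comparator entering Lemma~\ref{lemma:key2} to be $\pi^\star_\eta$ rather than $\pi^\star$, yet the near-saddle guarantee only covers comparators with realizable MIW. This is precisely why the extra single-policy condition $w^{\pi^\star_\eta}\in\mathcal W$ (Assumption~\ref{assumption:two-policy-miw}) is needed, and why $B$ must be enlarged to $5/\varphi$ to dominate the perturbed dual norm $\Vert\bm\lambda^\star_\eta\Vert_1$.
\end{refproof}
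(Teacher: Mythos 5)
There is a genuine gap, and it sits at the load-bearing step of your plan: the identification of the tightening level $\eta=\varphi\epsilon$ with the $\alpha$ of Assumption~\ref{assumption:two-policy-miw}. That assumption grants $w^{\pi^\star_\alpha}\in\mathcal W$ only for some \emph{constant} $\alpha\in[\frac{\varphi}{c(1-\gamma)},\frac{\varphi}{1-\gamma}]$; with the theorem's parameters ($B=5/\varphi$, $\eta=\varphi\epsilon$, i.e.\ $c=1$ in the paper's general proof) this forces $\alpha=\frac{\varphi}{1-\gamma}$, which is bounded away from $\eta=\varphi\epsilon\le\varphi/2$ for every admissible $\epsilon\le\frac12$. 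So the assumptions never give you $w^{\pi^\star_\eta}\in\mathcal W$, the policy $\pi^\star_\eta$ is not in the comparator class $\widetilde\Pi$ over which Lemma~\ref{lemma:saddle-point} certifies the near saddle point, and your invocation of Lemma~\ref{lemma:key2} with $(\pi^\star_\eta,\bm\lambda^\star_\eta)$ fails at its hypothesis $\pi^\star_\eta\in\widetilde\Pi$. This kills both halves of your argument as written: the exact-feasibility cancellation and the optimality bound $J_R(\bar\pi)\ge J_R(\pi^\star_\eta)-\epsilon_{\text{saddle}}$ both come out of that one lemma application. Note also that requiring $w^{\pi^\star_\eta}\in\mathcal W$ would be an $\epsilon$-dependent realizability condition (the comparator changes with the target accuracy), which is precisely what the paper's formulation of Assumption~\ref{assumption:two-policy-miw} is designed to avoid.

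The paper's actual proof circumvents this without ever using Lemma~\ref{lemma:key2}: it applies the near-saddle inequality separately to the two realizable comparators $\pi^\star$ and $\pi^\star_\alpha$, then takes the convex combination $\widetilde\pi=(1-\zeta)\pi^\star+\zeta\pi^\star_\alpha$ with $\zeta=(1-\gamma)\epsilon$ — by linearity of $L_\eta(\cdot,\bm\lambda)$ in the occupancy measure, the inequality transfers to $\widetilde\pi$ even though $w^{\widetilde\pi}$ need not lie in $\mathcal W$. The mixture is strictly feasible for $\mathcal P(\bm\tau-\zeta\alpha\bm1)$ with margin $\zeta\alpha\ge\eta$, and the feasibility argument runs a bespoke two-sided bound on $L_\eta(\widetilde\pi,\bar{\bm\lambda})$ against the primal-dual solution of $\mathcal P(\bm\tau-\zeta\alpha\bm1)$ (whose dual norm is controlled by Lemma~\ref{lemma:dual-variable-bound}), yielding $m=\min_i(\tau_i-J_{C_i}(\bar\pi))\ge 0$ directly; near optimality is obtained by setting $\bm\lambda=\bm0$ in the saddle inequality for $\pi^\star$ itself and paying only $\eta\Vert\bar{\bm\lambda}\Vert_1\le\eta B=\mathcal O(\epsilon)$, with no detour through $\pi^\star_\eta$. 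To your credit, the surrounding bookkeeping in your plan is sound: the Slater margin $\varphi-(1-\gamma)\eta\ge\varphi/2$ and the resulting bound $\Vert\bm\lambda^\star_\eta\Vert_1\le 2/\varphi$, the gap $B-\Vert\bm\lambda^\star_\eta\Vert_1\ge 3/\varphi$ and the cancellation of the $\frac{\varphi\epsilon_{\text{saddle}}}{3}$ correction against $\eta$, and the Slater-mixture sensitivity bound $J_R(\pi^\star_\eta)\ge J_R(\pi^\star)-\epsilon$ are all correct computations — so your route would go through if one \emph{added} the assumption $w^{\pi^\star_\eta}\in\mathcal W$, but that is a different (and $\epsilon$-dependent) assumption than the one the theorem is stated under.
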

\begin{proof}
Recall from Lemma~\ref{lemma:concentration} that $\epsilon_{\text{stat}} \coloneqq \frac{C_{\ell_2}}{1 - \gamma} \sqrt{\frac{\log (I \vert \mathcal{F} \vert \vert \Pi \vert \vert \mathcal{W} \vert / \delta)}{n}} + \frac{C_\infty \log(I \vert \mathcal{F} \vert \vert \Pi \vert \vert \mathcal{W} \vert / \delta)}{(1 - \gamma)n}$.
The bound on $n$ in the theorem guarantees $\epsilon_{\text{stat}} \leq \mathcal{O}((1 - \gamma) \varphi\,\epsilon)$.
Invoking Lemma~\ref{lemma:saddle-point} with cost threshold $\bm\tau - \eta \bm{1}$ and bound $B = \frac{5}{\varphi}$, we get with probability at least $1 - \delta$ that
\begin{equation} \label{eqn:saddle-point-1}
L_\eta(\pi, \bar{\bm\lambda}) \leq L_\eta(\bar\pi, \bm\lambda) + \epsilon_{\text{saddle}}
\end{equation}
for all $\pi \in \text{Conv}(\Pi)$ satisfying $w^\pi \in \mathcal{W}$ and $\bm\lambda \in \frac{5}{\varphi} \bm\Delta^I$ where $L_\eta(\cdot, \cdot)$ is the Lagrangian for $\mathcal{P}(\bm\tau - \eta \bm{1})$ and $\epsilon_{\text{saddle}} \coloneqq \frac{1 + 2B}{(1 - \gamma)^2} \epsilon_{\text{opt}}(K) + \mathcal{O}\left(\frac{\epsilon_{\text{stat}}}{(1 - \gamma)\varphi}\right)$.
Since PDCA chooses $K$ such that $\frac{1 + 2B}{(1 - \gamma)^2} \epsilon_{\text{opt}}(K) \leq \epsilon$ and $n$ is chosen to guarantee $\epsilon_{\text{stat}} \leq \mathcal{O}((1 - \gamma) \varphi \, \epsilon)$, we have $\epsilon_{\text{saddle}} \leq 2 \epsilon$ (with appropriate scaling of $n$ by a universal constant).

Let $\pi^\star$ be an optimal policy for $\mathcal{P}(\bm\tau)$ and $\pi^\star_\alpha$ for $\mathcal{P}(\bm\tau - \alpha \bm{1})$.
By the MIW realizability assumptions \ref{assumption:miw-realizability} and \ref{assumption:two-policy-miw}, we have $w^{\pi^\star}, w^{\pi^\star_\alpha} \in \mathcal{W}$ and it follows from (\ref{eqn:saddle-point-1}) that
\begin{align}
L_\eta(\pi^\star, \bar{\bm\lambda}) &\leq L_\eta(\bar\pi, \bm\lambda) + 2 \epsilon \label{eqn:two-policy-saddle1} \\
L_\eta(\pi^\star_\alpha, \bar{\bm\lambda}) &\leq L_\eta(\bar\pi, \bm\lambda) + 2 \epsilon \label{eqn:two-policy-saddle2}
\end{align}
for all $\bm\lambda \in \frac{5}{\varphi} \bm\Delta^I$.
Now, we show near optimality and exact feasibility from these inequalities.

\paragraph{Near Optimality}

Setting $\bm\lambda = \bm{0}$ in (\ref{eqn:two-policy-saddle1}) and rearranging, we get
\begin{align*}
J_R(\bar\pi)
&\geq
J_R(\pi^\star) + \bar{\bm\lambda} \cdot (\bm\tau - \eta \bm{1} - J_{\bm{C}}(\pi^\star)) - 2\epsilon \\
&\geq
J_R(\pi^\star) - \eta \Vert \bar{\bm\lambda} \Vert_1 - \mathcal{O}(\epsilon) \\
&\geq
J_R(\pi^\star) - \mathcal{O}(\epsilon)
\end{align*}
where the second inequality follows by the feasibility of $\pi^\star$ for $\mathcal{P}(\bm\tau)$ and $\epsilon_{\text{saddle}} \leq \mathcal{O}(\epsilon)$;
the last inequality follows by $\eta \Vert \bar{\bm\lambda} \Vert_1 \leq \eta B = \mathcal{O}(\epsilon)$.
This proves near optimality of $\bar\pi$
Now we prove that $\bar\pi$ is (exactly) feasible for $\mathcal{P}(\bm\tau)$.

\paragraph{Exact Feasibility}
Define $m = \min_{i \in [I]} (\tau_i - J_{C_i}(\bar\pi))$.
If $m \geq 0$ then $\tau_i - J_{C_i}(\bar\pi) \geq 0$ for all $i = 1, \dots, I$ and exact feasibility trivially holds.
We only consider the case where $m < 0$.
Define a mixture policy $\widetilde\pi = (1 - \zeta) \pi^\star + \zeta \pi_\alpha^\star$ where $\zeta \in (0, 1)$ is to be determined later.
Since $L_\eta(\cdot, \bm{\bar\lambda})$ is linear, a linear combination of (\ref{eqn:two-policy-saddle1}) and (\ref{eqn:two-policy-saddle2}) with coefficients $1 - \zeta$ and $\zeta$ respectively, we get
$$
L_\eta(\widetilde\pi, \bar{\bm\lambda}) \leq L_\eta(\bar\pi, \bm\lambda) + 2\epsilon.
$$
Choosing $\bm\lambda$ such that $\lambda_j = B$ for $j = \argmin_{i \in [I]} (\tau_i - J_{C_i}(\bar\pi))$ and $\lambda_j = 0$ for all other indices, we get
\begin{align*}
L_\eta(\widetilde\pi, \bar{\bm\lambda})
&\leq
J_R(\bar\pi) + \bm\lambda \cdot (\bm\tau - \eta \bm{1} - J_{\bm{C}}(\bar\pi)) + 2\epsilon \\
&=
J_R(\bar\pi) + B(m - \eta) + 2\epsilon.
\end{align*}
On the other hand, using the fact that $\widetilde\pi$ is feasible for $\mathcal{P}(\bm\tau - \zeta \alpha \bm{1})$, we get
\begin{align*}
L_\eta(\widetilde\pi, \bar{\bm\lambda})
&=
J_R(\widetilde\pi) + \bar{\bm\lambda}(\bm\tau - \eta \bm{1} - J_{\bm{C}}(\widetilde\pi)) \\
&\geq
J_R(\widetilde\pi) + (\zeta \alpha - \eta) \Vert \bar{\bm\lambda} \Vert_1.
\end{align*}
Combining the previous two results (upper bound and lower bound of $L_\eta(\widetilde\pi, \bar{\bm\lambda})$) and rearranging, we get
\begin{equation}
J_R(\widetilde\pi) - J_R(\bar\pi) \leq B(m - \eta) - (\zeta \alpha - \eta) \Vert \bar{\bm\lambda} \Vert_1 + 2\epsilon. \label{eqn:exact-feasibility-intermediate}
\end{equation}
Now, to get a lower bound of $J_R(\widetilde\pi) - J_R(\bar\pi)$, let $(\widetilde\pi^\star, \widetilde{\bm\lambda}^\star)$ be a primal-dual solution of $\mathcal{P}(\bm\tau - \zeta \alpha \bm{1})$.
Note that $\mathcal{P}(\bm\tau - \zeta \alpha \bm{1})$ is feasible by the Slater's condition assumption \ref{assumption:slater} and the fact that $\zeta \alpha \in (0, \frac{\varphi}{1 - \gamma})$.
Since $(\widetilde\pi^\star, \widetilde{\bm\lambda}^\star)$ is a saddle point of $L_{\zeta \alpha}(\pi, \bm\lambda)
= J_R(\pi) + \bm\lambda \cdot (\bm\tau - \zeta \alpha \bm{1} - J_{\bm{C}}(\pi))$ with respect to $\text{Conv}(\Pi) \times \mathbb{R}_+^I$, we get
$$
L_{\zeta \alpha}(\bar\pi, \widetilde{\bm\lambda}^\star) \leq L_{\zeta \alpha}(\widetilde\pi^\star, \widetilde{\bm\lambda}^\star) = J_R(\widetilde\pi^\star) \leq J_R(\pi^\star) \leq \frac{1}{1 - \zeta} J_R(\widetilde\pi)
$$
where the equality follows by the complementary slackness property;
the second inequality follows since the feasibility set of $\mathcal{P}(\bm\tau)$ contains that of $\mathcal{P}(\bm\tau - \zeta \alpha \bm{1})$;
and the last inequality follows by $J_R(\widetilde\pi) = (1 - \zeta) J_R(\pi^\star) + \zeta J_R(\pi_\alpha^\star) \geq (1 - \zeta) J_R(\pi^\star)$.
Rearranging, we get
\begin{align*}
J_R(\widetilde\pi) - J_R(\bar\pi)
&\geq - \zeta J_R(\bar\pi) + (1 - \zeta) \widetilde{\bm\lambda}^\star \cdot (\bm\tau - \zeta \alpha \bm{1} - J_{\bm{C}}(\bar\pi)) \\
&\geq
\frac{-\zeta}{1 - \gamma} + (1 - \zeta) (m - \zeta \alpha) \Vert \widetilde{\bm\lambda}^\star \Vert_1
\end{align*}
where the second inequality follows by $J_R(\cdot) \leq \frac{1}{1 - \gamma}$ and the definition of $m$.
Combining with the upper bound of $J_R(\widetilde\pi) - J_R(\bar\pi)$ shown in (\ref{eqn:exact-feasibility-intermediate}) and rearranging, we get
\begin{equation} \label{eqn:exact-feasibility-intermediate2}
(B - (1 - \zeta) \Vert \widetilde{\bm\lambda}^\star \Vert_1)m
\geq
B\eta + (\zeta \alpha - \eta) \Vert \bar{\bm\lambda} \Vert_1 - \frac{\zeta}{1 - \gamma} - (1 - \zeta) \zeta \alpha \Vert \widetilde{\bm\lambda}^\star \Vert_1 - 2\epsilon.
\end{equation}
Now, we choose our parameters as follows.
$$
\zeta = (1 - \gamma)\epsilon, \quad
B = \frac{5c}{\varphi}, \quad
\eta = \frac{\varphi \epsilon}{c}.
$$
Note that $B\eta = 5\epsilon$ and $\zeta \alpha \geq \eta$.
Also, since $\widetilde{\bm\lambda}^\star$ is a dual solution of $\mathcal{P}(\bm\tau - \zeta \alpha \bm{1})$, which has a margin of $\frac{\varphi}{1 - \gamma} - \zeta \alpha$, Lemma~\ref{lemma:dual-variable-bound} gives $\Vert \widetilde{\bm\lambda}^\star \Vert_1 \leq \frac{1}{\varphi - (1 - \gamma) \zeta \alpha}$.
Hence,
$$
\zeta \alpha \Vert \widetilde{\bm\lambda}^\star \Vert_1
\leq
\frac{\zeta \alpha}{\varphi - (1 - \gamma) \zeta \alpha}
\leq
\frac{1}{1 - \gamma} \frac{\zeta \varphi}{\varphi - \varphi \zeta}
\leq \frac{2\zeta}{1 - \gamma}
\leq 2\epsilon
$$
where the second inequality uses $\alpha \leq \frac{\varphi}{1 - \gamma}$ and the fact that $h(x) = \frac{x}{\varphi - x}$ is increasing for $x \in (0, \varphi)$;
and the third inequality uses the fact that $\zeta \leq \frac{1}{2}$.
Note that $\Vert \widetilde{\bm\lambda}^\star \Vert_1 \leq \frac{\epsilon}{\zeta \alpha} = \frac{1}{(1 - \gamma)\alpha} < B$ so that $B - (1 - \zeta) \Vert \widetilde{\bm\lambda}^\star \Vert_1 > 0$.
Hence, the previous result (\ref{eqn:exact-feasibility-intermediate2}) gives
\begin{align*}
(B - (1 - \zeta) \Vert \widetilde{\bm\lambda}^\star \Vert_1)m
&\geq
B\eta + (\zeta \alpha - \eta) \Vert \bar{\bm\lambda} \Vert_1 - \frac{\zeta}{1 - \gamma} - (1 - \zeta) \zeta \alpha \Vert \widetilde{\bm\lambda}^\star \Vert_1 - 2\epsilon \\
&\geq
5\epsilon + 0 - \epsilon - 2\epsilon - 2\epsilon \\
&=
0.
\end{align*}
Since $B - (1 - \zeta) \Vert \widetilde{\bm\lambda}^\star \Vert_1 > 0$, we have $m \geq 0$ which implies $\tau_i - J_{C_i}(\bar\pi) \geq 0$ for all $i = 1, \dots, I$.
This completes the proof.

\end{proof}

\section{CONVEX OPTIMIZATION}

\begin{lemma} \label{lemma:perturbation-analysis}
Let $(\pi^\star, \lambda^\star)$ be optimal primal dual solutions to the constrained optimization problem $\mathcal{P}(\tau)$.
Let $(\widetilde\pi^\star, \widetilde\lambda^\star)$ be optimal primal dual solutions to the perturbed problem $\mathcal{P}(\widetilde{\bm\tau})$ where $\widetilde{\bm\tau} = \bm\tau - \eta \bm{1}$.
Then, we have
$$
J_R(\widetilde\pi^\star) \geq J_R(\pi^\star) - \eta \Vert \widetilde{\bm\lambda}^\star \Vert_1.
$$
\end{lemma}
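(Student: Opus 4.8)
The plan is to run a one-line sensitivity argument off strong duality for the perturbed problem. Since $(\widetilde\pi^\star, \widetilde{\bm\lambda}^\star)$ is a primal-dual optimal pair for $\mathcal{P}(\widetilde{\bm\tau})$, strong duality holds (this is exactly what being primal-dual optimal encodes, and it is guaranteed by Slater's condition for the perturbed problem). Consequently $(\widetilde\pi^\star, \widetilde{\bm\lambda}^\star)$ is a saddle point of the perturbed Lagrangian $L_\eta(\pi, \bm\lambda) = J_R(\pi) + \bm\lambda \cdot (\widetilde{\bm\tau} - J_{\bm{C}}(\pi))$ over $\mathrm{Conv}(\Pi) \times \mathbb{R}_+^I$, and complementary slackness gives $\widetilde{\bm\lambda}^\star \cdot (\widetilde{\bm\tau} - J_{\bm{C}}(\widetilde\pi^\star)) = 0$, so that $L_\eta(\widetilde\pi^\star, \widetilde{\bm\lambda}^\star) = J_R(\widetilde\pi^\star)$. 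This is the same machinery already invoked in the proof of Lemma~\ref{lemma:key2}.

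First I would use the saddle-point inequality $L_\eta(\pi, \widetilde{\bm\lambda}^\star) \leq L_\eta(\widetilde\pi^\star, \widetilde{\bm\lambda}^\star)$ with the suboptimal primal choice $\pi = \pi^\star$, and simplify the right-hand side by complementary slackness to obtain $J_R(\pi^\star) + \widetilde{\bm\lambda}^\star \cdot (\widetilde{\bm\tau} - J_{\bm{C}}(\pi^\star)) \leq J_R(\widetilde\pi^\star)$. Next I would substitute $\widetilde{\bm\tau} = \bm\tau - \eta \bm{1}$ and split the inner product as $\widetilde{\bm\lambda}^\star \cdot (\bm\tau - J_{\bm{C}}(\pi^\star)) - \eta\, \widetilde{\bm\lambda}^\star \cdot \bm{1}$. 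The first of these is nonnegative: $\pi^\star$ is feasible for the \emph{original} problem $\mathcal{P}(\bm\tau)$, so $\bm\tau - J_{\bm{C}}(\pi^\star) \geq \bm{0}$ componentwise, and $\widetilde{\bm\lambda}^\star \geq \bm{0}$. Dropping this nonnegative term and using $\widetilde{\bm\lambda}^\star \cdot \bm{1} = \Vert \widetilde{\bm\lambda}^\star \Vert_1$ (again because $\widetilde{\bm\lambda}^\star \geq \bm{0}$) yields
$$
J_R(\widetilde\pi^\star) \geq J_R(\pi^\star) - \eta \Vert \widetilde{\bm\lambda}^\star \Vert_1,
$$
which is the claim.

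The argument is almost entirely bookkeeping; the only point requiring care is the justification of the saddle-point/complementary-slackness step, i.e.\ that optimal primal-dual solutions for the perturbed problem indeed satisfy strong duality. I expect this to be the one place a reader might want a pointer, so I would note that it follows from Slater's condition for $\mathcal{P}(\widetilde{\bm\tau})$ (as in the discussion following Assumption~\ref{assumption:slater}), exactly as used in Lemma~\ref{lemma:key2}. Everything else is a direct consequence of feasibility of $\pi^\star$ and nonnegativity of the multipliers, so no further technical obstacle arises.
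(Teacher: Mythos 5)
Your proposal is correct and follows essentially the same route as the paper's proof, which invokes strong duality to write $J_R(\widetilde\pi^\star) = d(\widetilde{\bm\lambda}^\star) \geq J_R(\pi^\star) + \widetilde{\bm\lambda}^\star \cdot (\widetilde{\bm\tau} - J_{\bm{C}}(\pi^\star))$ and then drops the nonnegative term $\widetilde{\bm\lambda}^\star \cdot (\bm\tau - J_{\bm{C}}(\pi^\star))$ using feasibility of $\pi^\star$ for $\mathcal{P}(\bm\tau)$. Your saddle-point-plus-complementary-slackness phrasing is just an equivalent formulation of the paper's step $J_R(\widetilde\pi^\star) = d(\widetilde{\bm\lambda}^\star) \geq L(\pi^\star, \widetilde{\bm\lambda}^\star; \widetilde{\bm\tau})$, so there is no substantive difference.
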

\begin{proof}
The proof follows Chapter 5.6 in \textcite{boyd_convex_2004}.
By strong duality of the optimization problem $\mathcal{P}(\widetilde{\bm\tau})$, we have $J_R(\widetilde\pi^\star) = d(\widetilde{\bm\lambda}^\star)$ where $d(\bm\lambda) = \max_{\pi \in \text{Conv}(\Pi)} L(\pi, \bm\lambda; \widetilde{\bm\tau})$ is the dual function of $\mathcal{P}(\widetilde{\bm\tau})$.
Hence,
\begin{align*}
J_R(\widetilde\pi^\star)
&=
d(\widetilde{\bm\lambda}^\star) \\
&\geq
J_R(\pi^\star) + \widetilde{\bm\lambda}^\star \cdot (\widetilde{\bm\tau} - J_{\bm{C}}(\pi^\star)) \\
&=
J_R(\pi^\star) + \widetilde{\bm\lambda}^\star \cdot (\bm\tau - J_{\bm{C}}(\pi^\star)) - \widetilde{\bm\lambda}^\star \cdot \eta \bm{1} \\
&\geq
J_R(\pi^\star) - \widetilde{\bm\lambda}^\star \cdot \eta \bm{1}
\end{align*}
where the first inequality follows from the definition of the dual function $d(\cdot)$ and the second follows by the feasibility of $\pi^\star$ for $\mathcal{P}(\bm\tau)$.
\end{proof}

\begin{lemma} \label{lemma:dual-variable-bound}
Consider a constrained optimization problem $\mathcal{P}(\bm\tau)$ with threshold $\bm\tau = (\tau_1, \dots, \tau_I)$ with $\tau_i > 0$ for all $i = 1, \dots, I$.
Suppose the problem satisfies Slater's condition with margin $\frac{\varphi}{1 - \gamma} > 0$, in other words, there exists $\pi \in \Pi$ that satisfies the constraint $J_{C_i}(\pi) \leq \tau_i - \frac{\varphi}{1 - \gamma}$ for all $i = 1, \dots, I$.
Then, the optimal dual variable $\bm\lambda^\star$ of the problem satisfies $\Vert \bm\lambda^\star \Vert_1 \leq \frac{1}{\varphi}$.
\end{lemma}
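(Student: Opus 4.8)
The plan is to exploit strong duality together with the Slater feasibility margin, following the standard argument (cf. Chapter 5.6 in \textcite{boyd_convex_2004}) already used in Lemma~\ref{lemma:perturbation-analysis}. Let $d(\bm\lambda) = \max_{\pi \in \text{Conv}(\Pi)} L(\pi, \bm\lambda)$ be the dual function of $\mathcal{P}(\bm\tau)$, where $L(\pi, \bm\lambda) = J_R(\pi) + \bm\lambda \cdot (\bm\tau - J_{\bm{C}}(\pi))$. Since the problem is linear in the occupancy-measure space (as noted around \eqref{eqn:opt-occupancy}) and Slater's condition holds, strong duality gives $J_R(\pi^\star) = d(\bm\lambda^\star)$, where $\bm\lambda^\star \in \mathbb{R}_+^I$ is the optimal dual variable.

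The core step is to lower bound $d(\bm\lambda^\star)$ by evaluating the Lagrangian at the Slater point. Let $\pi_{\mathrm{sl}} \in \Pi$ be the policy guaranteed by the hypothesis, so that $\tau_i - J_{C_i}(\pi_{\mathrm{sl}}) \geq \frac{\varphi}{1 - \gamma}$ for every $i$. Since $d(\bm\lambda^\star)$ is a maximum over $\pi \in \text{Conv}(\Pi)$ and $\pi_{\mathrm{sl}} \in \text{Conv}(\Pi)$, I would write
$$
J_R(\pi^\star) = d(\bm\lambda^\star) \geq L(\pi_{\mathrm{sl}}, \bm\lambda^\star) = J_R(\pi_{\mathrm{sl}}) + \bm\lambda^\star \cdot (\bm\tau - J_{\bm{C}}(\pi_{\mathrm{sl}})).
$$
Because $\bm\lambda^\star \geq \bm{0}$ componentwise and each coordinate of $\bm\tau - J_{\bm{C}}(\pi_{\mathrm{sl}})$ is at least $\frac{\varphi}{1-\gamma}$, the inner product is bounded below by $\frac{\varphi}{1-\gamma} \Vert \bm\lambda^\star \Vert_1$. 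This yields $J_R(\pi^\star) \geq J_R(\pi_{\mathrm{sl}}) + \frac{\varphi}{1-\gamma}\Vert \bm\lambda^\star \Vert_1$.

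Rearranging gives $\Vert \bm\lambda^\star \Vert_1 \leq \frac{1-\gamma}{\varphi}\big(J_R(\pi^\star) - J_R(\pi_{\mathrm{sl}})\big)$, and I would finish by invoking the uniform bound $J_R(\cdot) \in [0, \frac{1}{1-\gamma}]$ (established in the preliminaries), so that $J_R(\pi^\star) - J_R(\pi_{\mathrm{sl}}) \leq \frac{1}{1-\gamma}$. Substituting this in collapses the factors of $(1-\gamma)$ and delivers exactly $\Vert \bm\lambda^\star \Vert_1 \leq \frac{1}{\varphi}$. There is no serious obstacle here; the only points requiring care are the appeal to strong duality (which holds because the feasible set is convex in occupancy-measure space and Slater's condition is assumed) and keeping the sign conventions straight, namely that the dual variables are nonnegative and that the Slater margin enters with the correct direction so that the inner product is lower—rather than upper—bounded.
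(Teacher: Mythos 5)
Your proposal is correct and follows essentially the same argument as the paper's proof: both use strong duality to write $J_R(\pi^\star)$ as the value of the dual function at $\bm\lambda^\star$, lower bound it by evaluating the Lagrangian at the Slater point so that the margin $\frac{\varphi}{1-\gamma}$ multiplies $\Vert \bm\lambda^\star \Vert_1$, and conclude via $0 \leq J_R(\cdot) \leq \frac{1}{1-\gamma}$. The only cosmetic difference is that you maximize over $\mathrm{Conv}(\Pi)$ while the paper writes the dual function over $\Pi$, which is immaterial since the Lagrangian is linear in the occupancy measure.
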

\begin{proof}
Let $\pi^\star$ be an optimal policy of the optimization problem $\mathcal{P}(\bm\tau)$.
Define the dual function $f(\bm\lambda) = \max_{\pi \in \Pi} J_R(\pi) + \bm\lambda \cdot (\bm\tau - J_{\bm{C}}(\pi))$.
Let $\bm\lambda^\star = \argmin_{\bm\lambda \in \mathbb{R}_+^I} f(\bm\lambda)$.
Trivially, $\lambda_i^\star \geq 0$ for all $i = 1, \dots, I$.
Also, by strong duality, we have $f(\bm\lambda^\star) = J_R(\pi^\star)$.
Let $\widehat{\pi}$ be a feasible policy with $J_{\bm{C}}(\widehat{\pi}) \leq \bm\tau - \frac{\varphi}{1 - \gamma} \bm{1}$ where the inequality is component-wise and $\bm{1} = (1, \dots, 1)$.
Such a policy exists by the assumption of this lemma.
Then,
$$
J_R(\pi^\star)
= f(\bm\lambda^\star)
\geq J_R(\widehat\pi) + \bm\lambda^\star \cdot (\bm\tau - J_{\bm{C}}(\widehat\pi))
\geq J_R(\widehat\pi) + \bm\lambda^\star \cdot \frac{\varphi}{1 - \gamma} \bm{1} = J_R(\widehat\pi) + \frac{\varphi}{1 - \gamma} \Vert \bm\lambda^\star \Vert_1.
$$
Rearranging and using $1 / (1 - \gamma) \geq J_R(\pi^\star) \geq J_R(\widehat\pi) \geq 0$ completes the proof:
$$
\Vert \bm\lambda^\star \Vert_1 \leq \frac{J_R(\pi^\star) - J_R(\widehat\pi)}{\varphi / (1 - \gamma)} \leq \frac{1}{\varphi}.
$$
\end{proof}

\section{EXPERIMENTS} \label{appendix:experiments}

In this section, we empirically demonstrate the performance of our algorithm PDCA by running it in various environments and comparing the performance to COptiDICE.
For the parameter tuning and the experiments, we used an internal cluster of nodes with 20-core 2.40 GHz CPU and Nvidia Tesla V100 GPU.
The total amount of computing time was around 600 hours.

\subsection{Tabular CMDP Experiments} \label{section:tabular-generation-detail}

In this section, we provide details of the experiment run on a randomly generated CMDP discussed in Section~\ref{section:experiments}.
We follow a similar experimental protocol as \textcite{lee_coptidice_2022}.

\paragraph{CMDP Generation}

We set the number of states to 10 and the number of actions to 5.
The transition probability is randomly generated by drawing from a Dirichlet distribution with all parameters $(1, \dots, 1)$ for generating each $P(\cdot | s, a)$.
We set the number of cost functions $I$ to 1.
The reward function $R$ is randomly drawn from $[0, 1]$ uniformly for each $R(s, a)$.
The cost function $C_1$ is randomly drawn from a beta distribution with parameters 0.2, 0.2 for each $C_1(s, a)$.
We choose the discount factor $\gamma = 0.8$ and the cost threshold $\tau = 0.5$.
We repeat the random generation policy until the cost threshold is not slack for the optimal policy.

\paragraph{$\pi$-Player}

We use the natural policy gradient algorithm with exponential weight updating scheme for the $\pi$ player (Algorithm~\ref{alg:natural-policy}).

\begin{algorithm}
\KwInput{Learning rate $\eta$, sequence of functions $h_1, \dots, h_K \in (\mathcal{S} \times \mathcal{A} \rightarrow [-1, 1])$}
\KwInit{$\pi_1$ a uniform policy.}
\For{$k = 1, \dots, K - 1$}{
  \For{$s \in \mathcal{S}$}{
    $\pi_{k + 1}(a | s) \propto \pi_k(a | s) \exp(\eta h_k(s, a))$ normalized to sum to 1 across $a \in \mathcal{A}$.
  }
}
\KwReturn{Sequence $\pi_1, \dots, \pi_K$}
\caption{Natural policy gradient}
\label{alg:natural-policy}
\end{algorithm}

\paragraph{Offline Dataset}
We set behavior policy $\mu$ to be a mixture
 of $\pi_{\text{uniform}}$ and $\pi^\star$ where $\pi_{\text{uniform}}$ is the uniform policy that takes actions uniformly at random from the action space $\mathcal{A}$ at every state, and $\pi^\star$ is the optimal solution to the generated CMDP.
We exactly solve for the occupancy measure $d^\mu$ of the behavior policy $\mu$.
We repeatedly sample the $(s, a)$ pair from $d^\mu$ and then sample $s'$ according to $P(\cdot | s, a)$.

\paragraph{Hyperparameters}

The learning rate for the $\pi$-player is chosen using grid search in $\{1, 2, 5, 10\}$.
The bound $B$ for the $\lambda$-player is chosen by grid search in $\{2, 5, 10\}$.
The bound $C_\infty$ for $\mathcal{W}$ is chosen by a grid search in $\{2, 5, 10\}$.

\subsection{RWRL Benchmark Experiments} \label{appendix:rwrl}

\paragraph{Hyperparameters}

We do a grid search on $\{ 0.00005, 0.0001, 0.0003, 0.0005, 0.001 \}$ for determining the learning rate $\eta_{\text{fast}}$ for the critics and a grid search on $\{ 0.00005, 0.0001, 0.0002 \}$ for determining the learning rate $\eta_{\text{slow}}$ for the $\pi$-player.
The chosen learning rates are $\eta_{\text{fast}} = 0.0003$ and $\eta_{\text{slow}} = 0.0001$.
We use the batch size 1024.
We run $K = 30000$ iterations for Cartpole, Walker, Quadruped environments and $K=50000$ iterations for the Humanoid environment.
For the policy network and the networks for the critics, we use fully-connected neural networks with two hidden layers of width 256.

\subsection{Bullet Safety Gym Benchmark Experiments} \label{appendix:bullet}

In this section, we provide details of the experiments run on Bullet Safety Gym benchmark environments.

\paragraph{Offline Datasets}

We use the offline datasets provided by \textcite{liu2023datasets}.
They collect dataset for each environment by merging trajectories generated by algorithms trained with various cost thresholds and hyperparameters.
After merging, they run a post-processing of filtering redundant trajectories to ensure a diverse set of trajectories.
For details, refer to their paper.

\paragraph{Hyperparameters}

Following the setup used by \textcite{liu2023datasets}, we set the learning rate $\eta_{\text{fast}}$ for the critics to $0.001$ and the learning rate $\eta_{\text{slow}}$ for the $\pi$-player to $0.0001$.
We use the batch size 512.
We run $K = 100,000$ iterations.
For the policy network and the networks for the critics, we use fully-connected neural networks with two hidden layers of width 256.
We do a grid search on $\{2, 5, 10\}$ for the bound $B$ for the $\lambda$-player.
We do a grid search on $\{2, 5, 10\}$ for the bound $C_\infty$ for $\mathcal{W}$.

\end{document}